\def\eqref#1{equation~\ref{#1}}
\def\1{\bm{1}}
\def\rd{{\textnormal{d}}}
\def\ve{{\bm{e}}}
\def\vt{{\bm{t}}}
\def\vv{{\bm{v}}}
\def\vx{{\bm{x}}}
\def\vy{{\bm{y}}}
\def\vlambda{{\bm{\lambda}}}
\def\mA{{\bm{A}}}
\def\mB{{\bm{B}}}
\def\mC{{\bm{C}}}
\def\mF{{\bm{F}}}
\def\mI{{\bm{I}}}
\DeclareMathAlphabet{\mathsfit}{\encodingdefault}{\sfdefault}{m}{sl}
\SetMathAlphabet{\mathsfit}{bold}{\encodingdefault}{\sfdefault}{bx}{n}
\def\gL{{\mathcal{L}}}
\def\sR{{\mathbb{R}}}
\newcommand{\E}{\mathbb{E}}
\DeclareMathOperator*{\argmin}{arg\,min}
\newcommand{\vepsilon}{\boldsymbol{\varepsilon}}
\definecolor{mygreen}{RGB}{0, 150, 0}
\newcommand{\cmark}{\textcolor{mygreen}{\ding{51}}}%
\newcommand{\xmark}{\textcolor{red}{\ding{55}}}%
\theoremstyle{plain}
\theoremstyle{definition}
\theoremstyle{remark}
\newtheorem{proposition}{Proposition}
\newtheorem{theorem}{Theorem}
\newtheorem{remark}{Remark}
\newtheorem{corollary}{Corollary}
\newtheorem{lemma}{Lemma}
\icmltitlerunning{Efficiently Access Diffusion Fisher: Within the Outer Product Span Space} 
\begin{document}

\twocolumn[
\icmltitle{Efficiently Access Diffusion Fisher: Within the Outer Product Span Space} 



\icmlsetsymbol{equal}{*}
\icmlsetsymbol{corres}{$\dagger$}
\icmlsetsymbol{intern}{$\ddagger$}

\begin{icmlauthorlist}
\icmlauthor{Fangyikang Wang}{zju,equal,intern}
\icmlauthor{Hubery Yin}{wxg,equal}
\icmlauthor{Shaobin Zhuang}{sjtu,intern}
\icmlauthor{Huminhao Zhu}{zju}
\icmlauthor{Yinan Li}{zju}
\icmlauthor{Lei Qian}{zju}
\icmlauthor{Chao Zhang}{zju}
\icmlauthor{Hanbin Zhao}{zju}
\icmlauthor{Hui Qian}{zju}
\icmlauthor{Chen Li}{wxg}
\end{icmlauthorlist}

\icmlaffiliation{zju}{Zhejiang University.}
\icmlaffiliation{wxg}{WeChat Vision, Tencent Inc.}
\icmlaffiliation{sjtu}{Shanghai Jiao Tong University. 
$^*$Equal contribution.
$^\ddagger$Work done as interns at WeChat Vision, Tencent Inc}

\icmlcorrespondingauthor{Chao Zhang}{zczju@zju.edu.cn}

\icmlkeywords{Machine Learning, Generative Models, ICML} 

\vskip 0.3in]



\printAffiliationsAndNotice{}  

\begin{abstract}
Recent Diffusion models (DMs) advancements have explored incorporating the second-order diffusion Fisher information (DF), defined as the negative Hessian of log density, into various downstream tasks and theoretical analysis.
However, current practices typically approximate the diffusion Fisher by applying auto-differentiation to the learned score network. This black-box method, though straightforward, lacks any accuracy guarantee and is time-consuming.
In this paper, we show that the diffusion Fisher actually resides within a space spanned by the outer products of score and initial data.
Based on the outer-product structure, we develop two efficient approximation algorithms to access the trace and matrix-vector multiplication of DF, respectively.
These algorithms bypass the auto-differentiation operations with time-efficient vector-product calculations. 
Furthermore, we establish the approximation error bounds for the proposed algorithms.
Experiments in likelihood evaluation and adjoint optimization demonstrate the superior accuracy and reduced computational cost of our proposed algorithms.
Additionally, based on the novel outer-product formulation of DF, we design the first numerical verification experiment for the optimal transport property of the general PF-ODE deduced map.
\end{abstract}

\section{Introduction}
The emerging diffusion models (DMs) \cite{sohl2015deep, ho2020denoising,song2019generative, song2020score}, generating samples of data distribution from initial noise by learning a reverse diffusion process, have been proven to be an effective technique for modeling data distribution, especially in generating high-quality images \cite{pmlr-v162-nichol22a, NEURIPS2021_49ad23d1, NEURIPS2022_ec795aea, ramesh2022hierarchical, rombach2022high,JMLR:v23:21-0635}. The training process of DMs can be seen as employing a neural network to match the first-order diffusion score $\nabla_\vx \log q_t(\vx)$ at varying noise levels.

There has been a growing trend to recognize the importance of the diffusion Fisher (DF) in DMs, defined as the negative Hessian of the diffused distributions' log density, $-\nabla^2_\vx \log q_t(\vx)$.
The DF provides valuable second-order information on DMs and plays a crucial role in downstream tasks, e.g., likelihood evaluation \citep{pmlr-v162-lu22f, pmlr-v202-zheng23c}, adjoint optimization \citep{pan2023adjointdpm,pan2023towards,blasingame2024adjointdeis}, and the optimal transport analysis of DMs \cite{zhang2024formulating}. 
It is straightforward to access DFs by applying auto-differentiation to the learned first-order diffusion score. 
Nevertheless, the repeated auto-differentiation operations would be time-consuming for large-scale tasks, even when incorporating the Vector-Jacobian-product (VJP) and Hutchinson's estimation technique \cite{song2024fisher}. 
Recently, certain research efforts investigated the time-evolving property of DM and demonstrated that the DF can be explicitly expressed in terms of the score and its covariance matrix \cite{benton2020nearly,pmlr-v162-lu22f}.
However, these results predominantly concentrate on the theoretical aspects and fail to enable efficient access to the DF.

In this paper, we derive a novel form for the diffusion Fisher, which is composed of weighted outer-product sums.
This innovative formulation uncovers that the diffusion Fisher invariably lies within the span of a set of outer-product bases.
Notably, these bases solely depend on the initial data and the noise schedule.
Leveraging this outer-product structure, we devise two efficient approximation algorithms to access the trace and matrix-vector multiplication of DF, respectively.
Moreover, we design the first numerical verification experiment to investigate the optimal transport property of the diffusion-ODE deduced map under diverse types of initial data and noise schedules.
The main contributions of our paper are listed as follows:
\begin{itemize}
    \item 
    We derive an analytical formulation of the diffusion Fisher, which is the weighted summation of the outer product, under the setting when the initial distribution is a sum of Dirac. 
    The derivation process involves applying the consecutive partial differential chain rule to the data-dependent form of marginal distributions.
    Subsequently, we extend this formulation to a more general setting. In this general case, we only assume that the initial distribution has a finite second moment.
    These novel formulations reveal that the diffusion Fisher invariably lies within the span of a set of outer-product bases.

    \item 
    Based on the outer-product structure of DF, we propose two efficient approximation algorithms, each tailored to access the trace and matrix-vector multiplication of the DF, respectively.
    When it comes to evaluating the trace of the diffusion Fisher, we introduce a parameterized network to learn the trace, significantly reducing the time complexity of trace evaluation from quadratic to linear w.r.t. the dimension.
    In situations where we need to access the matrix-vector multiplication of the diffusion Fisher, we present a training-free method that simplifies the complex linear transformation into several simple vector-product calculations.
    Furthermore, we rigorously establish the approximation error bounds for these two algorithms.


    \item 
    Utilizing the novel outer-product formulation of DF, we derive a corollary for the optimal transport (OT) property of the diffusion-ODE deduced map. 
    Subsequently, we design the first numerical experiment to examine the OT property of any general diffusion ODE deduced map based on the corollary.  
    Our numerical experiments indicate that for all commonly employed noise schedules (VE \cite{ho2020denoising}, VP \cite{song2019generative}, sub-VP \cite{song2020score}, and EDM \cite{karras2022elucidating}), the diffusion ODE map exhibits the OT property when the initial data follows a single Gaussian distribution or has an affine form.
    Conversely, it does not exhibit this property when the initial data is non-affine.
    These numerical results not only validate the conclusions presented in \cite{khrulkov2023understanding, zhang2024formulating,lavenant2022flow}, but also open up new avenues for the exploration of more general cases.

\end{itemize}

We evaluate our DF access algorithms on likelihood evaluation and adjoint optimization tasks. The empirical results demonstrate our DF methods' enhanced accuracy and reduced time cost.

\section{Preliminaries}
\textbf{Notation}. The Euclidean norm over $\sR^{d}$ is denoted by $\norm{\cdot}$, and the Euclidean inner product is denoted by $\ip{\cdot}{\cdot}$.
Throughout, we simply write $\int g$ to denote the integral with respect to the Lebesgue measure: $\int g(x)\rd x$. 
When the integral is with respect to a different measure $\mu$, we explicitly write $\int g \rd \mu$. 
When clear from context, we sometimes abuse notation by identifying a measure $\mu$ with its Lebesgue density.
We also use $\delta(\cdot)$ to denote the Dirac Delta function.
For a vector $\vv\in\sR^d$, we denote the $d\times d$ matrix $\vv\vv^\top$ as the outer-product of $\vv$.

\subsection{Diffusion Models and Diffusion SDEs}
Suppose that we have a d-dimensional random variable $\vx_0 \in \sR^d$ following an unknown target distribution $q_0(\vx_0)$. Diffusion Models (DMs) define a forward process $\{\vx_t\}_{t\in[0,T]}$ with $T>0$ starting with $\vx_0$, such that the distribution of $\vx_t$ conditioned on $\vx_0$ satisfies

\begin{equation} \label{diffusion_kernel}
\begin{aligned}
q_{t\vert 0}(\vx_t \vert \vx_0) = \mathcal{N}(\vx_t; \alpha(t) \vx_0, \sigma^2(t)\mathbf{I}), 
\end{aligned}
\end{equation}
where $\alpha(\cdot),\sigma(\cdot)\in \mathcal{C}(\left[ 0,T\right],\mathbb{R}^{+})$ have bounded derivatives, and we denote them as $\alpha_t$ and $\sigma_t$ for simplicity. The choice for $\alpha_t$ and $\sigma_t$ is referred to as the noise schedule of a DM.
According to \cite{NEURIPS2021_b578f2a5,karras2022elucidating}, with some assumption on $\alpha(\cdot)$ and $\sigma(\cdot)$, the forward process can be modeled as a linear SDE which is also called the Ornstein–Uhlenbeck process:
\begin{equation}\label{diffusion_sde}
\begin{aligned}
  \rd \vx_t = f(t) \vx_t \rd t + g(t) \rd B_t,
\end{aligned}
\end{equation}
where $B_t$ is the standard d-dimensional Brownian Motion (BM), 
  $f(t) = \frac{\rd \log \alpha_t}{\rd t}$ and $g^2(t)=\frac{\rd \sigma_t^2}{\rd t}-2 \frac{\rd \log \alpha_t}{\rd t}\sigma_t^2$.
Under some regularity conditions, the above forward SDE \eqref{diffusion_sde} have a reverse SDE from time $T$ to $0$, which starts from $\vx_t$ \cite{anderson1982reverse}:
\begin{equation} \label{diffusion_sde_rev}
\begin{aligned}
  \rd \vx_t = \left[f(t) \vx_t - g^2(t) \nabla_{\vx_t} \log q(\vx_t, t)\right]\rd t + g(t) \rd \tilde{B}_t,
\end{aligned}
\end{equation}
where $\tilde{B}_t$ is the reverse-time Brownian motion and $q(\vx_t, t)$ is the single-time marginal distribution of the forward process. In practice, DMs \cite{ho2020denoising,song2020score} use $\vepsilon_\theta(\vx_t,t)$ to estimate $-\sigma(t)\nabla_{\vx_t} \log q(\vx_t, t)$ and the parameter $\theta$ is optimized by the following loss:
\begin{equation}\label{dpm_objective}
\small
\begin{aligned}
  \mathcal{L}=\mathbb{E}_t \left\{ \lambda_t \mathbb{E}_{x_0, x_t} \left[ \lVert s_\theta(x_t, t) - \nabla_{x_t} \log p(x_t, t | x_0, 0) \rVert^2 \right]\right\},
\end{aligned}
\end{equation}
where $s_\theta$ represents the parameterized score, i.e., $s_\theta(\vx_t,t) = -\frac{\vepsilon_\theta(\vx_t,t)}{\sigma_t}$.
This familiar parameterization is called $\epsilon$-prediction. There are also $\vy$-prediction and $\vv$-prediction \cite{salimans2022progressive}. The corresponding loss is equal to replace the term $|\epsilon - \vepsilon_\theta(\vx_t, t)|$ with $\frac{\alpha_t}{\sigma_t} |\vx_0 - \bar{\vy}_\theta(\vx_t, t)|$ and $  |\alpha_t \epsilon - \sigma_t \vx_0 - \vv_\theta(\vx_t, t)|$. The learned $\vepsilon_\theta(\vx_t, t)$ can be also transformed to a $\vy$-prediction form by $\bar{\vy}_\theta(\vx_t, t) = \frac{\vx_t - \sigma_t \vepsilon_\theta(\vx_t, t)}{\alpha_t}$.

\subsection{Diffusion Models Inference as Neural ODE}
It is noted that the reverse diffusion SDE in \eqref{diffusion_sde_rev} has an associated probability flow ODE (PF-ODE, also called diffusion ODE), which is a deterministic process that shares the same single-time marginal distribution \cite{song2020score}:
\begin{equation}\label{PFODE}
\begin{aligned}
\rd \vx_t = \left[ f(t)\vx_t - \frac{1}{2}g^2(t) \nabla_{\vx_t} \log q_t(\vx_t, t) \right]\rd t.
\end{aligned}
\end{equation}
By replacing the score function in \eqref{PFODE} with the noise predictor $\vepsilon_\theta$, the inference process of DMs can be constructed by the following neural ODE:
\begin{equation}\label{dm_neural_ode}
\begin{aligned}
  \frac{\mathrm{d} \boldsymbol{x}_t}{\mathrm{~d} t}=\boldsymbol{h}_\theta\left(\boldsymbol{x}_t, t\right):=f(t) \boldsymbol{x}_t+\frac{g^2(t)}{2 \sigma_t} \boldsymbol{\epsilon}_\theta\left(\boldsymbol{x}_t, t\right), \quad 
\end{aligned}
\end{equation}
where initial $\boldsymbol{x}_T \sim \mathcal{N}\left(\mathbf{0}, \sigma_T^2 \boldsymbol{I}\right)$.

\subsection{Fisher Information in Diffusion Models}
The diffusion Fisher information matrix in DMs is defined as the negative Hessian of the marginal diffused log-density function, which takes the following matrix-valued form \cite{song2021maximum,song2024fisher}:
\begin{equation} \label{fisher_def}
\begin{aligned}
    \mF_t(\vx_t,t) &:= - \frac{\partial^2}{\partial \vx_t^2} \log q_t\left(\vx_t, t\right)\\
\end{aligned}
\end{equation}

The current technique typically approximately accesses the diffusion Fisher by accessing the scaled Jacobian matrix of the learned score estimator network $\vepsilon_\theta$:
\begin{equation} \label{}
\begin{aligned}
    \mF_t(\vx_t,t) &= - \frac{\partial}{\partial \vx_t}\left(\frac{\partial}{\partial \vx_t} \log p\left(x_t, t\right) \right)\\
    &\approx - \frac{\partial}{\partial \vx_t}\left(-\frac{\vepsilon_\theta(\vx_t,t)}{\sigma_t} \right) = \frac{1}{\sigma_t}\frac{\partial \vepsilon_\theta(\vx_t,t)}{\partial\vx_t}
\end{aligned}
\end{equation}
The full diffusion Fisher matrix within DMs cannot be obtained due to dimensional constraints.
For instance, the Stable Diffusion-1.5 model \citep{rombach2022high} features a latent dimension of $d = 4\times64\times64 = 16384$, resulting in a diffusion Fisher matrix of $16384 \times 16384$. 
Fortunately, for applications that only need to access the trace or multiplication of diffusion Fisher, it is feasible to use the Vector-Jacobian-product (VJP) to access diffusion Fisher. 
 For any $d$-dimensional vector $\vv$, the approximation of $\vv$ left multiplied by $\mF_t(\vx_t,t)$ using VJP is as follows:
\begin{equation} \label{VJP}
\begin{aligned}
(\text{VJP})~~~~~~~~~~
\vv^\top\mF_t(\vx_t,t) &\approx \frac{1}{\sigma_t} \vv^\top\frac{\partial \vepsilon_\theta(\vx_t,t)}{\partial\vx_t} \\
&= \frac{1}{\sigma_t}\frac{\partial \left[\ip{\vepsilon_\theta(\vx_t,t)}{\vv} \right]}{\partial\vx_t}
\end{aligned}
\end{equation}

The VJP is a time-consuming process due to its requirement for gradient calculations within the neural network. In addition, empirical evidence from synthetic distributions, as demonstrated in \citet{pmlr-v162-lu22f}, shows that the approximation results from the VJP significantly deviate from the true underlying diffusion Fisher. To our knowledge, there is no theoretical guarantee that the diffusion Fisher can be accurately accessed through the VJP. Moreover, the VJP fails to provide any theoretical insight into the diffusion Fisher.


\begin{table}[t]
    \centering
\resizebox{0.5\textwidth}{!}{%
    \begin{tabular}{c||c|ccc} 
\Xhline{1pt} 
\multirow{2}{*}{Access} & \multirow{2}{*}{\makecell{Methods}} & \makecell{Theoretical\\ Time-cost} & \makecell{Practical\\ Time-cost (s)} & \makecell{Theoretical\\ Error Bound} \\ 
\Xhline{1pt} 
\multirow{2}{*}{Trace} &VJP (eq. \ref{nll_VJP}) & $c_1d+c_2d^2$ & 2195.48& \xmark \\ 
&DF-TM \textbf{(Ours)} & $2c_1d$ & \textbf{0.072}\textcolor{red}{\textsubscript{\text{$99\% \downarrow$}}} & \cmark (Prop. \ref{prop:error_bound_trace}) \\ 
\Xhline{0.5pt} 
\multirow{2}{*}{Operator} &VJP (eq. \ref{Adjoint_VJP}) & $c_1d+c_2d$ & 0.155& \xmark \\ 
 &DF-EA \textbf{(Ours)}& $2c_1d$ & \textbf{0.063}\textcolor{red}{\textsubscript{\text{$59\% \downarrow$}}}& \cmark (Prop. \ref{prop:error_bound_ea})\\ 
\Xhline{1pt} 
\end{tabular}
}
    \caption{\small Comparison of Vector-Jacobian-product (VJP) and our proposed efficient access methods in terms of per-iteration theoretical time-cost, practical time-cost, and approximation error bound. The theoretical time cost is based on assumptions of a network access cost of $c_1d$ time and backpropagation on network cost of $c_2d$ time ($c_2\approx4c_1$). The practical time-cost is tested using the SD-v1.5 model over 10k COCO prompts. Here, our VJP baseline calculates every element of the trace, and discussion on its approximation is deferred to Appendix \ref{app:hutchinson}.
    }
    \vspace{-3mm}
    \label{tab:my_label}
\end{table}

\section{Diffusion Fisher in Outer-Products Span}\label{sec:analytical}
Accessing diffusion Fisher via the VJP as shown in \eqref{VJP} is straightforward, but it does not take advantage of any inherent structure of the diffusion Fisher. 
Though certain recent research efforts \cite{pmlr-v162-lu22f,benton2020nearly} demonstrated that the DF can be explicitly expressed in terms of the score and its covariance matrix. Nevertheless, their formulations cannot directly facilitate efficient access to the diffusion Fisher.
In this section, we derive a novel form for the diffusion Fisher, which is composed of weighted outer-product sums.
We first conduct this under a simplified scenario, assuming that the initial distribution $q_0$ is a sum of Dirac distributions. Subsequently, we generalize this outer-product span form to a more general setting. 
Significantly, the outer-product span form of the diffusion Fisher obtained in both settings does not require any gradient calculations and depends solely on the initial data and the noise schedule.
Notably, there exist fast methods for accessing the trace and matrix-vector multiplication of an outer-product matrix.
This advantageous characteristic allows for the development of novel algorithms for efficiently accessing the diffusion Fisher.
\begin{figure}[t!]
\small
    \centering
    \begin{minipage}{0.24\textwidth}
        \centering
        \includegraphics[width=1\textwidth]{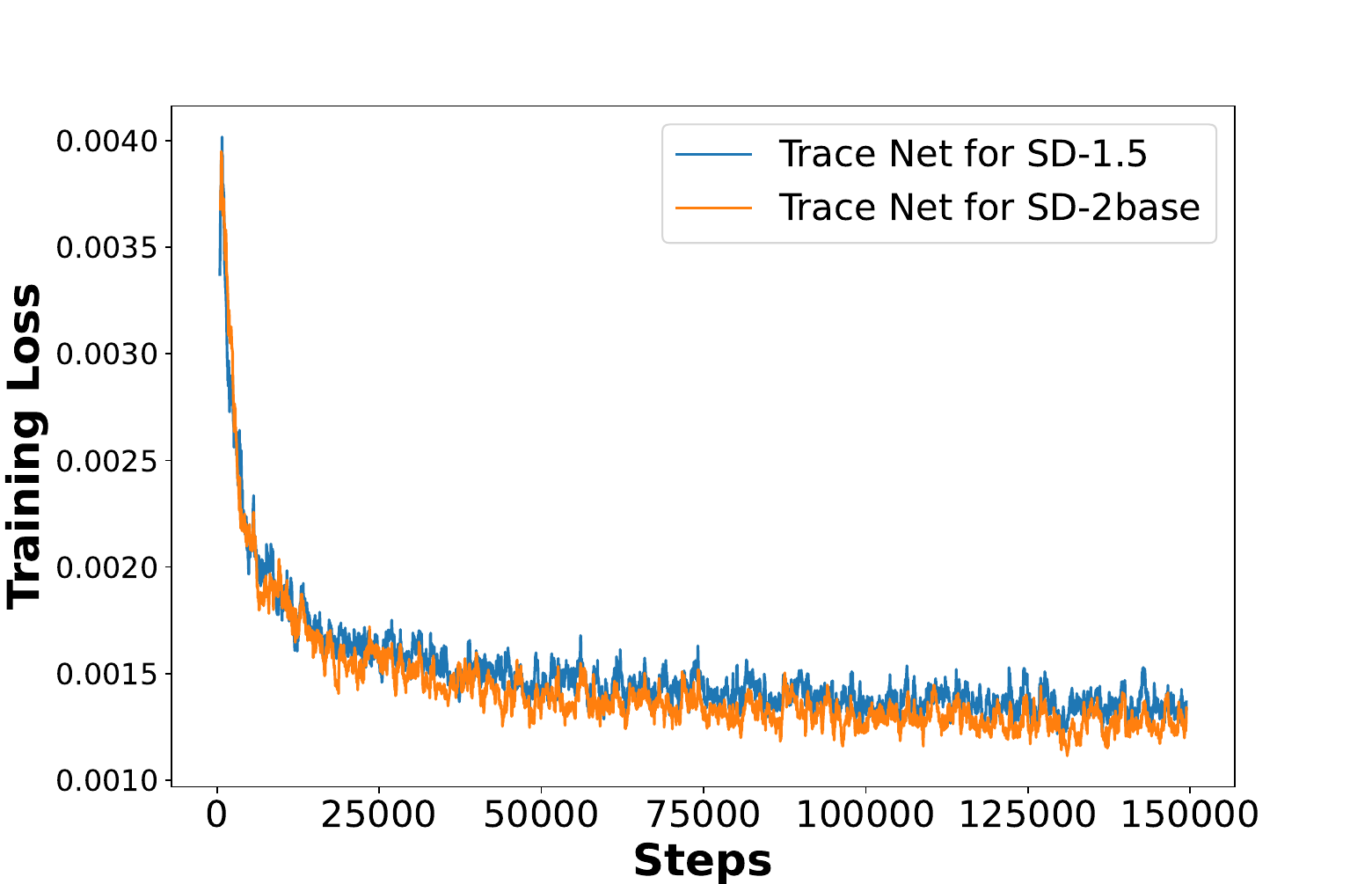}
        \subcaption{\scriptsize }\label{fig:train_loss}
    \end{minipage}\hfill
    \begin{minipage}{0.24\textwidth}
        \centering
        \includegraphics[width=1\textwidth]{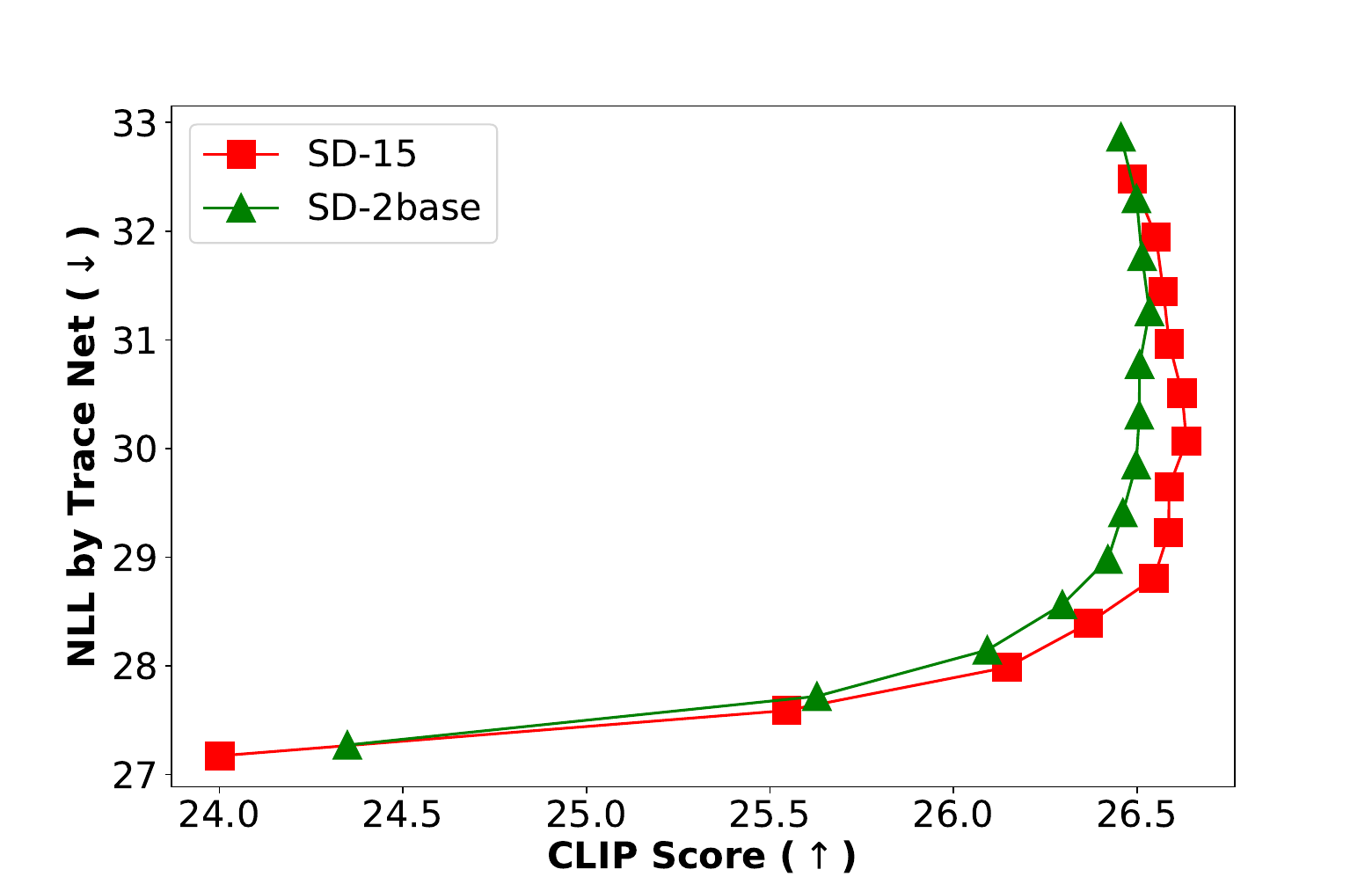}
        \subcaption{\scriptsize }\label{fig:tradeoff}
    \end{minipage}
    \vspace{-3mm}
    \caption{(a) {\small The training loss of DF-TM for SD-1.5 and SD-2base. It demonstrates commendable convergence behavior.} (b) {\small The trade-off curve of NLL and Clip score of SD-1.5 and SD-2base across various guidance scales in [1.5, 2.5, ..., 12.5, 13.5]}}
    \label{fig:xx}
\end{figure}
We start with a simple setting where we assume that the initial distribution is characterized as a sum of Dirac distributions composed of the set of samples in the dataset. If we suppose the dataset is denoted as $\{ \vy_i \}_{i=0}^N$, then the initial distribution follows
\begin{equation} \label{dirac_initial_dist}
\begin{aligned}
(\text{Dirac Setting})~~~~~~~~~~q(\vx, t)|_{t=0} = \frac{1}{N} \sum_{i=0}^{N} \delta(\vx - \vy_i),
\end{aligned}
\end{equation}
where exists a $0<\mathcal{D}_y<\infty$ such that $\norm{\vy_i}\leq\mathcal{D}_y$ holds true for every $i$.
In this Dirac setting, we derive the following formulation of diffusion Fisher, which is a weighted outer-product sum devoid of gradients and composed solely of the initial distribution and the noise schedule.
\begin{tcolorbox}[colback=gray!10, colframe=white, sharp corners, boxrule=0pt] 
\begin{proposition}\label{prop:analytic_dirac}
Defines $v_i(\vx_t, t)$ as $\exp(-\frac{|\vx_t - \alpha_t \vy_i|^2}{2\sigma_t^2})\in \sR$ and $w_i(\vx_t, t)$ as $\frac{v_i(\vx_t, t)}{\sum_j v_j(\vx_t, t)} \in \sR$. If $q_0$ takes the form as in equation \eqref{dirac_initial_dist}, the diffusion Fisher matrix of the diffused distribution $q_t$ for $t\in(0,1]$ can be analytically formulated as follows:
    \begin{equation} \label{information_dirac}
    \small
    \begin{aligned}
      \mF_t(\vx_t,t) =& \frac{1}{\sigma_t^2}\mI - \frac{\alpha_t^2}{\sigma_t^4}\left[ \sum_i w_i \vy_i\vy_i^{\top} \right. \\ & \left.- \left(\sum_i w_i \vy_i\right)\left(\sum_i w_i \vy_i\right)^{\top} \right]
    \end{aligned}
    \end{equation}
    where we have simplified $w_i(\vx_t, t)$ to $w_i$, as it does not lead to any confusion.
\end{proposition}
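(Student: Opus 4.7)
The plan is to derive the formula by direct differentiation of the log marginal density. Under the Dirac setting, the marginal at time $t$ is a convolution of $q_0$ with the Gaussian kernel \eqref{diffusion_kernel}, so
\begin{equation*}
q_t(\vx_t) = \frac{1}{N(2\pi\sigma_t^2)^{d/2}} \sum_i v_i(\vx_t,t),
\end{equation*}
with $v_i$ as defined in the proposition. Taking logs turns the normalizing constant into an additive term independent of $\vx_t$, so only $\log \sum_i v_i$ contributes to the score and Hessian.

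First I would compute the score. Since $\nabla_{\vx_t} v_i = -\sigma_t^{-2}(\vx_t - \alpha_t \vy_i)\, v_i$, the standard log-sum-exp gradient identity gives
\begin{equation*}
\nabla_{\vx_t}\log q_t(\vx_t) = \frac{\sum_i \nabla v_i}{\sum_j v_j} = -\frac{1}{\sigma_t^2}\Bigl(\vx_t - \alpha_t \sum_i w_i\, \vy_i\Bigr).
\end{equation*}
This expression has a clean interpretation: the score is the Tweedie-style posterior mean shift, and the only $\vx_t$-dependence beyond the linear term sits inside the weights $w_i$.

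Next I would differentiate once more to obtain the Hessian. The $\vx_t$-independent identity term contributes $-\sigma_t^{-2}\mI$, while the weighted sum contributes $\alpha_t \sigma_t^{-2} \sum_i (\nabla_{\vx_t} w_i)\, \vy_i^\top$. The key computation is the gradient of a softmax-like weight: applying the quotient rule and substituting $\nabla v_j$ gives
\begin{equation*}
\nabla_{\vx_t} w_i = \frac{\alpha_t\, w_i}{\sigma_t^2}\Bigl(\vy_i - \sum_j w_j\, \vy_j\Bigr),
\end{equation*}
where the $\vx_t$ terms cancel pairwise between $\nabla v_i/\sum_j v_j$ and $w_i \sum_j \nabla v_j/\sum_j v_j$. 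Assembling $\sum_i (\nabla_{\vx_t} w_i)\,\vy_i^\top$ then yields the centered outer-product $\sum_i w_i \vy_i\vy_i^\top - (\sum_i w_i \vy_i)(\sum_i w_i \vy_i)^\top$, which is exactly the bracketed expression in \eqref{information_dirac}. Negating the Hessian to get $\mF_t$ produces the claimed formula.

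The main obstacle will be the bookkeeping in the softmax gradient step: one must be careful to cancel the $\vx_t$ terms correctly so that the final expression depends on $\vx_t$ only through the weights $w_i$, and to verify that the cross-term from the quotient rule gives rise to precisely the weighted mean $\sum_j w_j \vy_j$ that produces the rank-one correction. Beyond this, the calculation is a direct application of the log-sum-exp and quotient rules.
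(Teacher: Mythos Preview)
Your proposal is correct and follows essentially the same route as the paper: write $q_t$ as a Gaussian mixture, compute the score via the log-sum-exp identity using $\nabla v_i = -\sigma_t^{-2}(\vx_t-\alpha_t\vy_i)v_i$, and then differentiate once more via the quotient rule on the softmax weights to obtain the centered outer-product. The only cosmetic difference is that the paper packages the second differentiation as a lemma computing the Jacobian of $\sum_i w_i \vy_i$ in one shot, whereas you compute $\nabla w_i$ individually and then assemble; the algebra and the key cancellation of the $\vx_t$ terms are identical.
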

\end{tcolorbox}

We also find that the $\sum_i w_i \vy_i$ component in \eqref{information_dirac} can be effectively approximated by the trained score network in the form of $y$-prediction, as demonstrated in the following proposition.
\begin{tcolorbox}[colback=gray!10, colframe=white, sharp corners, boxrule=0pt] 
\begin{proposition}\label{prop:optimal_y}
Given the diffusion training loss in \eqref{dpm_objective}, and if $q_0$ conforms to the form presented in \eqref{dirac_initial_dist}, then the optimal $\bar{\vy}_\theta(\vx_t, t)$ can accurately estimate $\sum_i w_i \vy_i$.
\end{proposition}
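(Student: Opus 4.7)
The plan is to rewrite the training objective in its $\vy$-prediction form and then identify the Bayes-optimal denoiser explicitly under the Dirac prior.

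First, I would recall from the discussion around \eqref{dpm_objective} that the $\epsilon$-prediction loss is equivalent, up to a time-dependent positive weighting, to the $\vy$-prediction loss
\begin{equation*}
\mathcal{L}_{\vy} = \mathbb{E}_t\,\lambda'_t\,\mathbb{E}_{\vx_0,\vx_t}\bigl[\,\lVert \vx_0 - \bar{\vy}_\theta(\vx_t,t)\rVert^2\,\bigr],
\end{equation*}
where the expectation is over $\vx_0 \sim q_0$ and $\vx_t \sim q_{t\mid 0}(\cdot\mid \vx_0)$. Since the loss is a pointwise conditional mean-squared-error in $(\vx_t,t)$, the pointwise minimizer is the classical regression function
\begin{equation*}
\bar{\vy}_\theta^\star(\vx_t,t) = \mathbb{E}_{q_{0\mid t}}\!\left[\vx_0\,\middle|\,\vx_t\right].
\end{equation*}
This is the standard Tweedie-type identity and does not depend on the specific form of $q_0$.

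Next, I would compute this posterior mean explicitly under the Dirac prior \eqref{dirac_initial_dist}. By Bayes' rule,
\begin{equation*}
q_{0\mid t}(\vx_0\mid\vx_t) = \frac{q_{t\mid 0}(\vx_t\mid\vx_0)\,q_0(\vx_0)}{q_t(\vx_t)},
\end{equation*}
and substituting $q_0 = \tfrac{1}{N}\sum_i \delta(\cdot - \vy_i)$ together with the Gaussian kernel \eqref{diffusion_kernel} gives, after integrating against $\vx_0$,
\begin{equation*}
\mathbb{E}[\vx_0\mid\vx_t] = \frac{\sum_i \vy_i\,\exp\!\bigl(-\tfrac{\lVert\vx_t-\alpha_t\vy_i\rVert^2}{2\sigma_t^2}\bigr)}{\sum_j \exp\!\bigl(-\tfrac{\lVert\vx_t-\alpha_t\vy_j\rVert^2}{2\sigma_t^2}\bigr)} = \frac{\sum_i \vy_i\,v_i(\vx_t,t)}{\sum_j v_j(\vx_t,t)} = \sum_i w_i(\vx_t,t)\,\vy_i,
\end{equation*}
where the last equality is just the definition of $w_i$ used in Proposition~\ref{prop:analytic_dirac}. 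This matches the claimed expression and completes the argument.

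Essentially no step is a real obstacle here; the only subtlety is being careful about the equivalence between the $\epsilon$-prediction and $\vy$-prediction losses (so that ``optimal'' means the same thing in both parameterizations), and being rigorous about the Bayes-rule computation when $q_0$ is a sum of Dirac measures. The latter can be handled either by treating $q_0$ as a limit of narrow Gaussians or, more directly, by working with $q_t$ as the explicit Gaussian mixture $q_t(\vx_t) = \tfrac{1}{N}\sum_i \mathcal{N}(\vx_t;\alpha_t\vy_i,\sigma_t^2\mathbf{I})$ and evaluating the posterior mean of the discrete latent indicator, which yields exactly the weights $w_i$.
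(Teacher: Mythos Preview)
Your proposal is correct and essentially coincides with the paper's proof. The paper writes the $\vy$-prediction loss at a fixed $(\vx_t,t)$ as the finite sum $\sum_j A_t\,v_j(\vx_t,t)\,\lambda_t\,\tfrac{\alpha_t^2}{\sigma_t^2}\,\lVert \bar{\vy}_\theta(\vx_t,t)-\vy_j\rVert^2$, differentiates in $\bar{\vy}_\theta$, and solves the first-order condition to get $\bar{\vy}_\theta^\star=\sum_i w_i\vy_i$; your route via ``MSE minimizer $=$ posterior mean'' followed by Bayes' rule under the Dirac prior is the same computation phrased through the conditional-expectation identity rather than direct differentiation.
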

\end{tcolorbox}

\paragraph{The General Setting}
We further extend the diffusion Fisher formulation in \eqref{information_dirac} to a more general setting, where we only assume that the initial distribution $q_0$ is a probability measure on $\sR^d$ with finite second moments.
\begin{equation}  \label{general_initial_dist}
\begin{aligned}
(\text{General Setting})~~~~~~~~~~ q_0\in \mathcal{P}_2(\sR^d),
\end{aligned}
\end{equation}
which means that $q_0\in \mathcal{P}(\mathbb{R}^d),\quad \int_{\mathbb{R}^d} ||x||^{2} q_0(x)\rd x<\infty$.
In this general setting, we derive the following form of diffusion Fisher, which is a weighted outer-product integral, which resides in a space spanned by an infinite set of outer products.
\begin{tcolorbox}[colback=gray!10, colframe=white, sharp corners, boxrule=0pt]  \small
\begin{proposition}\label{prop:analytic_general}
Let us define $v(\vx_t, t, \vy)$ as $\exp(-\frac{|\vx_t - \alpha_t \vy|^2}{2\sigma_t^2})\in \sR$ and $w(\vx_t, t, \vy)$ as $\frac{v(\vx_t, t, \vy)}{\int_{\sR^d}  v(\vx_t, t, \vy)\rd q_0(\vy)} \in \sR$. If $q_0$ takes the form as in \eqref{general_initial_dist}, the diffusion Fisher matrix of the diffused distribution $q_t$ for $t\in(0,1]$ can be analytically formulated as follows:
    \begin{equation} \label{information_general}
    \small
    \begin{aligned}
        \mF_t(\vx_t,t)\! =& \frac{1}{\sigma_t^2}\mI -
        \frac{\alpha_t^2}{\sigma_t^4}\left[ \int w(\vy) \vy\vy^{\top}\rd q_0 \right. \\ & \left. \!- \!\left(\int w(\vy)  \vy \rd q_0\right)\left(\int w(\vy)  \vy \rd q_0\right)^{\top} \right]
    \end{aligned}
    \end{equation}
    where we simply write $w(\vx_t, t, \vy)$ as $w(\vy)$, as long as it does not lead to any confusion.
\end{proposition}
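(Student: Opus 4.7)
The plan is to compute the gradient and Hessian of $\log q_t$ directly from the convolution representation of $q_t$. Since $q_t(\vx_t) = \int q_{t\vert 0}(\vx_t\vert \vy) \, dq_0(\vy)$ with the Gaussian transition kernel from \eqref{diffusion_kernel}, we can write
\begin{equation*}
q_t(\vx_t) = (2\pi\sigma_t^2)^{-d/2} Z(\vx_t), \qquad Z(\vx_t) := \int v(\vx_t, t, \vy)\, dq_0(\vy),
\end{equation*}
so that $\log q_t(\vx_t) = -\tfrac{d}{2}\log(2\pi\sigma_t^2) + \log Z(\vx_t)$ and the Hessian depends only on $Z$.

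First I would compute the score. From $\nabla_{\vx_t} v = -\sigma_t^{-2}(\vx_t - \alpha_t \vy)\, v$, differentiating under the integral sign yields
\begin{equation*}
\nabla \log q_t(\vx_t) = \frac{\nabla Z}{Z} = -\frac{\vx_t}{\sigma_t^2} + \frac{\alpha_t}{\sigma_t^2} \int w(\vy)\, \vy\, dq_0(\vy),
\end{equation*}
since $w(\vy) = v(\vx_t,t,\vy)/Z(\vx_t)$. Next I would differentiate this once more. The first term contributes $-\sigma_t^{-2}\mI$. For the second, writing $\bar{\vy} := \int w(\vy')\vy'\, dq_0(\vy')$ and using $\nabla_{\vx_t} w(\vy) = w(\vy)\bigl(\nabla\log v - \nabla\log Z\bigr)$, a short calculation gives $\nabla_{\vx_t} w(\vy) = w(\vy)\tfrac{\alpha_t}{\sigma_t^2}(\vy - \bar{\vy})$. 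Substituting,
\begin{equation*}
\nabla_{\vx_t}\!\left(\frac{\alpha_t}{\sigma_t^2}\!\int\! w(\vy)\vy\, dq_0\right) \!=\! \frac{\alpha_t^2}{\sigma_t^4}\!\int\! w(\vy)\, \vy(\vy-\bar{\vy})^{\!\top}\, dq_0 \!=\! \frac{\alpha_t^2}{\sigma_t^4}\!\left[\int\! w\,\vy\vy^{\!\top}\, dq_0 - \bar{\vy}\bar{\vy}^{\!\top}\right].
\end{equation*}
Assembling the two pieces and multiplying by $-1$ (per definition \eqref{fisher_def}) yields exactly \eqref{information_general}. As a sanity check, setting $q_0 = \tfrac{1}{N}\sum_i \delta_{\vy_i}$ reduces integrals against $dq_0$ to sums over $i$ and recovers Proposition \ref{prop:analytic_dirac}.

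The main obstacle is rigorously justifying the two interchanges of differentiation and integration under the mild assumption $q_0 \in \mathcal{P}_2(\sR^d)$. For every fixed $t \in (0,1]$ and every compact neighborhood of $\vx_t$, the kernel $v(\cdot, t, \vy)$ is smooth in $\vx_t$, and its first and second derivatives are of the form $P(\vx_t,\vy)\, v(\vx_t,t,\vy)$ with $P$ polynomial in $\vy$. The Gaussian factor provides super-polynomial decay in $\vy$, so these derivatives are uniformly bounded by a $q_0$-integrable dominating function on the neighborhood (this is where the $\mathcal{P}_2$ assumption comfortably suffices, guaranteeing the moments $\int w\vy\, dq_0$ and $\int w\vy\vy^\top dq_0$ are finite). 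A standard dominated convergence argument then legitimizes moving $\nabla_{\vx_t}$ inside both integrals, completing the proof.
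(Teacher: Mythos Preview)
Your proof is correct and follows essentially the same approach as the paper: write $q_t$ as the Gaussian convolution of $q_0$, compute the score by differentiating $\log Z$, and then differentiate once more to obtain the Hessian, invoking Leibniz's rule (dominated convergence) to justify the two interchanges. The only cosmetic difference is that you compute $\nabla_{\vx_t} w(\vy)$ via the logarithmic-derivative identity $\nabla w = w(\nabla\log v - \nabla\log Z)$, whereas the paper expands the quotient $v/\!\int v$ directly; both routes lead to the same covariance-type expression.
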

\end{tcolorbox}
We further ascertain that the $\int w(\vy)  \vy \rd q_0(\vy)$ component in \eqref{information_general} can be effectively approximated by the score network in the form of $y$-prediction, as demonstrated in the following proposition.
\begin{tcolorbox}[colback=gray!10, colframe=white, sharp corners, boxrule=0pt]  \small
\begin{proposition}\label{prop:optimal_y_general}
Given the diffusion loss in \eqref{dpm_objective}, and if $q_0$ conforms to the form in \eqref{general_initial_dist}, then the optimal $\bar{\vy}_\theta(\vx_t, t)$ can accurately estimate $\int w(\vy)  \vy \rd q_0(\vy)$.
\end{proposition}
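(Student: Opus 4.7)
The plan is to show that the optimizer of the diffusion training loss, expressed in $\vy$-prediction form, is the posterior mean $\mathbb{E}[\vx_0 \mid \vx_t]$, and then to identify this posterior mean with the integral $\int w(\vy)\vy \rd q_0(\vy)$ via Bayes' rule written in Radon--Nikodym form.

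First I would observe that, as already recalled in the preliminaries, the $\vepsilon$-prediction loss \eqref{dpm_objective} is equivalent up to a positive $t$-dependent weight to the $\vy$-prediction loss $\mathbb{E}_t \tilde\lambda_t \mathbb{E}_{\vx_0,\vx_t}\!\left[\|\vx_0-\bar{\vy}_\theta(\vx_t,t)\|^2\right]$. Since this is a pointwise-in-$(t,\vx_t)$ weighted mean-squared error and the function class is unconstrained across distinct inputs, its minimizer in the class of measurable functions is the conditional expectation $\bar{\vy}_\theta^{\star}(\vx_t,t)=\mathbb{E}[\vx_0\mid\vx_t]$. This is the standard ``denoiser equals posterior mean'' fact and does not require any structural assumption on $q_0$ beyond integrability of $\vx_0$, which is ensured by $q_0\in\mathcal{P}_2(\sR^d)$.

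Next I would compute this posterior mean explicitly. Under the forward kernel \eqref{diffusion_kernel}, the joint law of $(\vx_0,\vx_t)$ is $q_{t\mid 0}(\vx_t\mid\vy)\rd q_0(\vy)$ with $q_{t\mid 0}(\vx_t\mid\vy)=(2\pi\sigma_t^2)^{-d/2}\exp(-\|\vx_t-\alpha_t\vy\|^2/(2\sigma_t^2))$, which is proportional to $v(\vx_t,t,\vy)$ with a proportionality factor depending only on $(t,\vx_t)$. Bayes' rule, applied in Radon--Nikodym form so as to cover the case where $q_0$ need not possess a Lebesgue density, then yields
\[
    \frac{\rd q_{0\mid t}(\cdot\mid\vx_t)}{\rd q_0}(\vy)\;=\;\frac{v(\vx_t,t,\vy)}{\int v(\vx_t,t,\vy')\rd q_0(\vy')}\;=\;w(\vx_t,t,\vy).
\]
Integrating $\vy$ against this posterior gives $\mathbb{E}[\vx_0\mid\vx_t]=\int\vy\, w(\vx_t,t,\vy)\rd q_0(\vy)$, which matches the expression appearing in \eqref{information_general}.

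The main obstacle is purely technical: one must justify the disintegration above in the measure-theoretic setting. This is routine once one notes that $v(\vx_t,t,\cdot)$ is continuous, strictly positive, and bounded above by $1$, so the normalizing integral lies in $(0,1]$ and defines a bona fide probability kernel $q_{0\mid t}$; the finite second-moment assumption $q_0\in\mathcal{P}_2(\sR^d)$ then ensures the posterior mean exists and is finite for every $(\vx_t,t)$. The remainder is a direct specialization of the argument for Proposition \ref{prop:optimal_y} with the empirical measure $\tfrac1N\sum_i\delta_{\vy_i}$ replaced by the general base measure $q_0$.
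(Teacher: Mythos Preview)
Your proposal is correct and follows essentially the same approach as the paper: both minimize the $\vy$-prediction loss pointwise in $(\vx_t,t)$ to obtain the posterior mean, which is then identified with $\int w(\vy)\vy\,\rd q_0(\vy)$. The paper carries this out by directly differentiating the quadratic integral and solving the first-order condition, whereas you invoke the conditional-expectation characterization of the MSE minimizer and then apply Bayes' rule in Radon--Nikodym form; these are equivalent presentations of the same computation, with your version being slightly more explicit about the measure-theoretic justification when $q_0$ lacks a Lebesgue density.
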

\end{tcolorbox}
The derivation of the outer-product form of diffusion Fisher under the general setting is akin to the Dirac case, but in an integral form.
For the remainder of the paper, we will focus on developing our method based on the Dirac setting diffusion Fisher. However, the same results can be naturally extended to the general setting.


\begin{figure*}[!t]
\centering
\includegraphics[width=0.90\textwidth]{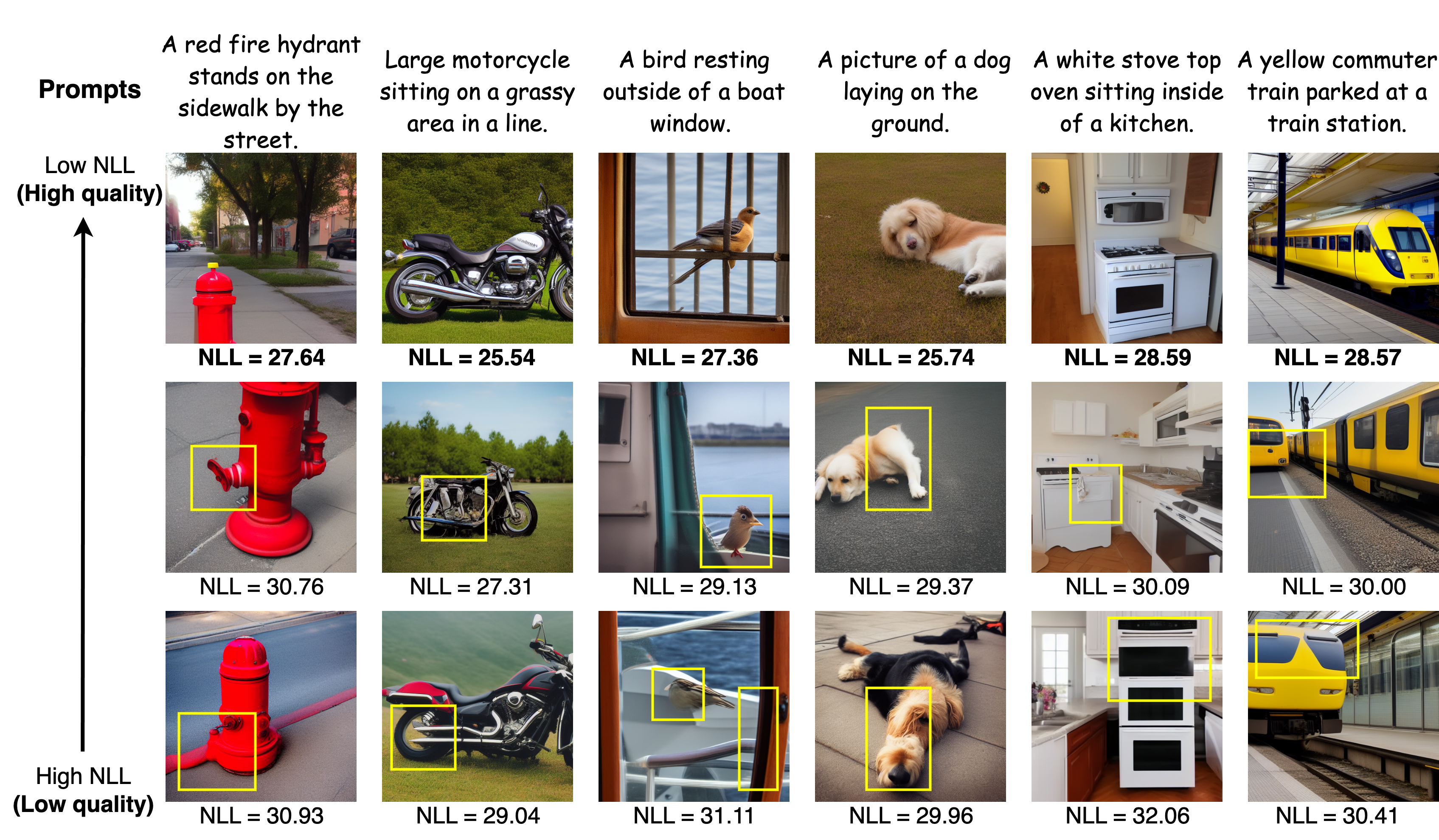} 
\caption{\small Our DF-TM method facilitates the effective evaluation of the NLL of generated samples with varying seeds. It can be demonstrated that a lower NLL signifies a region of higher possibility, thereby consistently indicating superior image quality.
}
\label{fig:nll_compare}
\end{figure*}

\section{Diffusion Fisher Trace Matching}\label{sec:tm}
The likelihood evaluation of DMs would require access to diffusion Fisher's trace. In this section, we introduce a network to learn the trace, thus facilitating effective likelihood evaluation in DMs.
\paragraph{Log-Likelihood in DMs}
Log-likelihood is a classic and significant metric for probabilistic generative models, extensively utilized for comparison between samples or models \cite{bengio2013bounding,theis2015note}. 
According to \citet{chen2018neural,song2021maximum}, the log-likelihood of samples generated by PF-ODE in \eqref{dm_neural_ode} from DMs can be computed through a connection to continuous normalizing flows as follows: 
\begin{equation} \label{nll_ode}
\small
\begin{aligned}
        \frac{\partial \log q_t (\vx_t,t)}{\partial t} &\!=\! -\text{tr}\left( \frac{\partial }{\partial  \vx_t} \left(f(t)\vx_t - \frac{1}{2}g^2(t) \partial_{\vx_t} \log q_t(\vx_t, t)\right)\right) \\
        &\!=\! -\text{tr}\left( \left(f(t)\mI - \frac{1}{2}g^2(t) \frac{\partial^2}{\partial {\vx_t}^2} \log q_t(\vx_t, t)\right)\right)\\
        &\!=\! -f(t)d-\frac{g^2(t)}{2 }\text{tr}\left(\mF_t(\vx_t, t)\right)
\end{aligned}
\end{equation}
where $\text{tr}(\cdot)$ denotes the trace of a matrix, which is defined to be the sum of elements on the diagonal.

\begin{figure*}[t!]
\small
    \centering
    \begin{minipage}{0.2\textwidth}
        \centering
        \includegraphics[width=0.98\textwidth]{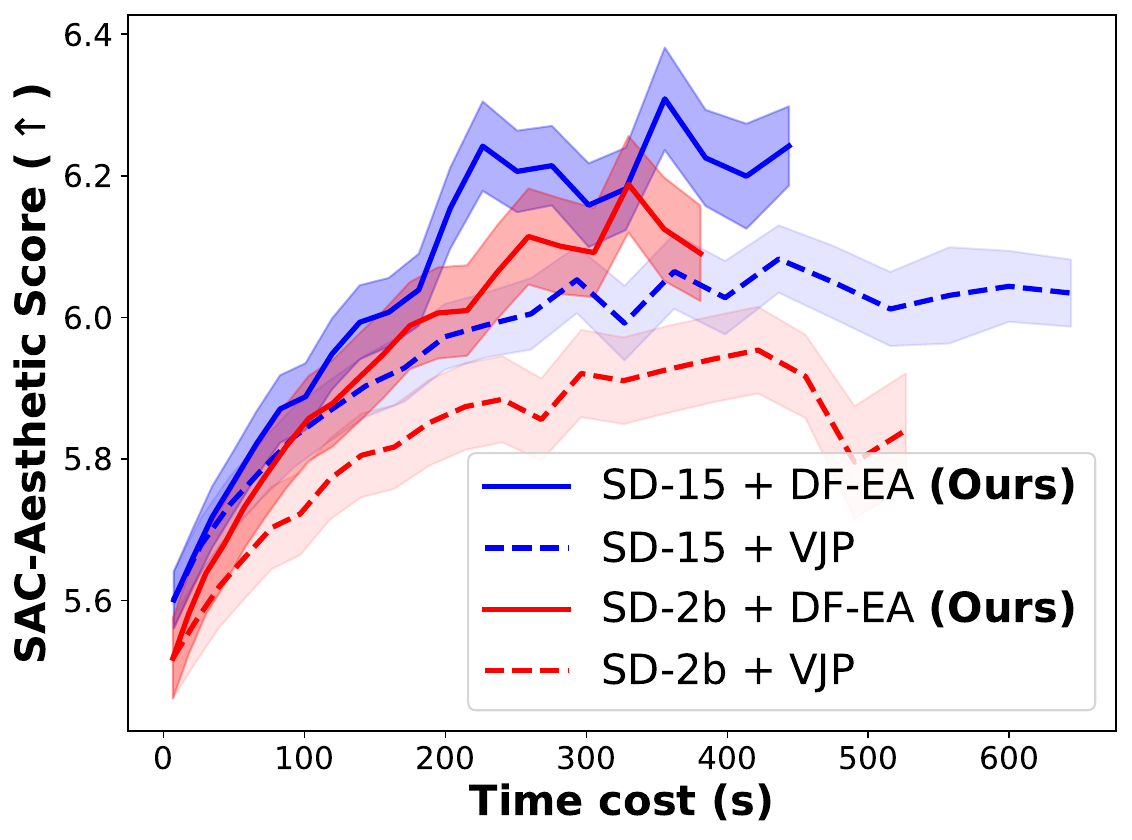}\vspace{-2mm}
        \subcaption{\scriptsize SAC Aesthetic Score ($\big\uparrow$)}
    \end{minipage}\hfill
    \begin{minipage}{0.2\textwidth}
        \centering
        \includegraphics[width=0.98\textwidth]{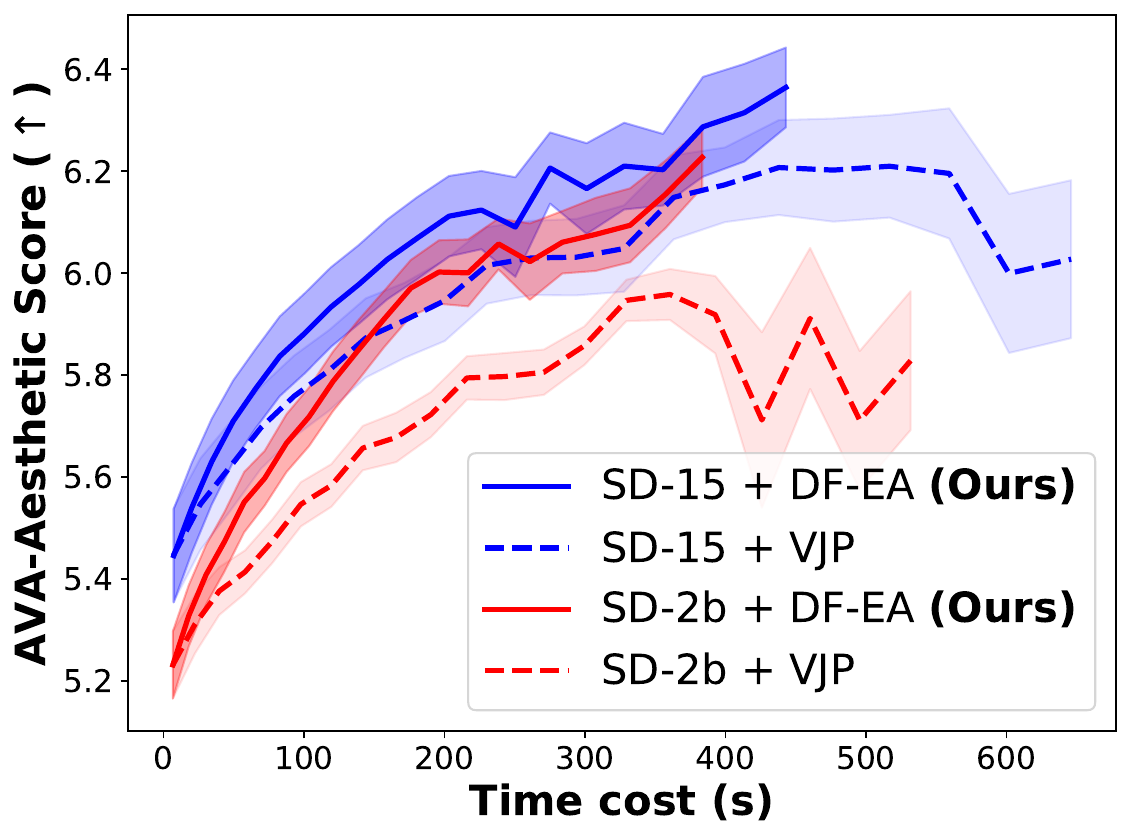}\vspace{-2mm}
        \subcaption{\scriptsize AVA Aesthetic Score ($\big\uparrow$)}
    \end{minipage}\hfill
    \begin{minipage}{0.2\textwidth}
        \centering
        \includegraphics[width=0.98\textwidth]{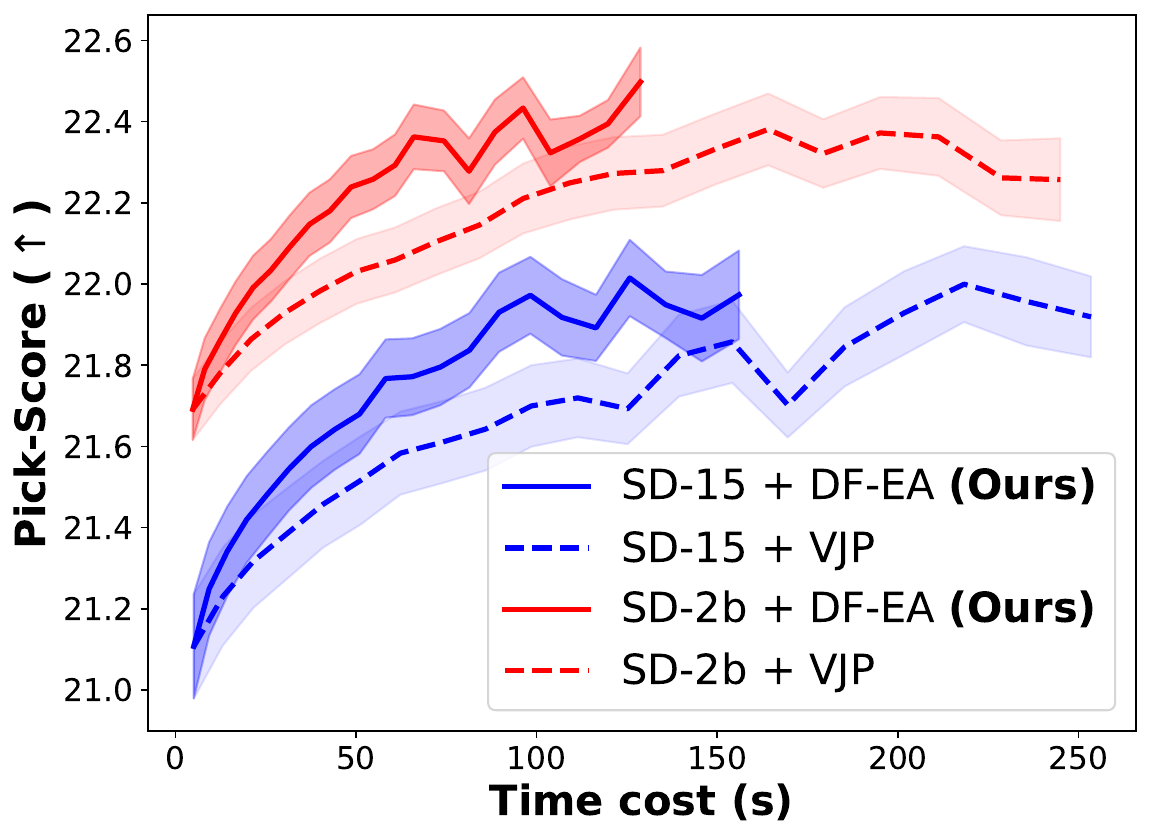}\vspace{-2mm}
        \subcaption{\scriptsize Pick-Score ($\big\uparrow$)}
    \end{minipage}\hfill
    \begin{minipage}{0.2\textwidth}
        \centering
        \includegraphics[width=0.98\textwidth]{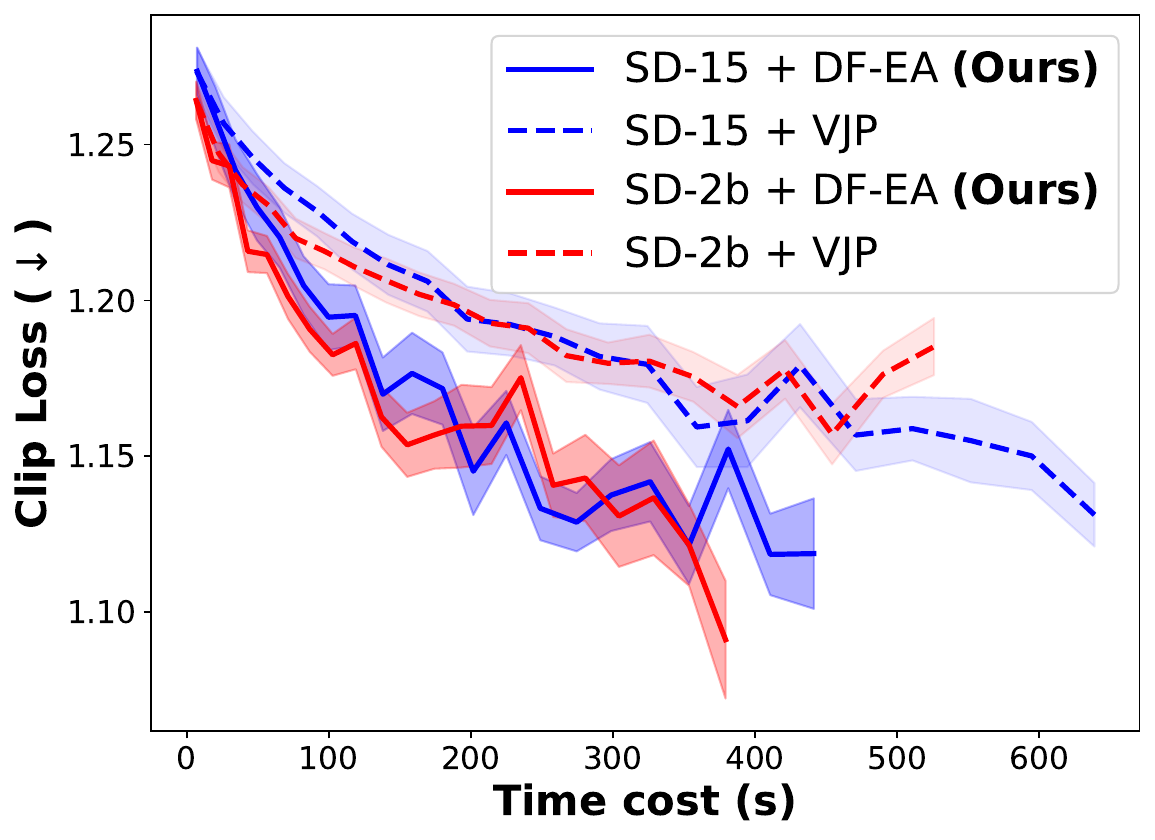}\vspace{-2mm}
        \subcaption{\scriptsize Clip Loss ($\big\downarrow$)}
    \end{minipage}\hfill
    \begin{minipage}{0.2\textwidth}
        \centering
        \includegraphics[width=0.98\textwidth]{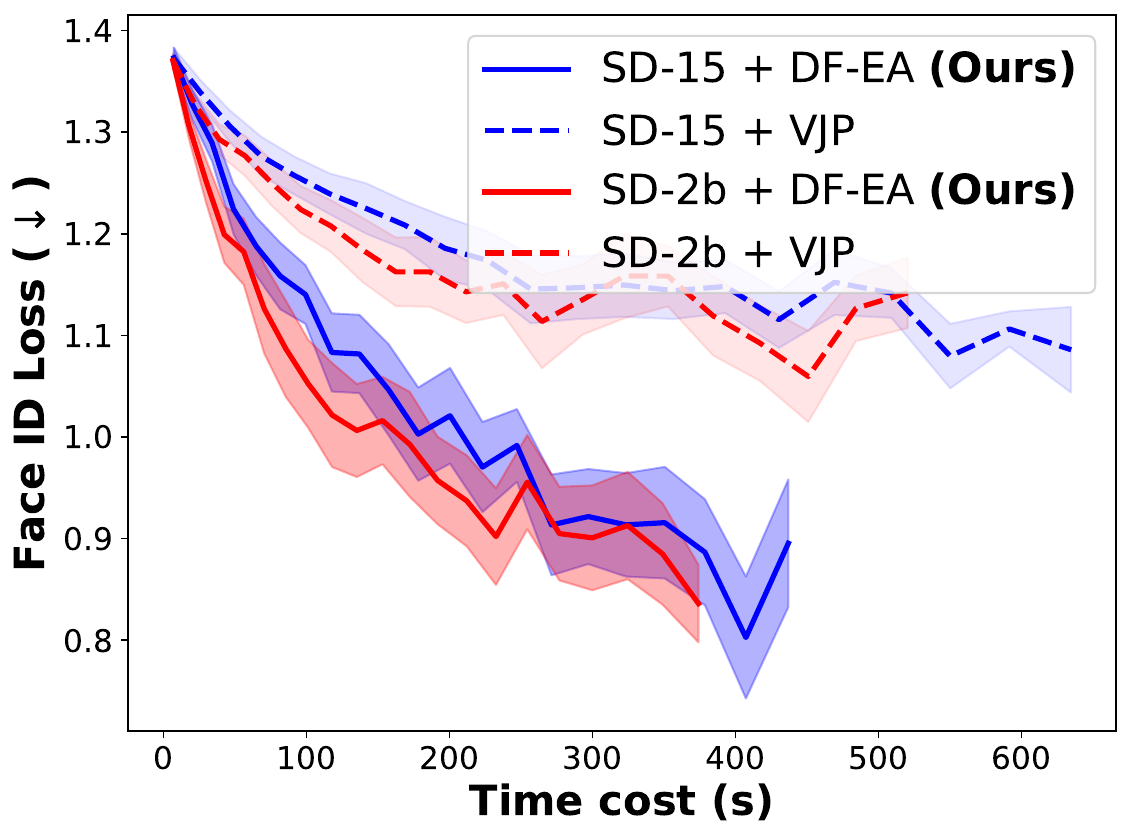}\vspace{-2mm}
        \subcaption{\scriptsize Face ID Loss ($\big\downarrow$)}
    \end{minipage}
    \caption{\small Comparison between our DF method and the VJP method on adjoint guidance sampling across five objective scores: SAC/AVA aesthetic score, Pick-Score, clip loss, and Face ID loss. Notably, our DF consistently achieves superior scores with less time expenditure.}
    \vspace{-3mm}
    \label{fig:adjoint_score}
\end{figure*}

\paragraph{Log-Likelihood Evaluation via VJP}
The current technique is only capable of conducting backpropagation of scalar value to the neural network. Therefore, the VJP in \eqref{VJP} cannot directly calculate the trace of the diffusion Fisher. The VJP must iterate through each dimension to compute the individual elements on the diagonal, and then sum them up as follows
\begin{equation} \label{nll_VJP}
\begin{aligned}
    \text{tr}\left(\mF_t(\vx_t, t)\right)\approx \frac{1}{\sigma_t} \sum_{i=1}^d \frac{\partial \left[\ip{\vepsilon_\theta(\vx_t,t)}{\ve^{(i)}} \right]}{\partial\vx_t}. \\
\end{aligned}
\end{equation}
Evaluating the trace using the VJP method would be extremely time-consuming due to the curse of dimensionality.
If the time-complexity of a single backpropagation is $\mathcal{O}(d)$, then the calculation in \eqref{nll_VJP} would have a time-complexity of $\mathcal{O}(d^2)$. In practice, as demonstrated in Table \ref{tab:my_label}, evaluating the trace of diffusion Fisher on the SD-v1.5 model would require half an hour, rendering it nearly infeasible. 
Some Monte-Carlo type approximations can be used to accelerate the NLL evaluation of models, but they still fall short in per-sample NLL evaluation, see a detailed discussion in \ref{app:hutchinson}.

\paragraph{Log-Likelihood Evaluation via DF trace matching}
To overcome the limitations of the VJP method in evaluating the trace of the diffusion Fisher, we propose to directly obtain its analytical form and train a network to match it. 
Note that the trace of an outer-product of the vector results in the vector's norm.
Thus, given the diffusion Fisher in Proposition \ref{prop:analytic_dirac}, we can also derive its trace in the form of a weighted norm sum, as highlighted in the following proposition:

\begin{table}[t]
    \centering
\resizebox{0.5\textwidth}{!}{%
\begin{tabular}{c|cccccc} 
\Xhline{1pt} 
{\multirow{2}{*}{Methods}} & \multicolumn{6}{c}{The relative error of NLL evaluation} \\
\Xcline{2-7}{0.5pt}
 & \makecell{t = 1.0} & \makecell{t = 0.8} & \makecell{t = 0.6} & \makecell{t = 0.4} & \makecell{t = 0.2} & \makecell{t = 0.0} \\ 
\Xhline{1pt} 
VJP (eq. \ref{nll_VJP}) & 6.68\% &  5.79\% & 10.46\% & 20.13\% & 51.14\% & 70.95\% \\ 
DF-TM \textbf{(Ours)} & \textbf{3.41\%} & \textbf{4.56\%} & \textbf{4.13\%} & \textbf{4.28\%} & \textbf{5.33\%} & \textbf{5.81\%} \\ 
\Xhline{1pt} 
\end{tabular}
}
    \caption{\small Comparison of the VJP method and our DF-TM in terms of the 
diffusion Fisher trace evaluation error across different timesteps. The error is evaluated on the 2-D chessboard data with the VE schedule.
    }
    \label{tab:trace_toy}
    \vspace{-3mm}
\end{table}

\begin{tcolorbox}[colback=gray!10, colframe=white, sharp corners, boxrule=0pt]  
\begin{proposition}\label{prop:trace_matching} 
In the same context as Proposition \ref{prop:analytic_dirac}, the trace of the diffusion Fisher matrix for the diffused distribution $q_t$, where $t\in(0,1]$, is given by:

\begin{equation} \label{trace_matching} 
\small
\begin{aligned} 
\mathrm{tr}\left(\mF_t(\vx_t,t)\right) = \frac{d}{\sigma_t^2} - \frac{\alpha_t^2}{\sigma_t^4}\left[ \sum_i w_i \norm{\vy_i}^2 - \norm{\sum_i w_i \vy_i}^2 \right] 
\end{aligned} 
\end{equation} 
\end{proposition}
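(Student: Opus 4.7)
The plan is to obtain the stated identity as a direct consequence of Proposition \ref{prop:analytic_dirac} by applying the trace operator to both sides of \eqref{information_dirac}. Since the trace is a linear functional on $d\times d$ matrices, it commutes with the scalar factors $1/\sigma_t^2$ and $\alpha_t^2/\sigma_t^4$ and distributes over the finite sum over $i$, so the entire computation reduces to evaluating the trace of three types of matrices: the identity $\mI$, the outer products $\vy_i \vy_i^\top$, and the outer product $(\sum_i w_i \vy_i)(\sum_i w_i \vy_i)^\top$.

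First, I would note the elementary identity $\Tr(\vv\vv^\top) = \vv^\top \vv = \norm{\vv}^2$, which holds for any $\vv \in \sR^d$ because $\Tr(\vv\vv^\top) = \sum_{j=1}^d v_j^2$. Applying this to $\vv = \vy_i$ yields $\Tr(\vy_i\vy_i^\top) = \norm{\vy_i}^2$, and applying it with $\vv = \sum_i w_i \vy_i$ yields $\Tr\!\left((\sum_i w_i \vy_i)(\sum_i w_i \vy_i)^\top\right) = \norm{\sum_i w_i \vy_i}^2$. Combined with $\Tr(\mI) = d$, these three evaluations, substituted into the trace of \eqref{information_dirac}, assemble exactly into the claimed right-hand side of \eqref{trace_matching}.

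There is no real obstacle here: the proposition is a one-line corollary of Proposition \ref{prop:analytic_dirac} together with the linearity of the trace and the rank-one trace identity. The only minor bookkeeping is to verify that the scalar weights $w_i = w_i(\vx_t,t)$ are treated as constants with respect to the trace (which they are, since they are scalars) and therefore pass outside the trace unchanged. No further regularity or convergence considerations are needed because the sum is finite under the Dirac setting of Proposition \ref{prop:analytic_dirac}.
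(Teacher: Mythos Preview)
Your proposal is correct and matches the paper's proof essentially line for line: the paper also states the rank-one identity $\Tr(\vv\vv^\top)=\norm{\vv}^2$ as a lemma and then applies the trace linearly to \eqref{information_dirac}, using $\Tr(\mI)=d$ and pulling the scalar weights $w_i$ outside. There is nothing to add.
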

\end{tcolorbox}
As demonstrated in Proposition \ref{prop:optimal_y}, the $\norm{\sum_i w_i \vy_i}^2$ can be directly estimated by $\norm{\bar{\vy}_\theta(\vx_t, t)}^2$. Therefore, the only unknown element in \eqref{trace_matching} is $\sum_i w_i \norm{\vy_i}^2$. 
Consequently, we suggest learning this term using a scalar-valued network, as per the following training scheme:

\begin{algorithm}[h]
    \caption{Training of DF-TM Network}\label{algo:tnet}
    \small
    \begin{algorithmic}[1] 
    \STATE \textbf{Input}: data space dimension $d$, initial network $\vt_\theta(\cdot,\cdot):\sR^d \times \sR\mapsto \sR$, noise schedule $\{\alpha_t\}$ and $\{\sigma_t\}$.
    \REPEAT
    \STATE  $\vx_0 \sim q_0\left(\vx_0\right)$
    \STATE  $t \sim \operatorname{Uniform}(\{1, \ldots, T\})$
    \STATE  $\vepsilon \sim \mathcal{N}(\mathbf{0}, \mathbf{I})$
    \STATE $\vx_t = \alpha_t \vx_0 + \sigma_t \vepsilon$
    \STATE Take gradient descent on $\nabla_\theta \abs{\vt_\theta(\vx_t,t) - \frac{\norm{\vx_0}^2}{d} }^2$
    \UNTIL{converged}
    \STATE \textbf{Output}: $\vt_\theta(\cdot,\cdot) $
    \end{algorithmic}
\end{algorithm}

The training scheme detailed in Algorithm \ref{algo:tnet} can indeed enable $\vt_\theta(\vx_t,t)$ to estimate the weighted norm term $\frac{1}{d}\sum_i w_i(\vx_t,t) \norm{\vy_i}^2$. This is substantiated by the convergence analysis Proposition \ref{prop:optimal_t}, as presented below.
\begin{tcolorbox}[colback=gray!10, colframe=white, sharp corners, boxrule=0pt]  
\begin{proposition}\label{prop:optimal_t}
    $\forall (x_t, t) \in \mathbb{R}^d \times \mathbb{R}_{\ge 0}$, the optimal $t_\theta(\vx_t, t)$s trained by the objective in Algorithm \ref{algo:tnet} are equal to $\frac{1}{d}\sum_i w_i(\vx_t,t) \norm{\vy_i}^2$.
\end{proposition}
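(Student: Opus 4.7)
The plan is to recognize the training objective in Algorithm \ref{algo:tnet} as a standard mean-squared-error regression problem, for which the Bayes-optimal predictor is the conditional expectation of the target given the input. Specifically, since the loss is
\begin{equation*}
\mathcal{L}(\theta) = \E_{t}\E_{\vx_0}\E_{\vepsilon}\left| \vt_\theta(\vx_t,t) - \tfrac{\|\vx_0\|^2}{d}\right|^2
\end{equation*}
with $\vx_t = \alpha_t \vx_0 + \sigma_t \vepsilon$, for each fixed $(\vx_t, t)$ the pointwise minimizer is
\begin{equation*}
\vt_\theta^\star(\vx_t, t) = \E\!\left[\tfrac{\|\vx_0\|^2}{d}\;\Big|\;\vx_t, t\right],
\end{equation*}
so the task reduces to computing this conditional expectation under the Dirac initial distribution from \eqref{dirac_initial_dist}.

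Next, I would apply Bayes' rule to obtain the posterior $q(\vx_0\mid \vx_t, t)$. Since $q_{t\mid 0}(\vx_t\mid \vx_0) = \mathcal{N}(\vx_t;\alpha_t\vx_0,\sigma_t^2\mI)$ and $q_0$ is the uniform mixture of Diracs at $\{\vy_i\}$, the posterior is a discrete distribution supported on $\{\vy_i\}$ with probabilities proportional to $\exp\!\bigl(-\|\vx_t - \alpha_t \vy_i\|^2/(2\sigma_t^2)\bigr) = v_i(\vx_t, t)$. After normalization, these probabilities are exactly $w_i(\vx_t, t) = v_i/\sum_j v_j$, matching the weights defined in Proposition \ref{prop:analytic_dirac}.

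Finally, I would plug the posterior into the conditional expectation to get
\begin{equation*}
\E\!\left[\tfrac{\|\vx_0\|^2}{d}\,\Big|\,\vx_t, t\right] = \sum_i w_i(\vx_t, t)\,\tfrac{\|\vy_i\|^2}{d} = \tfrac{1}{d}\sum_i w_i(\vx_t, t)\,\|\vy_i\|^2,
\end{equation*}
which is precisely the claimed target. This identifies $\vt_\theta^\star(\vx_t, t)$ with the desired expression for every $(\vx_t, t) \in \sR^d \times \sR_{\ge 0}$.

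The argument is largely routine once the regression viewpoint is adopted; the only subtlety is justifying the pointwise minimization (which requires the usual mild assumption that the network class contains, or at least can approximate arbitrarily well, the conditional expectation function, as is standard in the denoising-score-matching literature) and handling the Dirac initial distribution carefully when invoking Bayes' rule. I would note that this mirrors the derivation of Proposition \ref{prop:optimal_y}, simply replacing the target $\vx_0$ by $\|\vx_0\|^2/d$, so the same justification applies verbatim.
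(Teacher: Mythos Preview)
Your proposal is correct and takes essentially the same approach as the paper: both identify the optimal $\vt_\theta$ at each $(\vx_t,t)$ as the minimizer of a pointwise quadratic, which is the posterior-weighted average $\tfrac{1}{d}\sum_i w_i\|\vy_i\|^2$. The paper carries this out by explicitly differentiating the sum $\sum_i A_t\,v_i(\vx_t,t)\,|\tfrac{\|\vy_i\|^2}{d}-t_\theta|^2$ and solving the first-order condition, whereas you invoke the Bayes-optimal-predictor/conditional-expectation characterization and then compute the posterior via Bayes' rule; these are two presentations of the same computation, and your remark that it mirrors Proposition~\ref{prop:optimal_y} with target $\|\vx_0\|^2/d$ in place of $\vx_0$ is exactly right.
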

\end{tcolorbox}    
Once we have obtained $\vt_\theta$, we can evaluate the trace of diffusion Fisher efficiently, as illustrated below. This approach is a straightforward result of \eqref{trace_matching} and Propositions \ref{prop:optimal_y} and \ref{prop:optimal_t}, which we refer to as DF trace matching (DF-TM). 
\begin{equation}\label{trace_appro}
\begin{aligned}
   \mathrm{tr}\left(\mF_t(\vx_t,t)\right) \approx  \frac{d}{\sigma_t^2} - \frac{\alpha_t^2}{\sigma_t^4}\left( d*t_\theta(\vx_t,t) - \norm{\vy_\theta(\vx_t,t)}^2 \right)
\end{aligned}
\end{equation}
To estimate the trace using DF-TM in \eqref{trace_appro}, we simply need one access to $\vt_\theta$ and $\vepsilon_\theta$.
DF-TM enables us to effectively evaluate the log-likelihood in a gradient-free manner with linear time complexity.
We can further substantiate the theoretical approximation error bound when using DF-TM to calculate the trace of the diffusion Fisher, as illustrated in the Proposition \ref{prop:error_bound_trace}.
\begin{tcolorbox}[colback=gray!10, colframe=white, sharp corners, boxrule=0pt]  
\begin{proposition}\label{prop:error_bound_trace}
    Assume the approximation error on $t_\theta(\vx_t,t)$ is $\delta_1$ and on $\vepsilon_\theta(\vx_t,t)$ is $\delta_2$, then the approximation error of the approximated Fisher trace in \eqref{trace_appro} is at most $\frac{\alpha_t^2}{\sigma_t^4}\delta_1 +\frac{1}{\sigma_t^2}\delta_2^2$.
\end{proposition}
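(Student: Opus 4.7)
The plan is to compare the approximated trace from \eqref{trace_appro} against the exact expression from Proposition \ref{prop:trace_matching}, and then isolate the contribution of each approximation error using Propositions \ref{prop:optimal_y} and \ref{prop:optimal_t}.

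First, I would use Proposition \ref{prop:optimal_t} to identify the optimal target $t_\theta^*(\vx_t,t) = \tfrac{1}{d}\sum_i w_i\norm{\vy_i}^2$ and Proposition \ref{prop:optimal_y} to identify $\bar{\vy}_\theta^*(\vx_t,t) = \sum_i w_i \vy_i$. This matches the exact trace from \eqref{trace_matching} to the functional form of the approximation, so that the exact trace reads $\tfrac{d}{\sigma_t^2} - \tfrac{\alpha_t^2}{\sigma_t^4}\bigl(d\, t_\theta^* - \norm{\bar{\vy}_\theta^*}^2\bigr)$. Subtracting \eqref{trace_appro} from this expression and applying the triangle inequality yields
\begin{equation*}
\bigl\lvert \mathrm{tr}(\mF_t) - \mathrm{tr}_{\mathrm{app}}\bigr\rvert \;\le\; \frac{\alpha_t^2}{\sigma_t^4}\Bigl(\, d\,\lvert t_\theta - t_\theta^*\rvert \;+\; \bigl\lvert \norm{\bar{\vy}_\theta}^2 - \norm{\bar{\vy}_\theta^*}^2\bigr\rvert\Bigr).
\end{equation*}
The first summand is exactly $\tfrac{\alpha_t^2}{\sigma_t^4}\delta_1$ once $\delta_1$ is interpreted as the error in the scaled quantity $d\,t_\theta$, as induced by the training objective in Algorithm \ref{algo:tnet}.

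Next I would exploit the $y$-prediction identity $\bar{\vy}_\theta = (\vx_t - \sigma_t \vepsilon_\theta)/\alpha_t$, so that the discrepancy between the $y$-predictors is purely inherited from the $\vepsilon$-predictor: $\bar{\vy}_\theta - \bar{\vy}_\theta^* = -(\sigma_t/\alpha_t)(\vepsilon_\theta - \vepsilon_\theta^*)$, giving $\norm{\bar{\vy}_\theta - \bar{\vy}_\theta^*}^2 \le (\sigma_t^2/\alpha_t^2)\,\delta_2^{\,2}$. Multiplying by $\alpha_t^2/\sigma_t^4$ produces the claimed $\tfrac{1}{\sigma_t^2}\delta_2^{\,2}$ contribution.

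The main obstacle I anticipate is bridging $\bigl\lvert \norm{\bar{\vy}_\theta}^2 - \norm{\bar{\vy}_\theta^*}^2\bigr\rvert$ and $\norm{\bar{\vy}_\theta - \bar{\vy}_\theta^*}^2$. Expanding the difference of squared norms produces an additional linear cross-term $2\,\langle \bar{\vy}_\theta^*,\, \bar{\vy}_\theta - \bar{\vy}_\theta^*\rangle$ that is not automatically absorbed into a pure $\delta_2^{\,2}$ scaling. Closing this gap cleanly likely requires either interpreting the $\vepsilon_\theta$-error $\delta_2$ as a direct bound on the squared-norm discrepancy, or invoking the boundedness $\norm{\vy_i} \le \mathcal{D}_y$ from the Dirac assumption to control $\norm{\bar{\vy}_\theta^*}$ and fold the cross-term into an effective $\delta_2$. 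This is the single step I would handle most carefully; once it is settled, the two contributions combine additively to give the stated bound $\tfrac{\alpha_t^2}{\sigma_t^4}\delta_1 + \tfrac{1}{\sigma_t^2}\delta_2^{\,2}$.
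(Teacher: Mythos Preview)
Your approach mirrors the paper's proof exactly: subtract the exact trace from Proposition~\ref{prop:trace_matching} and the approximation in \eqref{trace_appro}, split via the triangle inequality into a $t_\theta$-error piece and a $\bar{\vy}_\theta$-error piece, and convert the latter to an $\vepsilon_\theta$-error using $\bar{\vy}_\theta=(\vx_t-\sigma_t\vepsilon_\theta)/\alpha_t$. Your handling of the $d$-scaling of $\delta_1$ is also consistent with how the paper silently absorbs that factor.

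The obstacle you flag --- passing from $\bigl|\,\norm{\bar{\vy}_\theta}^2-\norm{\bar{\vy}_\theta^*}^2\,\bigr|$ to $\norm{\bar{\vy}_\theta-\bar{\vy}_\theta^*}^2$ --- is precisely the step the paper's own proof performs without justification: it writes the triangle-inequality line with $\norm{\sum_i w_i\vy_i-\bar{\vy}_\theta}$ in place of the difference of squared norms and then bounds it by $(\sigma_t^2/\alpha_t^2)\delta_2^2$. As you observe, $\bigl|\norm{a}^2-\norm{b}^2\bigr|\le\norm{a-b}^2$ is false in general because of the cross term $2\langle a,\,b-a\rangle$. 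So your concern is well placed: the paper does not close this gap either, and obtaining the clean bound $\tfrac{\alpha_t^2}{\sigma_t^4}\delta_1+\tfrac{1}{\sigma_t^2}\delta_2^2$ as stated requires either redefining $\delta_2$ to directly control the squared-norm discrepancy, or introducing a $\mathcal{D}_y$-dependent first-order term. Your proposal is therefore at least as rigorous as the paper's argument, and your diagnosis of where the difficulty lies is accurate.
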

\end{tcolorbox}

\paragraph{Experiments}
We conduct toy experiments to show the accuracy and commercial-level experiments to show the efficiency of our DF-TM method. 
In toy experiments, as shown in Table \ref{tab:trace_toy}, we trained a simple DF-TM network on 2-D chessboard data and then evaluated the relative error of the trace estimated by VJP and DF-TM.
It is shown that our DF-TM method consistently achieves more accurate trace estimation of DF compared to the VJP method.

In commercial-level experiments, we trained two DF-TM networks for the SD-1.5 and SD-2base pipeline on the Laion2B-en dataset \citep{schuhmann2022laionb}, which contains 2.32 billion text-image pairs.
Our $t_\theta$ follows the U-net structure, similar to stable diffusion models, but with an added MLP head to produce a scalar-valued output.
We utilize the AdamW optimizer \citep{loshchilov2018decoupled} with a learning rate of 1e-4. 
The training is executed across 8 Ascend 910B chips with a batch size of 384 and completed after 150K steps.
In Figure \ref{fig:train_loss}, we demonstrate that the training loss of DF-TM nets converges smoothly, indicating the robustness of the DF-TM training scheme in Algorithm \ref{algo:tnet}. 

\begin{figure*}[!t]
\centering
\includegraphics[width=0.94\textwidth]{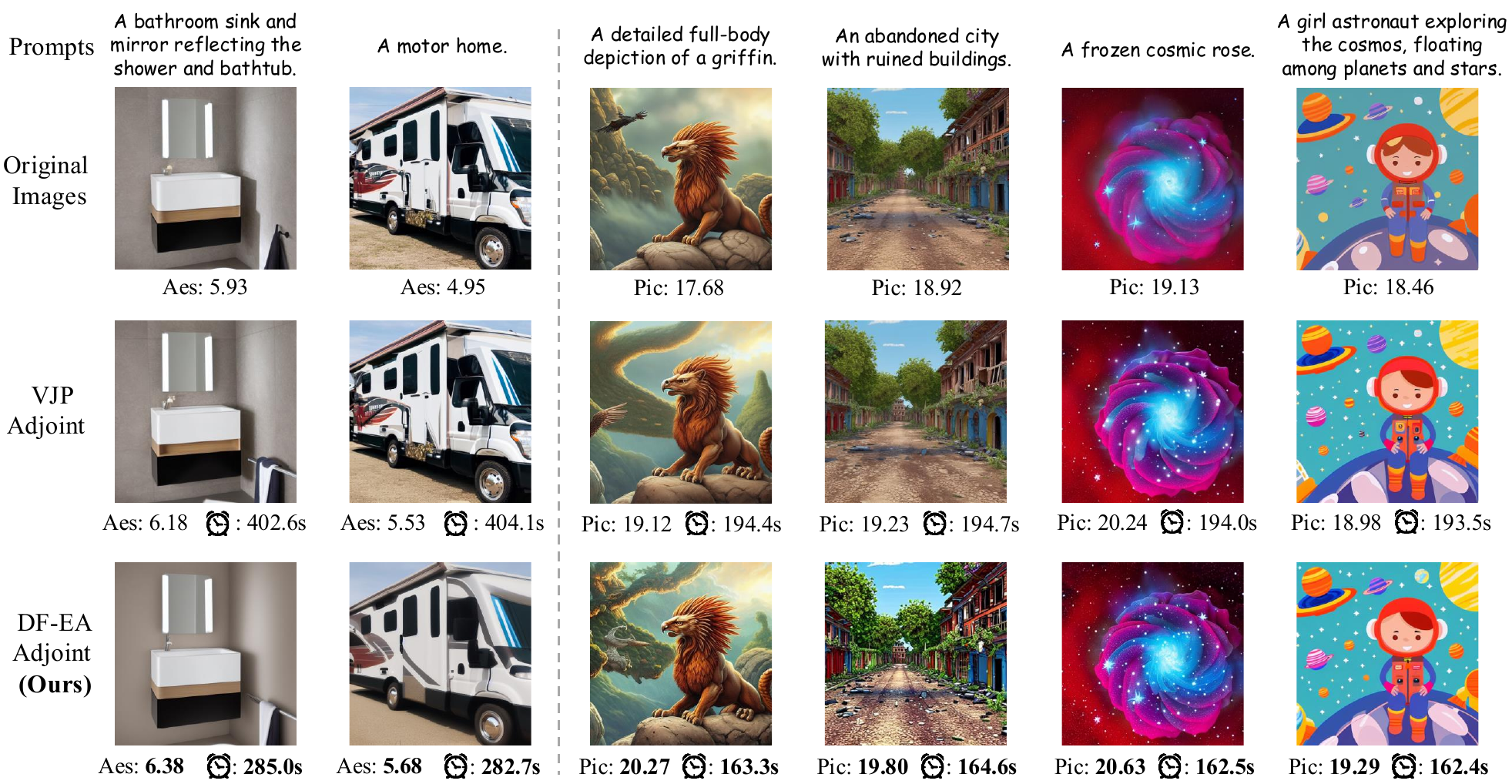}
\caption{Visual comparison of DF-EA \textbf{(Ours)} and VJP in the adjoint improvement task on (left) SAC aesthetic score and (right) Pick-Score. DF-EA consistently generates images with better visual effects and reduced time expenditure. 
}
\vspace{-4mm}
\label{fig:aes_shown}
\end{figure*}

In Figure \ref{fig:tradeoff}, we evaluate the average NLL and Clip score of samples generated by various SD models, using 10k randomly selected prompts from the COCO dataset \cite{lin2014microsoft}. A lower NLL suggests more realistic data generation, while a higher Clip score indicates a better match between the generated images and the input prompts. The results imply that the NLL and the Clip score form a trade-off curve across different guidance scales. This phenomenon, previously hypothesized in theory \citep{wu2024theoretical}, is now confirmed in the SD models, thanks to the effective NLL evaluation via the DF-TM method.

In Figure \ref{fig:nll_compare}, we display images with varying NLL under the same prompt with 10 steps on SD-1.5 using DDIM. It's clear that images with lower NLL exhibit greater visual realism, while those with higher NLL often contain deformed elements (emphasized by the yellow rectangle). Our proposed NLL evaluation method proves to be an effective tool for automatic sample selection.

\section{Diffusion Fisher Endpoint Approximation}\label{sec:ea}
The adjoint optimization of DMs would require access to the matrix-vector multiplication of the diffusion Fisher, which can also be seen as applying diffusion Fisher as a linear operator.
In this section, we present a training-free method that simplifies the complex linear transformation calculations, thus enabling faster and more accurate adjoint optimization based on the outer-product form diffusion Fisher.
\paragraph{Adjoint optimization sampling.}
Guided sampling techniques are extensively utilized in diffusion models to facilitate controllable generation. Recently, to address the inflexibility of commonly used classifier-based guidance \citep{dhariwal2021diffusion}  and classifier-free guidance \citep{ho2022classifier}, a series of adjoint guidance methods have been explored \citep{pan2023adjointdpm,pan2023towards}.

Consider optimizing a scalar-valued loss function $\gL(\cdot):\sR^d\mapsto\sR$, which takes $\vx_0$ in the data space as input.
Adjoint guidance is implemented by applying gradient descent on $\vx_t$ in the direction of $\frac{\partial \gL(\vx_0(\vx_t))}{\partial \vx_t}$. The essence of adjoint guidance is to use the gradient at $t=0$ and follow the adjoint ODE \citep{pollini2018adjoint,chen2018neural} to compute $\vlambda_t := \frac{\partial \gL(\vx_0(\vx_t))}{\partial \vx_t}$ for $t>0$.
\begin{equation} \label{Adjoint_ODE}
\begin{aligned}
    \frac{\rd\vlambda_t}{\rd t} = - \vlambda_t^{\top}\frac{\partial \boldsymbol{h}_\theta\left(\vx_t, t\right)}{\partial  \vx_t},\quad \vlambda_0 = \frac{\partial\mathcal{L}(\vx_0)}{\partial \vx_0}
\end{aligned}
\end{equation}

\paragraph{Adjoint ODE via VJP.}
Regardless of the ODE solver being used, it is necessary to compute the right-hand side of \eqref{Adjoint_ODE}, or equivalently, $\mF(\vx_t,t)^{\top}\vlambda_t$. This computation can be interpreted as applying the diffusion Fisher matrix as a linear operator to the adjoint state $\vlambda_t$, from a functional analysis perspective \citep{yosida2012functional}.
Current practices utilize the VJP technique to approximate this linear transformation operation as follows:

\begin{equation} \label{Adjoint_VJP}
\begin{aligned}
    \mF(\vx_t,t)^{\top}\vlambda_t&\approx\frac{1}{\sigma_t}\frac{\partial \vepsilon_\theta\left(\vx_t, t\right)}{\partial  \vx_t}^{\top}\vlambda_t \\
    &\approx\frac{1}{\sigma_t} \frac{\partial  \left[\ip{\vepsilon_\theta\left(\vx_t, t\right)}{\vlambda_t}\right]}{\partial  \vx_t}
\end{aligned}
\end{equation}
This process involves computationally expensive neural network auto-differentiations, and the approximation errors introduced by the VJP technique have no theoretical bound.

\paragraph{Adjoint ODE via DF-EA.}
As previously discussed in Section \ref{sec:analytical}, the DF inherently doesn't require gradients, suggesting that we could potentially apply the diffusion Fisher as a linear operator in a gradient-free manner. 
The challenging part in \eqref{information_dirac} is $\sum_i w_i \vy_i\vy_i^{\top}$, which represents a weighted form of outer-products of data. 
Based on the definition of $w_i$, the closest $\vy_i$ to $\vx_0$ will dominate as $t\to0$. This makes it intuitive to replace this sum with a single final sample outer-product $\vx_0\vx_0^{\top}$.
It's also important to note that the adjoint guidance itself needs to compute $\vx_0$ at each guidance step, eliminating the need for additional computation to obtain $\vx_0$. 
Given that we utilize the endpoint sample $\vx_0$, we refer to this approximation technique as DF Endpoint Approximation (EA). The formulation for DF-EA in the adjoint ODE is as follows:
\begin{equation} \label{EA_approx}
\small
\begin{aligned}
    &\mF(\vx_t,t)^{\top}\vlambda_t \\
    \approx& \left(\frac{1}{\sigma_t^2}\mI - \frac{\alpha_t^2}{\sigma_t^4}\left( \sum_i w_i \vy_i\vy_i^{\top} - \bar{\vy}_\theta(\vx_t, t)\bar{\vy}_\theta(\vx_t, t)^{\top} \right)\right)^{\top}\vlambda_t\\
    \approx& \left(\frac{1}{\sigma_t^2}\mI - \frac{\alpha_t^2}{\sigma_t^4}\left(  \vx_0\vx_0^{\top} - \bar{\vy}_\theta(\vx_t, t)\bar{\vy}_\theta(\vx_t, t)^{\top} \right)\right)^{\top}\vlambda_t\\
     =& \frac{1}{\sigma_t^2}\vlambda_t - \frac{\alpha_t^2}{\sigma_t^4} \left< \vx_0,\vlambda_t\right>\vx_0 + \frac{\alpha_t^2}{\sigma_t^4} \left< \bar{\vy}_\theta(\vx_t, t),\vlambda_t\right>\bar{\vy}_\theta(\vx_t, t)
\end{aligned}
\end{equation}
The DF-EA approximation leads to a scalar-weighted combination of $\vlambda_t$, $\vx_0$, and $\bar{\vy}_\theta(\vx_t, t)$, which importantly, does not involve any gradients. 
Additionally, we derive the theoretical approximation error bound of the DF-EA in Proposition \ref{prop:error_bound_ea}. To measure the accuracy of DF-EA as a linear operator, we opt to use the Hilbert–Schmidt norm \citep{gohberg1990hilbert} for measurement, as follows:
\begin{tcolorbox}[colback=gray!10, colframe=white, sharp corners, boxrule=0pt]  
\begin{proposition}\label{prop:error_bound_ea}
    Assume that the approximation error on $\vepsilon_\theta(\vx_t,t)$ is $\delta_2$, the approximation error of the DF-EA linear operator, as referenced in \ref{EA_approx}, is at most $\frac{\alpha_t^2}{\sigma_t^3}\left(2\mathcal{D}_y^2+ \sqrt{d}\delta_2\right)$ when measured in terms of the Hilbert–Schmidt norm.
\end{proposition}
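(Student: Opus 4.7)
The plan is to analyze the Hilbert--Schmidt norm of the error operator $E := \mF_t(\vx_t,t) - \widehat{\mF}_t$, where $\widehat{\mF}_t$ denotes the DF-EA approximation. Substituting the exact form from Proposition \ref{prop:analytic_dirac} and the DF-EA form from \eqref{EA_approx}, the $\frac{1}{\sigma_t^2}\mI$ terms cancel, and writing $\va := \sum_i w_i \vy_i$ and $\vb := \bar{\vy}_\theta(\vx_t,t)$ the difference splits as
\begin{equation*}
E = -\frac{\alpha_t^2}{\sigma_t^4}\Bigl[\underbrace{\sum_i w_i\vy_i\vy_i^{\top} - \vx_0\vx_0^{\top}}_{\text{endpoint piece }E_1}\Bigr] + \frac{\alpha_t^2}{\sigma_t^4}\Bigl[\underbrace{\va\va^{\top} - \vb\vb^{\top}}_{\text{network piece }E_2}\Bigr].
\end{equation*}
Applying the triangle inequality for $\|\cdot\|_{HS}$ then reduces the task to bounding $\|E_1\|_{HS}$ and $\|E_2\|_{HS}$ separately, each multiplied by $\alpha_t^2/\sigma_t^4$.

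For the endpoint piece, I would use convexity of the HS norm together with the rank-one identity $\|\vy\vy^{\top}\|_{HS} = \|\vy\|^2$. Since the weights $w_i$ form a probability simplex and both $\vx_0$ and each $\vy_i$ lie in the support of $q_0$, the data boundedness assumption $\|\vy_i\|\le \mathcal{D}_y$ implies $\|\sum_i w_i\vy_i\vy_i^{\top}\|_{HS}\le \mathcal{D}_y^2$ and $\|\vx_0\vx_0^{\top}\|_{HS}\le \mathcal{D}_y^2$, which gives $\|E_1\|_{HS}\le 2\mathcal{D}_y^2$.

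For the network piece, I would apply the standard rank-one perturbation identity $\va\va^{\top}-\vb\vb^{\top} = (\va-\vb)\va^{\top} + \vb(\va-\vb)^{\top}$, which combined with submultiplicativity of the HS norm under rank-one products gives $\|E_2\|_{HS}\le (\|\va\|+\|\vb\|)\,\|\va-\vb\|$. Proposition \ref{prop:optimal_y} identifies $\va$ with the optimal $y$-prediction, so the $\epsilon$-to-$y$ reparameterization $\bar{\vy}_\theta = (\vx_t-\sigma_t\vepsilon_\theta)/\alpha_t$ converts an $\epsilon$-prediction error into a $y$-prediction error by the scalar factor $\sigma_t/\alpha_t$; interpreting $\delta_2$ as a coordinate-wise bound on $\vepsilon_\theta - \vepsilon^{*}$ turns this into $\|\va-\vb\|\le (\sigma_t/\alpha_t)\sqrt{d}\,\delta_2$. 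Combining with $\|\va\|,\|\vb\|\lesssim \mathcal{D}_y$ (again using boundedness and the weight simplex) yields $\|E_2\|_{HS}\lesssim (\sigma_t/\alpha_t)\sqrt{d}\,\delta_2$, and reassembling the prefactor $\alpha_t^2/\sigma_t^4$ produces the stated $\frac{\alpha_t^2}{\sigma_t^3}\sqrt{d}\,\delta_2$ contribution. Adding the two pieces gives the claimed bound.

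The main obstacle is the second piece: the rank-one identity is the clean part, but I need to track exactly how the $\vepsilon$-space error $\delta_2$ propagates to $\bar{\vy}_\theta$ and further to $\vb\vb^{\top}$, which requires the $\sigma_t/\alpha_t$ reparameterization factor, the $\sqrt{d}$ from converting a per-coordinate bound to an $\ell^2$ bound, and a control on $\|\vb\|$ that does not blow up as the network error grows. A secondary subtlety is being explicit about the assumption on $\vx_0$ (that it lies in the support of $q_0$, so $\|\vx_0\|\le \mathcal{D}_y$) which is essential for the endpoint piece but easy to overlook.
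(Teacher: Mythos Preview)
Your decomposition and the two estimates are essentially the same route the paper takes: cancel the identity part, split into the endpoint piece $\sum_i w_i\vy_i\vy_i^{\top}-\vx_0\vx_0^{\top}$ and the network piece $\va\va^{\top}-\vb\vb^{\top}$, bound the first by $2\mathcal{D}_y^2$ via data boundedness, and bound the second using the $\vepsilon$-error. The only methodological difference is that you handle the network piece with the clean rank-one identity $(\va-\vb)\va^{\top}+\vb(\va-\vb)^{\top}$, whereas the paper writes a coordinate-wise bound $\sum_j|\va[j]-\vb[j]|\,\|\va-\vb\|$ before jumping to $\sqrt{d}\,\delta_2$.

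There is, however, a constant-tracking issue you should be aware of. The paper's proof carries the prefactor $\alpha_t^2/\sigma_t^3$ from the very first line (i.e.\ it is effectively bounding $\|\sigma_t(\mF_t-\widehat\mF_t)\|_{HS}$), not the $\alpha_t^2/\sigma_t^4$ that you correctly read off Proposition~\ref{prop:analytic_dirac}. With your prefactor, your endpoint contribution is $\tfrac{\alpha_t^2}{\sigma_t^4}\,2\mathcal{D}_y^2$, which does \emph{not} equal the stated $\tfrac{\alpha_t^2}{\sigma_t^3}\,2\mathcal{D}_y^2$; the missing $\sigma_t$ cannot be manufactured from the $\vepsilon$-to-$\vy$ reparameterization since that only touches $E_2$. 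Similarly, your rank-one bound on $E_2$ yields $(\|\va\|+\|\vb\|)\|\va-\vb\|\approx 2\mathcal{D}_y\cdot\tfrac{\sigma_t}{\alpha_t}\sqrt{d}\,\delta_2$, so after multiplying by $\alpha_t^2/\sigma_t^4$ you land on $\tfrac{2\alpha_t\mathcal{D}_y}{\sigma_t^3}\sqrt{d}\,\delta_2$, not the stated $\tfrac{\alpha_t^2}{\sigma_t^3}\sqrt{d}\,\delta_2$. In short, your strategy is the paper's strategy and is sound, but the final sentence ``adding the two pieces gives the claimed bound'' does not literally hold with the constants as you have them; to match the proposition verbatim you must adopt the paper's $\sigma_t$-scaled normalization and its implicit identification of $\delta_2$ with the squared $y$-prediction discrepancy rather than a per-coordinate $\vepsilon$-error.
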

\end{tcolorbox}
\paragraph{Experiments on DF-EA.}
As depicted in Figure \ref{fig:adjoint_score}, we conducted experiments comparing our DF-EA and VJP methods in adjoint guidance sampling, using five different scores and two different base models. 
DF-EA consistently achieves better scores due to its bounded approximation error. Furthermore, DF-EA requires less processing time as it eliminates the need for time-consuming gradient operations. DF-EA and VJP are compared under the same guidance scales and schemes across various numbers of steps. Details regarding the score function can be found in Appendix \ref{app:extra_adjoint}.

As depicted in Figure \ref{fig:aes_shown}, our DF-EA consistently generates samples with higher scores with a reduced time cost compared to VJP. 
All samples are generated within 50 steps, with adjoint applied from the 15\textsuperscript{th} to the 35\textsuperscript{th} step. Details on hyperparameters can be found in Appendix \ref{app:extra_adjoint}.


\section{Numerical OT Verification of DMs}\label{sec:ot}
There is an increasing trend towards analyzing the probability modeling capabilities of DMs by interpreting them from an optimal transport (OT) perspective \citep{albergo2023stochastic,chen2024rethinking}. 
The foundational concepts of optimal transport can be found in Appendix \ref{appendix:preliminary_ot}. 
One of the central questions is whether the map deduced by the PF-ODE could represent an optimal transport.
\citet{khrulkov2023understanding} have proven that, given single-Gaussian initial data and a VE noise schedule, the PF-ODE deduced map is optimal transport.
\citet{zhang2024formulating} has demonstrated that affine initial data suffices for the PF-ODE map to be optimal transport.
However, the OT property of the general PF-ODE map remains an open question.

In this section, we propose the first numerical OT verification experiment for a general PF-ODE deduced map based on the outer-product form diffusion Fisher we obtained in section \ref{sec:analytical}.
We first derive the following corollary for the OT property of the PF-ODE deduced map.
\begin{tcolorbox}[colback=gray!10, colframe=white, sharp corners, boxrule=0pt]  \small
\begin{corollary}\label{prop:ot} 
Denote the diffeomorphism deduced by the PF-ODE in \eqref{PFODE} as follows
\begin{equation}
\small
\begin{aligned}
  T_{s,t}: \mathbb{R}^n \longrightarrow \mathbb{R}^n; \vx_s \longmapsto \vx_t, \quad \forall t \geq s > 0.
\end{aligned}
\end{equation}
The diffeomorphism $T_{s,T}$ is a Monge optimal transport map \textbf{if and only if} the normalized fundamental matrix for $\mB(t)\equiv \mB(t,\vx_t)$ at $s$ is s.p.d. for every PF-ODE chain that starts from a $\vx_T\in\mathbb{R}^d$.
where
\begin{equation} 
\small
\begin{aligned}\label{eq:ot_b}
        \mB(t,\vx_t) =& \left[f(t)-\frac{g^2(t)}{2\sigma_t^2}\right]\mI+\frac{\alpha_t^2 g^2(t)}{2\sigma_t^4}\left[ \sum_i w_i \vy_i\vy_i^{\top} \right.
        \\ &\left.- \left(\sum_i w_i \vy_i\right)\left(\sum_i w_i \vy_i\right)^{\top} \right].
\end{aligned}
\end{equation}
The definition of the normalized fundamental matrix is deferred to Appendix \ref{appendix:proof_ot}. 
\end{corollary}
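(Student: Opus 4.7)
The plan is to reduce the question to Brenier's theorem on quadratic-cost optimal transport: a smooth diffeomorphism $T:\sR^d\to\sR^d$ is a Monge OT map between its source and pushforward measures if and only if $T=\nabla\phi$ for some convex potential $\phi$, equivalently, if and only if the Jacobian $DT(\vx)$ is symmetric positive definite at every $\vx$. The corollary therefore reduces to characterizing when $DT_{s,T}(\vx_s)$ is s.p.d. along every orbit of the PF-ODE starting from a given $\vx_T\in\sR^d$.

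Next, I would identify this Jacobian with a normalized fundamental matrix. For the PF-ODE velocity $\vh_\theta(\vx_t,t)$ in \eqref{dm_neural_ode}, the flow-map Jacobian $\mJ(t):=\partial \vx_t/\partial \vx_s$ obeys the variational equation $\dot{\mJ}(t)=(\partial \vh_\theta/\partial \vx_t)\,\mJ(t)$ with $\mJ(s)=\mI$. By construction this is the normalized fundamental matrix at $s$ of the linear time-varying system whose coefficient is $\partial \vh_\theta/\partial \vx_t$ evaluated along the chain. Thus $DT_{s,T}(\vx_s)=\mJ(T)$, and OT-ness of $T_{s,T}$ becomes an s.p.d. statement about this fundamental matrix evaluated at the endpoint $T$, required uniformly over every PF-ODE chain.

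Third, I would substitute the outer-product form of the diffusion Fisher. Differentiating the drift of \eqref{PFODE} gives $\partial \vh_\theta/\partial \vx_t = f(t)\mI-\tfrac{1}{2}g^2(t)\,\nabla^2_{\vx_t}\log q_t = f(t)\mI + \tfrac{1}{2}g^2(t)\,\mF_t(\vx_t,t)$. Plugging in $\mF_t$ from Proposition~\ref{prop:analytic_dirac} and absorbing the isotropic $\tfrac{1}{\sigma_t^2}\mI$ piece into the scalar factor in front of $\mI$ produces exactly the matrix $\mB(t,\vx_t)$ in \eqref{eq:ot_b}, once the time-orientation convention between generative sampling and the underlying probability flow is fixed (this is what accounts for the sign in front of $g^2(t)/(2\sigma_t^2)$). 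Since the bracketed $\vy$-dependent piece is the weighted data covariance of $\{\vy_i\}$ under the softmax weights $w_i$, the matrix $\mB$ is automatically symmetric, so the substantive content of the corollary lies entirely in positive definiteness.

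The main obstacle is making the Brenier equivalence both global and trajectory-wise rigorous. For the reverse direction (s.p.d. implies OT), having a symmetric positive definite Jacobian field $DT_{s,T}$ over the whole of $\sR^d$ yields, by the Poincar\'e lemma together with symmetry, a potential $\phi$ with $DT_{s,T}=\nabla\phi$ whose Hessian $D^2\phi$ is positive definite; hence $\phi$ is convex and Brenier concludes. The forward direction uses $D\nabla\phi=D^2\phi\succeq 0$ for the Brenier potential. The delicate bookkeeping is to (i) fix a consistent time-orientation so that the variational equation reads $\dot{\mJ}=\mB\mJ$ rather than the negative, and (ii) formalize what ``normalized fundamental matrix at $s$'' means as a matrix-valued function over trajectories indexed by $\vx_T$, so that the s.p.d. requirement is unambiguously stated for every PF-ODE chain. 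Both items are handled by the precise definition deferred to Appendix~\ref{appendix:proof_ot}.
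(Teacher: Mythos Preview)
Your proposal is correct and follows essentially the same route as the paper: reduce via Brenier's theorem to checking that the flow-map Jacobian is s.p.d., identify this Jacobian with the fundamental matrix of the variational (linearized) equation along each orbit, substitute the outer-product diffusion-Fisher formula from Proposition~\ref{prop:analytic_dirac} to obtain the coefficient $\mB$, and invoke the Poincar\'e lemma for the existence of the potential. The only cosmetic difference is that you propagate the Jacobian forward from $s$ via $\dot{\mJ}=(\partial h/\partial\vx_t)\,\mJ$, $\mJ(s)=\mI$ (so the s.p.d.\ condition lands at $t=T$), whereas the paper propagates $\mA(t)=\partial\vx_T/\partial\vx_t$ backward from $T$ via $\dot{\mA}=\mA\,\mB$, $\mA(T)=\mI$ (so it lands at $t=s$); these are equivalent since both compute $\partial\vx_T/\partial\vx_s$.
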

\end{tcolorbox}
Note that $T_{s,t}$ is well-posed, guaranteed by the global version of the Picard-Lindelöf theorem \cite{amann2011ordinary,zhang2024formulating}.
Detailed proofs can be found in Appendix \ref{appendix:proof_ot}.
We then design the numerical OT verification experiment for a given noise schedule and initial data as shown in the Algorithm \ref{algo:ot}. The detailed version of Algorithm \ref{algo:ot} can be found in Appendix \ref{app:extra_ot}.
\begin{algorithm}[h]
    \caption{Numerical OT test for PF-ODE map}\label{algo:ot}
    \small
    \begin{algorithmic}[1] 
    \STATE \textbf{Input}: initial data $\{\vy_i\}_{i=1}^{N}$, noise schedule $\{\alpha_t\}$ and $\{\sigma_t\}$, discretization steps $M$.
    \STATE Initialize $\mA_M = \mI$, $\vx_M \sim \mathcal{N}(0,\sigma_T\mI)$.
    \FOR{$i=M,M-1,\cdots,1$}
    \STATE  $\rd t = t_{i-1} - t_{i}$.
    \STATE  Calculate $\mB_i$ by \eqref{eq:ot_b}.
    \STATE  $\mA_{i-1} = \mA_{i} + \rd t * \mA_{i}^\top\mB_i  $\hfill\COMMENT{solve fundamental matrix.}
    \STATE $\vx_{i-1} = \text{PF-ODE Solver}(\vx_i,i)$
    \ENDFOR
    \STATE \textbf{Output}:$\mA_0$. \hfill\COMMENT{The result fundamental matrix.}
    \end{algorithmic}
\end{algorithm}

In Table \ref{tab:ot}, we numerically tested the OT property of the PF-ODE map under different nose schedules and initial data. 
We tested four commonly used noise schedules, VE \cite{ho2020denoising}, VP \cite{song2019generative}, sub-VP  \cite{song2020score}, and EDM \cite{karras2022elucidating} on 2-D data.
We first tested when the initial data is single-Gaussian and affine, we found that the result normalized fundamental matrix is p.s.d., which means the PF-ODE map is optimal transport. These results coincide with the previous proposed theoretical results in \cite{khrulkov2023understanding} and \cite{zhang2024formulating}.
We further tested an extremely simple non-affine case, that is, there are only three initial data points, (0,0), (0,0.5), and (0.5,0). We showed that all noise schedules result in an obvious asymmetric fundamental matrix, which means that the PF-ODE map fails to be OT in these cases.
Here, we use the Frobenius norm of the difference of $\mA$ and its transpose divided by the Frobenius norm of $\mA$ to measure its asymmetric rate.
Based on this finding, we hypothesize that the PF-ODE map can only be OT in the affine initial data (Single-Gaussian is a special case of affine data), but fails to be OT in general non-affine data. 
Our hypothesis coincides with the statement that the flow map of a Fokker–Planck equation is not necessarily OT as proposed in \cite{lavenant2022flow}. (Note that PF-ODE is a special class of Fokker–Planck equations)

\begin{table}[t]
\centering
\small
\resizebox{0.49\textwidth}{!}{%
\begin{tabular}{c||cc|cc|cc}
\Xhline{1pt}
Initial Data & \multicolumn{2}{c|}{Single-Gaussian} & \multicolumn{2}{c|}{Affine}& \multicolumn{2}{c}{Non-affine}\\
\Xcline{1 - 7}{0.5pt}
Noise Schedule  & Asym. & OT & Asym. & OT& Asym. &  OT\\
\Xhline{1pt}
\makecell{VE\\ \citep{song2019generative}} & 0.00\% & \cmark & 0.00\% & \cmark & 25.28\% & \xmark\\
\Xhline{0.5pt}
\makecell{VP\\ \cite{ho2020denoising}} & 0.00\% & \cmark & 0.00\% & \cmark & 23.36\% & \xmark\\
\Xhline{0.5pt}
\makecell{sub-VP\\ \cite{song2020score}} & 0.00\% & \cmark & 0.00\% & \cmark & 13.84\% & \xmark\\
\Xhline{0.5pt}
\makecell{EDM\\ \cite{karras2022elucidating}} & 0.00\% & \cmark & 0.00\% & \cmark & 27.09\% & \xmark\\
\Xhline{1pt}
\end{tabular}
}
\caption{\small Comparison of numerical OT verification results of four commonly used noise schedulers with different initial data.}
\label{tab:ot}
\vspace{-5mm}
\end{table}

\section{Conclusions}
\vspace{-2mm}
This paper derives an outer-product span formulation of the diffusion Fisher and introduces two approximation algorithms for different types of access to the diffusion Fisher: diffusion Fisher trace matching (DF-TM) and diffusion Fisher endpoint approximation (DF-EA). 
Both methods are theoretically guaranteed in terms of approximation error bounds, and offer improved accuracy and reduced time-cost compared to the traditional VJP method. 
We also designed the numerical OT property verification experiment for the PF-ODE map based on the outer-product form of DF.
This work not only improves the efficiency of diffusion Fisher access but also widens our understanding of the diffusion models.
Please refer to further discussions in appendix \ref{app:discussion}.
The code is available at \url{https://github.com/zituitui/DiffusionFisher}.
\section*{Acknowledgments} 
This work was supported in part by the National Natural Science Foundation of China under Grants 62206248 and 62402430, and the Zhejiang Provincial Natural Science Foundation of China under Grant LQN25F020008.
We would like to thank all the reviewers for their constructive comments.
Fangyikang Wang wishes to express gratitude to Pengze Zhang from ByteDance, Binxin Yang, and Xinhang Leng from WeChat Vision for their insightful discussions on the experiments.

\section*{Impact Statement}
The development of accurate diffusion Fisher, as discussed in this paper, holds significant potential for several domains, including machine learning, healthcare, environmental modeling, and economics.
However, while this research holds great potential for positive impacts, it is also important to consider potential negative societal impacts. 
The enhanced ability of generative models given by DF could potentially be misused. 
For instance, it could be exploited to create aesthetically improved deepfakes, leading to misinformation.
In healthcare, if not properly regulated, the use of synthetic patient data could lead to ethical issues.
Therefore, it is crucial to ensure that the findings of this research are applied ethically and responsibly, with necessary safeguards in place to prevent misuse and protect privacy.
\bibliographystyle{icml2025}
\bibliography{example_paper}

\newpage
\onecolumn
{\huge \bfseries Appendix}


\appendix

\section{Proofs and Formulations}

\subsection{Proof of Proposition \ref{prop:analytic_dirac}}
Notice that, in the subsection, we can do an interchange of sum and gradient; this is due to Leibniz's rule \citep{osler1970leibniz}  and the boundness condition we set in \eqref{dirac_initial_dist}.
Before we give the proof of Proposition \ref{prop:analytic_dirac}, we would like to establish two technical lemmas. The first lemma is about the first partial derivative of $v_i(\vx_t,t)$ with respect to $\vx_t$.
\begin{lemma}
\begin{equation} \label{lemma:v}
\begin{aligned}
\frac{\partial v_i(\vx_t,t)}{\partial \vx_t} &=\frac{\partial\exp(-\frac{|\vx_t - \alpha_t \vy_i|^2}{2\sigma_t^2})}{\partial \vx_t}\\
& = -\frac{1}{\sigma_t^2}(\vx_t - \alpha_t \vy_i)\exp(-\frac{|\vx_t - \alpha_t \vy_i|^2}{2\sigma_t^2})\\
& = -\frac{1}{\sigma_t^2}(\vx_t - \alpha_t \vy_i)v_i(\vx_t,t)
\end{aligned}
\end{equation}
\end{lemma}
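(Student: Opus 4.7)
The plan is to apply the chain rule directly to the exponential. Writing $v_i(\vx_t,t) = \exp(\phi_i(\vx_t,t))$ with scalar argument $\phi_i(\vx_t,t) := -\tfrac{\|\vx_t - \alpha_t \vy_i\|^2}{2\sigma_t^2}$, the multivariate chain rule gives $\nabla_{\vx_t} v_i = v_i \cdot \nabla_{\vx_t} \phi_i$, so the whole computation reduces to computing the gradient of $\phi_i$ with respect to $\vx_t$.

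For that gradient, I would expand $\|\vx_t - \alpha_t \vy_i\|^2 = \langle \vx_t - \alpha_t \vy_i, \vx_t - \alpha_t \vy_i\rangle$ and use the standard fact $\nabla_{\vx_t} \|\vx_t - \va\|^2 = 2(\vx_t - \va)$ for any constant vector $\va$ (here $\va = \alpha_t \vy_i$, which is constant in $\vx_t$). Multiplying by the scalar prefactor $-\tfrac{1}{2\sigma_t^2}$ yields $\nabla_{\vx_t}\phi_i = -\tfrac{1}{\sigma_t^2}(\vx_t - \alpha_t \vy_i)$. Substituting back into the chain-rule expression gives exactly the claimed identity, and the final equality in the statement is just the re-identification $\exp(\phi_i) = v_i$.

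There is essentially no obstacle here: the calculation is a two-line application of the chain rule. The only implicit point worth mentioning is that $\sigma_t > 0$ for $t \in (0,1]$ under the noise-schedule assumptions stated around \eqref{diffusion_kernel}, which ensures the expression is well-defined and smooth in $\vx_t$, so differentiation under the exponential is legitimate.
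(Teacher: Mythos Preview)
Your proposal is correct and matches the paper's approach exactly: the paper presents this lemma as a direct three-line chain-rule computation with no further justification, and your plan—write $v_i=\exp(\phi_i)$, compute $\nabla_{\vx_t}\phi_i$ via $\nabla_{\vx_t}\|\vx_t-\va\|^2=2(\vx_t-\va)$, and multiply—is precisely that computation spelled out.
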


The second lemma is about the Jacobian of $\sum_i w_i\left(\vx_t, t\right) \vy_i$ w.r.t. $\vx_t$. 
\begin{lemma}\label{lemma:chain}
\begin{equation} 
\begin{aligned}
 &\frac{\partial \sum_i w_i\left(\vx_t, t\right) \vy_i}{\partial \vx_t}\\
 &=\sum_i \vy_i\left(\frac{\partial w_i\left(\vx_t, t\right)}{\partial \vx_t}\right)^{\top} \\
& =\sum_i \vy_i\left(\frac{\partial}{\partial \vx_t}\left[\frac{v_i\left(\vx_t, t\right)}{\sum_j v_j\left(\vx_t, t\right)}\right]\right)^{\top} \\
& =\sum_i \vy_i\left\{\frac{\frac{\partial v_i\left(\vx_t, t\right)}{\partial \vx_t} \left[\sum_k v_k\left(\vx_t, t\right)\right]-v_i\left(\vx_t, t\right) \frac{\partial}{\partial \vx_t}\left[\sum_j v_j\left(\vx_t, t\right)\right]}{\left[\sum_k v_k\left(\vx_t, t\right)\right]^2}\right\}^{\top} \\
& =\frac{1}{\left(\sum_k v_k\right)^2} \sum_i \vy_i\left\{-\frac{1}{\sigma_t^2}\left(\vx_t-\alpha_t \vy_i\right) v_i \sum_k v_k-v_i\left(\vx_t,t\right)\left(-\frac{1}{\sigma_t^2}\right) \sum_j\left(\vx_t-\alpha_t \vy_j\right) v_j\right\}^{\top} \\
& =\frac{1}{\left(\sum_k v_k\right)^2}\left(-\frac{1}{\sigma_t^2}\right) \sum_i v_i \vy_i\left\{\left(\vx_t-\alpha_t \vy_i\right) \sum_k v_k-\sum_j\left(\vx_t-\alpha_t \vy_i\right) v_j\right\}^{\top} \\
& =\frac{1}{\left(\sum_k v_k\right)^2}\left(-\frac{1}{\sigma_t^2}\right) \sum_i v_i \vy_i\left\{-\alpha_t \vy_i \sum_k v_k+\alpha_t \sum_j \vy_j v_j\right\}^{\top} \\
& =\frac{\alpha_t}{\sigma_t^2}\left[\frac{\sum_i v_i\vy_i\vy_i^{\top}\bcancel{\left(\sum_k v_k\right)}}{\left(\sum_k v_k\right)^{\bcancel{2}}} - \left(\frac{\sum_i v_i\vy_i}{\sum_k v_k}\right)\left(\frac{\sum_i v_i\vy_i}{\sum_k v_k}\right)^{\top}\right]\\
& =\frac{\alpha_t}{\sigma_t^2}\left[ \sum_i w_i \vy_i\vy_i^{\top} - \left(\sum_i w_i \vy_i\right)\left(\sum_i w_i \vy_i\right)^{\top} \right]
\end{aligned}
\end{equation}
\end{lemma}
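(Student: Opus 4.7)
The plan is to compute the Jacobian $\frac{\partial}{\partial \vx_t}\sum_i w_i(\vx_t,t)\vy_i$ by reducing everything to partial derivatives of the $v_i$, for which the previous lemma already gives a clean closed form. Since each $\vy_i$ is a constant vector in $\vx_t$, we get $\frac{\partial}{\partial \vx_t}\sum_i w_i \vy_i = \sum_i \vy_i\,(\partial w_i/\partial \vx_t)^{\top}$, so the whole task collapses to differentiating the normalized weight $w_i = v_i/\sum_j v_j$.

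Next I would apply the scalar quotient rule to write $\partial w_i/\partial \vx_t$ as $\bigl[(\partial v_i/\partial \vx_t)(\sum_k v_k) - v_i(\sum_j \partial v_j/\partial \vx_t)\bigr]/(\sum_k v_k)^2$, and then substitute $\partial v_i/\partial \vx_t = -\frac{1}{\sigma_t^2}(\vx_t-\alpha_t\vy_i)v_i$ from the preceding lemma into both occurrences. The numerator then becomes a difference of two sums, each of which contains a pure $\vx_t\,v_i\sum_k v_k$ piece; these cancel exactly, and the $-\alpha_t\vy_i$ and $-\alpha_t\vy_j$ pieces are what survive, producing an overall factor of $\alpha_t/\sigma_t^2$.

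After this cancellation, grouping the surviving terms and recognizing $v_i/\sum_k v_k$ as $w_i$, the Jacobian collapses to $\frac{\alpha_t}{\sigma_t^2}\bigl[\sum_i w_i\vy_i\vy_i^{\top} - (\sum_i w_i\vy_i)(\sum_i w_i\vy_i)^{\top}\bigr]$, which is the claimed identity and also the conditional covariance of $\vy$ under the posterior weights $w_i$ times $\alpha_t/\sigma_t^2$. The main obstacle is purely bookkeeping: keeping track of transposes and outer products (recalling that the Jacobian stacks row-vectors $(\partial w_i/\partial \vx_t)^{\top}$), and making sure the $\vx_t$-only contributions cancel before the remaining pieces are regrouped into the centered second-moment form. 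The interchange of summation and differentiation needed throughout is justified by the finiteness of the Dirac sum together with the boundedness condition $\lVert\vy_i\rVert \le \mathcal{D}_y$ assumed at the start of the subsection, so no delicate analytic issue arises.
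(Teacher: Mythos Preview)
Your proposal is correct and follows essentially the same approach as the paper: pull the constant $\vy_i$ out to get $\sum_i \vy_i(\partial w_i/\partial \vx_t)^\top$, apply the quotient rule to $w_i=v_i/\sum_j v_j$, substitute the closed form for $\partial v_i/\partial \vx_t$ from the preceding lemma, observe the cancellation of the $\vx_t$-only terms, and regroup into the centered outer-product form. Your remarks on the bookkeeping of transposes and on the interchange of sum and derivative mirror the paper's own justifications.
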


Now we are ready to give the Proof of Proposition \ref{prop:analytic_dirac}


\begin{proof}
According to the initial distribution (\eqref{dirac_initial_dist}) and the diffusion kernel (\eqref{diffusion_kernel}), the marginal distribution at some time $t>0$ would be
\begin{equation}
\begin{aligned}
p\left(\vx_t, t\right) & =\frac{1}{N} \sum_i\left(2 \pi \sigma_t^2\right)^{-\frac{d}{2}} \exp \left(-\frac{\left|x_t-\alpha_t \vy_i\right|^2}{2 \sigma_{t^2}}\right) 
\end{aligned}
\end{equation}
Thus, the log-density has the following analytical formulation
\begin{equation}
\begin{aligned}
\log p(\vx_t, t) & =\log \left[\frac{1}{N}\left(2 \pi \sigma_t^2\right)^{-\frac{d}{2}} \sum_i \exp \left(-\frac{\left|x_t-\alpha_t \vy_i\right|^2}{2 \sigma_{t^2}}\right)\right] \\
& =\log \left[\sum_i \exp \left(-\frac{\left|x_t-\alpha_t \vy_i\right|^2}{2 \sigma_t^2}\right)\right]+C \\
& =\log \left[\sum_i v_i\left(\vx_t, t\right)\right]+C
\end{aligned}
\end{equation}

The score can be expressed as follows
\begin{equation}
\begin{aligned}
\frac{\partial}{\partial \vx_t} \log p\left(\vx_t, t\right) & =\frac{\partial}{\partial \vx_t} \log \left[\sum_i v_i\left(\vx_t, t\right)\right] \\
& =\frac{\frac{\partial}{\partial \vx_t}\left[\sum_i v_i\left(\vx_t, t\right)\right]}{\sum_i v_i\left(\vx_t, t\right)} \\
& =\frac{-\frac{1}{\sigma_t^2} \sum_j\left(\vx_t-\alpha_t y_j\right) v_j}{\sum_i v_i\left(\vx_t, t\right)} \\
& =-\frac{1}{\sigma_t^2}\left[x_t-\alpha_t \sum_j w_j\left(\vx_t, t\right) y_j\right]
\end{aligned}
\end{equation}

The Fisher information we want can then be calculated by further applying a gradient on the score.
\begin{equation}
\begin{aligned}
\mF_t(\vx_t,t) &= - \frac{\partial}{\partial \vx_t}\left(\frac{\partial}{\partial \vx_t} \log p\left(\vx_t, t\right) \right)\\
& = - \frac{\partial}{\partial \vx_t}\left\{-\frac{1}{\sigma_t^2}\left[x_t-\alpha_t \sum_j w_j\left(\vx_t, t\right) y_j\right]\right\}\\
& = \frac{1}{\sigma_t^2}\mI - \frac{\alpha_t}{\sigma_t^2}\frac{\partial \sum_i w_i\left(\vx_t, t\right) \vy_i}{\partial \vx_t}\\
&= \frac{1}{\sigma_t^2}\mI - \frac{\alpha_t^2}{\sigma_t^4}\left[ \sum_i w_i \vy_i\vy_i^{\top} - \left(\sum_i w_i \vy_i\right)\left(\sum_i w_i \vy_i\right)^{\top} \right]~~~(\text{by Lemma \ref{lemma:chain}})
\end{aligned}
\end{equation}

\end{proof}
This proof is inherently the calculation of the Hessian of a log-convolution of a density, we provide the detailed derivation here for completeness.

\subsection{Proof of Proposition \ref{prop:optimal_y}}
\begin{proof}
    Given fixed $(x_{t}, t)$, $\mathcal{L}$ is a quadratic form of $y_\theta$. To obtain the optimal $\bar{\vy}_\theta$, we differentiate $\mathcal{L}$ and set this derivative equal to zero, resulting in the following

\begin{align}
    0 = \frac{\partial \mathcal{L}}{\partial \bar{\vy}_\theta(\vx_t, t)} &= \frac{\partial}{\partial \bar{\vy}_\theta(\vx_t, t)} \sum_j \underbrace{\frac{1}{N} (2\pi \sigma_{t}^2)^{-\frac{d}{2}}}_{A_{t}} v_j(\vx_{t}, t) \lambda_{t} \frac{\alpha_{t}^2}{\sigma_{t}^2} \norm{\bar{\vy}_\theta(\vx_t, t) - \vy_j}^2 \notag \\
    &=2 A_{t} \lambda_{t} \frac{\alpha_{t}^2}{\sigma_{t}^2} \sum_j \vv_j(x_{t}, t)  (\bar{\vy}_\theta(\vx_t, t) - \vy_j),
\end{align}
which yields
\begin{equation}
    \bar{\vy}^*_\theta(\vx_t, t) = \sum_k \frac{v_k(\vx_{t}, t)}{\sum_j v_j(\vx_{t}, t)} \vy_k = \sum_i w_i \vy_i.
\end{equation}
\end{proof}


\subsection{Proof of Proposition \ref{prop:analytic_general}}
Notice that, in the subsection, we can do an interchange of integral and gradient; this is due to Leibniz's rule \citep{osler1970leibniz} and the bounded moments condition we set in \eqref{general_initial_dist}.
Before we give the proof of Proposition \ref{prop:analytic_general}, we would like to establish two technical lemmas. The first lemma is about the first partial derivative of $v(\vx_t,t,\vy)$ w.r.t. $\vx_t$.
\begin{lemma}
\begin{equation} \label{lemma:v_general}
\begin{aligned}
\frac{\partial v(\vx_t,t, \vy)}{\partial \vx_t} &=\frac{\partial\exp(-\frac{|\vx_t - \alpha_t \vy|^2}{2\sigma_t^2})}{\partial \vx_t}\\
& = -\frac{1}{\sigma_t^2}(\vx_t - \alpha_t \vy)\exp(-\frac{|\vx_t - \alpha_t \vy|^2}{2\sigma_t^2})\\
& = -\frac{1}{\sigma_t^2}(\vx_t - \alpha_t \vy)v(\vx_t,t,\vy)
\end{aligned}
\end{equation}
\end{lemma}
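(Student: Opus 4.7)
The statement is a direct application of the chain rule, and is the pointwise analog of the earlier Lemma \eqref{lemma:v} with the discrete label $i$ replaced by a continuous parameter $\vy \in \sR^d$. The plan is to differentiate $v(\vx_t, t, \vy) = \exp(h(\vx_t))$ where $h(\vx_t) := -\frac{\|\vx_t - \alpha_t \vy\|^2}{2\sigma_t^2}$ in two short steps.

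First I would compute $\nabla_{\vx_t} h$. Expanding $\|\vx_t - \alpha_t \vy\|^2 = \langle \vx_t - \alpha_t \vy,\, \vx_t - \alpha_t \vy\rangle$ and using the standard identity $\nabla_\vx \|\vx - \vc\|^2 = 2(\vx - \vc)$, which is valid pointwise in the parameter $\vc = \alpha_t \vy$, this gives $\nabla_{\vx_t} h(\vx_t) = -\frac{1}{\sigma_t^2}(\vx_t - \alpha_t \vy)$. Second, since $\exp(\cdot)$ is smooth and scalar-valued, the chain rule for the composition yields $\frac{\partial \exp(h(\vx_t))}{\partial \vx_t} = \exp(h(\vx_t))\,\nabla_{\vx_t} h(\vx_t)$. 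Substituting back $v(\vx_t, t, \vy) = \exp(h(\vx_t))$ produces the claimed formula on the last line.

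There is essentially no obstacle here, because the entire derivation is pointwise in $\vy$: the integration against $q_0$ is not touched, so moving from the Dirac setting of Lemma \eqref{lemma:v} to a general $\vy \in \sR^d$ changes nothing in the calculation. The only point worth flagging is that this pointwise identity is precisely what will be needed later to justify differentiation under the integral $\int v(\vx_t, t, \vy)\,\rd q_0(\vy)$ via Leibniz's rule, which in turn appeals to the finite second-moment assumption in \eqref{general_initial_dist} to provide a dominating function for both $v$ and $\nabla_{\vx_t} v$ uniformly on a neighborhood of $\vx_t$.
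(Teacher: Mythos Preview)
Your proposal is correct and matches the paper's approach: the paper presents this lemma as a bare chain of equalities with no further justification, and your two-step chain-rule argument (compute $\nabla_{\vx_t} h$ for the quadratic exponent, then compose with $\exp$) is exactly the standard computation underlying those equalities. Your remark that this is the pointwise-in-$\vy$ analog of Lemma~\eqref{lemma:v}, and that its role is to feed into the Leibniz-rule interchange later, is also accurate.
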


\begin{lemma}\label{lemma:chain_general}
\begin{equation} 
\small
\begin{aligned}
 &\frac{\partial \int_{\sR^d}  w(\vx_t, t, \vy) \vy \rd q_0(\vy)}{\partial \vx_t}\\
 &=\int_{\sR^d} \vy\left(\frac{\partial w\left(\vx_t, t, \vy\right)}{\partial \vx_t}\right)^{\top} \rd q_0(\vy) \\
& =\int_{\sR^d} \vy\left(\frac{\partial}{\partial \vx_t}\left[\frac{v(\vx_t, t, \vy)}{\int_{\sR^d} v(\vx_t, t, \vy')\rd q_0(\vy')}\right]\right)^{\top}\rd q_0(\vy) \\
& =\int_{\sR^d} \vy\left\{\frac{\frac{\partial v_i\left(\vx_t, t\right)}{\partial \vx_t} \left[\int_{\sR^d} v(\vx_t, t, \vy')\rd q_0(\vy')\right]-v\left(\vx_t, t, \vy \right) \frac{\partial}{\partial \vx_t}\left[\int_{\sR^d} v(\vx_t, t, \vy')\rd q_0(\vy')\right]}{\left[\int_{\sR^d} v(\vx_t, t, \vy'')\rd q_0(\vy'')\right]^2}\right\}^{\top} \rd q_0(\vy)\\
& =\frac{1}{\left[\int v(\vy'')\rd q_0(\vy'') \right]^2} \int  \vy\left\{-\frac{\left(\vx_t-\alpha_t \vy\right) v(\vy)}{\sigma_t^2} \int v(\vy')\rd q_0(\vy') -v(\vy)\left(-\frac{1}{\sigma_t^2}\right) \int\left(\vx_t-\alpha_t \vy'\right) v(\vy')\rd q_0(\vy')\right\}^{\top}\rd q_0(\vy) \\
& =\frac{1}{\left[\int v(\vy'')\rd q_0(\vy'') \right]^2}\left(-\frac{1}{\sigma_t^2}\right) \int v(\vy) \vy\left\{\left(\vx_t-\alpha_t \vy\right) \int v(\vy')\rd q_0(\vy') -\int \left(\vx_t-\alpha_t \vy'\right) v(\vy')\rd\vy'\right\}^{\top} \rd q_0(\vy)\\
& =\frac{1}{\left[\int v(\vy'')\rd q_0(\vy'') \right]^2}\left(-\frac{1}{\sigma_t^2}\right) \int v(\vy) \vy\left\{-\alpha_t \vy \int v(\vy')\rd q_0(\vy') +\alpha_t \int v(\vy')\vy'\rd\vy' \right\}^{\top} \rd q_0(\vy)\\
& =\frac{\alpha_t}{\sigma_t^2}\left[\frac{\int v(\vy)\vy\vy^{\top}\rd q_0(\vy')\bcancel{\left[\int v(\vy')\rd q_0(\vy') \right]}}{\left[\int v(\vy'')\rd q_0(\vy'') \right]^{\bcancel{2}}} - \left(\frac{\int v(\vy)\vy\rd q_0(\vy)}{\int v(\vy'')\rd q_0(\vy'') }\right)\left(\frac{\int v(\vy')\vy'\rd q_0(\vy')}{\int v(\vy'')\rd q_0(\vy'') }\right)^{\top}\right]\\
& =\frac{\alpha_t}{\sigma_t^2}\left[ \int w(\vy) \vy\vy^{\top}\rd q_0(\vy) - \left(\int w(\vy)  \vy \rd q_0(\vy)\right)\left(\int w(\vy)  \vy \rd q_0(\vy) \right)^{\top} \right]
\end{aligned}
\end{equation}
\end{lemma}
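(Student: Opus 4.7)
The plan is to compute the Jacobian with respect to $\vx_t$ of $\int w(\vx_t,t,\vy)\,\vy\,\rd q_0(\vy)$, where $w = v/Z$ with $Z(\vx_t,t) := \int v(\vx_t,t,\vy')\,\rd q_0(\vy')$. The approach is to push the derivative inside the integral via Leibniz's rule, apply the quotient rule to $v/Z$, substitute the single-variable derivative from Lemma \ref{lemma:v_general}, and then collect outer-product terms. Structurally this mirrors the Dirac-setting computation in Lemma \ref{lemma:chain}, with the finite sum $\sum_i$ replaced throughout by integration against $q_0$.

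First I would justify differentiation under the integral. Because $v(\vx_t,t,\vy) \in (0,1]$ and $\partial v/\partial \vx_t = -\sigma_t^{-2}(\vx_t - \alpha_t \vy)v$ grows at most linearly in $\|\vy\|$, the finite second-moment assumption on $q_0$ produces a $q_0$-integrable dominating function on a neighborhood of any fixed $\vx_t$; the dominated-convergence form of Leibniz's rule then legitimates interchange both in the numerator and in $Z$.

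Next I would apply the quotient rule. For each $\vy$,
\begin{equation}
\frac{\partial}{\partial \vx_t}\left[\frac{v(\vx_t,t,\vy)}{Z(\vx_t,t)}\right] = \frac{(\partial v/\partial \vx_t)\,Z - v\,(\partial Z/\partial \vx_t)}{Z^2}.
\end{equation}
Using Lemma \ref{lemma:v_general} for $\partial v/\partial \vx_t$ and $\partial Z/\partial \vx_t = -\sigma_t^{-2}\int (\vx_t - \alpha_t \vy')\,v(\vy')\,\rd q_0(\vy')$, and multiplying by $\vy$ (transposed appropriately to form a Jacobian), I obtain an inner bracket of the form $-(\vx_t-\alpha_t\vy)v\,Z + v\int(\vx_t - \alpha_t\vy')v(\vy')\,\rd q_0(\vy')$. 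The algebraic crux is that the $\vx_t$-dependent pieces cancel between the two subtractions, because the second subterm contributes exactly $\vx_t\,Z$; what survives is $-\alpha_t\vy\,Z + \alpha_t \int \vy'\,v(\vy')\,\rd q_0(\vy')$ inside the bracket.

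Dividing through by $Z^2$ and re-identifying the ratios $v/Z$ as $w$ then yields the first surviving term $\int w(\vy)\,\vy\vy^\top\,\rd q_0(\vy)$ and the rank-one outer product $\bigl(\int w(\vy)\,\vy\,\rd q_0(\vy)\bigr)\bigl(\int w(\vy)\,\vy\,\rd q_0(\vy)\bigr)^\top$, both with overall prefactor $\alpha_t/\sigma_t^2$. The main obstacle I anticipate is bookkeeping: $\partial w/\partial \vx_t$ is a row vector while $\vy$ is a column, so care is required when writing $\vy\,(\partial w/\partial \vx_t)^\top$ to place the transpose on the correct factor, and two successive factors of $-1/\sigma_t^2$ must be tracked. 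The measure-theoretic justification of Leibniz's rule is technical but routine given the second-moment hypothesis, and I do not expect it to present a real difficulty.
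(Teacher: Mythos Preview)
Your proposal is correct and follows essentially the same approach as the paper: interchange derivative and integral via Leibniz's rule (justified by the finite second-moment assumption), apply the quotient rule to $v/Z$, plug in Lemma~\ref{lemma:v_general}, observe the $\vx_t$-dependent pieces cancel, and regroup into the two outer-product terms with prefactor $\alpha_t/\sigma_t^2$. Your treatment of the dominated-convergence justification is in fact more explicit than the paper's, which simply invokes Leibniz's rule and the moment condition without spelling out the dominating function.
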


\begin{proof}
According to the initial distribution (\eqref{general_initial_dist}) and the diffusion kernel (\eqref{diffusion_kernel}), the marginal distribution at some time $t>0$ would be
\begin{equation}
\begin{aligned}
p\left(\vx_t, t\right) & =\int_{\sR^d} \left(2 \pi \sigma_t^2\right)^{-\frac{d}{2}} \exp \left(-\frac{\left|x_t-\alpha_t \vy\right|^2}{2 \sigma_{t^2}}\right) \rd q_0(\vy)
\end{aligned}
\end{equation}
Thus the log-density has the following analytical formulation
\begin{equation}
\begin{aligned}
\log q_t(\vx_t, t) & =\log \left[\int_{\sR^d} \left(2 \pi \sigma_t^2\right)^{-\frac{d}{2}} \exp \left(-\frac{\left|x_t-\alpha_t \vy\right|^2}{2 \sigma_{t^2}}\right) \rd q_0(\vy)\right] \\
& =\log \left[\int_{\sR^d}  \exp \left(-\frac{\left|x_t-\alpha_t \vy\right|^2}{2 \sigma_{t^2}}\right) \rd q_0(\vy)\right]+C \\
& =\log \left[\int_{\sR^d}  v\left(\vx_t, t, \vy\right) \rd q_0(\vy)\right]+C
\end{aligned}
\end{equation}

The score can be expressed as follows
\begin{equation}
\begin{aligned}
\frac{\partial}{\partial \vx_t} \log p\left(\vx_t, t\right) & =\frac{\partial}{\partial \vx_t} \log\left[\int_{\sR^d}  v\left(\vx_t, t, \vy\right) \rd q_0(\vy)\right] \\
& =\frac{\frac{\partial}{\partial \vx_t}\left[\int_{\sR^d}  v\left(\vx_t, t, \vy\right) \rd q_0(\vy)\right]}{\int_{\sR^d}  v\left(\vx_t, t, \vy\right) \rd q_0(\vy)} \\
& =\frac{\int_{\sR^d}\frac{\partial}{\partial \vx_t}\left[  v\left(\vx_t, t, \vy\right) \right] \rd q_0(\vy)}{\int_{\sR^d}  v\left(\vx_t, t, \vy\right) \rd q_0(\vy)} ~~~(\text{by xx and Leibniz integral rule})\\
& =\frac{-\frac{1}{\sigma_t^2} \int_{\sR^d}  (\vx_t - \alpha_t \vy)v\left(\vx_t, t, \vy\right) \rd q_0(\vy)}{\int_{\sR^d}  v\left(\vx_t, t, \vy\right) \rd q_0(\vy)} \\
& =-\frac{1}{\sigma_t^2}\left[x_t-\alpha_t \int_{\sR^d}  w(\vx_t, t, \vy) \vy \rd q_0(\vy)\right]
\end{aligned}
\end{equation}

The Fisher information we want can then be calculated by further applying a gradient on the score.
\begin{equation}
\begin{aligned}
\mF_t(\vx_t,t) &= - \frac{\partial}{\partial \vx_t}\left(\frac{\partial}{\partial \vx_t} \log p\left(\vx_t, t\right) \right)\\
& = - \frac{\partial}{\partial \vx_t}\left\{-\frac{1}{\sigma_t^2}\left[x_t-\alpha_t \int_{\sR^d}  w(\vx_t, t, \vy) \vy \rd q_0(\vy)\right]\right\}\\
& = \frac{1}{\sigma_t^2}\mI - \frac{\alpha_t}{\sigma_t^2}\frac{\partial \int_{\sR^d}  w(\vx_t, t, \vy) \vy \rd q_0(\vy)}{\partial \vx_t}\\
&= \frac{1}{\sigma_t^2}\mI - \frac{\alpha_t^2}{\sigma_t^4}\left[ \int w_i \vy\vy^{\top}\rd q_0(\vy) - \left(\int w_i  \vy \rd q_0(\vy)\right)\left(\int w_i  \vy \rd q_0(\vy)\right)^{\top} \right]~~~(\text{by Lemma \ref{lemma:chain_general}})
\end{aligned}
\end{equation}

\end{proof}

This proof is also inherently the calculation of the Hessian of a log-convolution of a density, we provide the detailed derivation here for completeness.

\subsection{Proof of Proposition \ref{prop:optimal_y_general}}
\begin{proof}
    Given fixed $(x_{t}, t)$, $\mathcal{L}$ is a quadratic form of $y_\theta$. To obtain the optimal $\bar{\vy}_\theta$, we differentiate $\mathcal{L}$ and set this derivative equal to zero, resulting in the following

\begin{align}
    0 = \frac{\partial \mathcal{L}}{\partial \bar{\vy}_\theta(\vx_t, t)} &= \frac{\partial}{\partial \bar{\vy}_\theta(\vx_t, t)} \int_{\sR^d} \underbrace{\frac{1}{N} (2\pi \sigma_{t}^2)^{-\frac{d}{2}}}_{A_{t}} v(\vx_{t}, t, \vy) \lambda_{t} \frac{\alpha_{t}^2}{\sigma_{t}^2} \norm{\bar{\vy}_\theta(\vx_t, t) - \vy}^2\rd q_0(\vy) \notag \\
    &=2 A_{t} \lambda_{t} \frac{\alpha_{t}^2}{\sigma_{t}^2} \int_{\sR^d} \vv(x_{t}, t,\vy)  (\bar{\vy}_\theta(\vx_t, t) - \vy)\rd q_0(\vy),
\end{align}
which yields
\begin{equation}
    \bar{\vy}^*_\theta(\vx_t, t) = \int_{\sR^d} \frac{v(\vx_t, t, \vy')}{\int_{\sR^d}  v(\vx_t, t, \vy'')\rd q_0(\vy'')}  \vy'\rd q_0(\vy') = \int_{\sR^d} w(\vx_t, t, \vy')  \vy'\rd q_0(\vy').
\end{equation}
\end{proof}

\subsection{Proof of Proposition \ref{prop:trace_matching}}
\begin{lemma}
     Given a vector $\vv\in\sR^d$, the trace of the outer-product matrix of this vector is precisely equal to the square of its 2-norm. This can be shown as follows: 
\begin{equation}
\begin{aligned}
\mathrm{tr}\left(\vv\vv^T\right)&=\sum_{i=1}^d\left(\vv\vv^T\right)_{i,i}=\sum_{i=1}^d v_i * v_i = \norm{\vv}^2
\end{aligned}
\end{equation}    
\end{lemma}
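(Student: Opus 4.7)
The statement to be proved is a very short linear-algebra identity: for any $\vv \in \sR^d$, the scalar $\mathrm{tr}(\vv\vv^\top)$ equals $\|\vv\|^2$. Since this is essentially a one-line calculation, the plan is less about strategy and more about choosing which of two equivalent routes to present. The two natural routes are (i) a direct entrywise expansion of the outer product followed by summing the diagonal, and (ii) an appeal to the cyclic property of the trace.

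My preferred plan is route (i), because it is self-contained and does not rely on any preliminary fact beyond the definition of the trace. First I would write out the $(i,j)$-th entry of $\vv\vv^\top$, namely $(\vv\vv^\top)_{ij} = v_i v_j$, which is immediate from the definition of the outer product. Second, I would apply the definition of the trace as the sum of diagonal entries, isolating the case $i=j$ to get $(\vv\vv^\top)_{ii} = v_i^2$. Third, I would sum over $i$ from $1$ to $d$ to obtain $\mathrm{tr}(\vv\vv^\top) = \sum_{i=1}^d v_i^2$, and finally recognize the right-hand side as the squared Euclidean norm $\|\vv\|^2$ by definition of $\|\cdot\|$ on $\sR^d$.

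As a sanity-check alternative, route (ii) would simply invoke the cyclic identity $\mathrm{tr}(\mA\mB) = \mathrm{tr}(\mB\mA)$ with $\mA = \vv$ (viewed as a $d\times 1$ matrix) and $\mB = \vv^\top$, yielding $\mathrm{tr}(\vv\vv^\top) = \mathrm{tr}(\vv^\top\vv) = \vv^\top\vv = \|\vv\|^2$; this is a shorter presentation but implicitly uses the cyclic property. There is essentially no obstacle in either route; the only thing to be careful about is keeping the notational distinction between the scalar $\vv^\top\vv$ and the matrix $\vv\vv^\top$ clear, so the reader does not confuse the $1\times 1$ inner-product output with the $d\times d$ outer-product matrix. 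Given the triviality, I would present route (i) in a single display and move on, since this lemma is invoked only as a plug-in step for deriving the trace formula \eqref{trace_matching} from Proposition \ref{prop:analytic_dirac}.
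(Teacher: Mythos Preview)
Your proposal is correct, and your preferred route (i) is exactly what the paper does: the displayed chain of equalities in the lemma statement is itself the proof, expanding $(\vv\vv^\top)_{i,i}=v_i v_i$ and summing to $\|\vv\|^2$. There is nothing to add.
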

\begin{proof}
Then we can start to give the derivation of Proposition \ref{prop:trace_matching}
\begin{equation} 
\begin{aligned} 
\mathrm{tr}\left(\mF_t(\vx_t,t)\right) &=\mathrm{tr}\left(\frac{1}{\sigma_t^2}\mI - \frac{\alpha_t^2}{\sigma_t^4}\left[ \sum_i w_i \vy_i\vy_i^{\top} - \left(\sum_i w_i \vy_i\right)\left(\sum_i w_i \vy_i\right)^{\top} \right]\right)\\
=&\frac{1}{\sigma_t^2}\mathrm{tr}\left(\mI\right) - \frac{\alpha_t^2}{\sigma_t^4}\left[\sum_i w_i\mathrm{tr}\left(\vy_i\vy_i^{\top}\right)-\mathrm{tr}\left(\left(\sum_i w_i \vy_i\right)\left(\sum_i w_i \vy_i\right)^{\top} \right)  \right] \\
=&\frac{d}{\sigma_t^2} - \frac{\alpha_t^2}{\sigma_t^4}\left[ \sum_i w_i \norm{\vy_i}^2 - \norm{\sum_i w_i \vy_i}^2 \right] 
\end{aligned} 
\end{equation} 
\end{proof}

\subsection{Proof of Proposition \ref{prop:optimal_t}}
\begin{proof}
The objective in Algorithm \ref{algo:tnet} obviously equals to:
\begin{equation}\label{optimal_p_eq1}
\begin{aligned}
\argmin_{t_\theta}\E_{\vx_0\sim q_0\left(\mathbf{x}_0\right),\vx_t \sim \mathcal{N}( \alpha(t) \vx_0, \sigma^2(t)\mI} \abs{\vt_\theta(\vx_t,t) - \frac{\norm{\vx_0}^2}{d} }^2.
\end{aligned}
\end{equation}
By expressing the expectation of Equation \eqref{optimal_p_eq1} in the form of a marginal distribution, we can transform the objective as follows:
\begin{equation}
\begin{aligned}
\argmin_{t_\theta}\sum_i {\frac{1}{N} (2 \pi \sigma_t^2)^{-\frac{d}{2}}}\abs{\vt_\theta(\vx_t,t) - \frac{\norm{\vy_i}^2}{d} }^2
\end{aligned}
\end{equation}
The optimal $t^*_\theta$ must satisfy the condition that the gradient of the loss equals $0$. Therefore, we have:
\begin{equation}
\begin{aligned}
    0&= \nabla_{t^*_\theta(\vx_t, t)} \left[\sum_i \underbrace{\frac{1}{N} (2 \pi \sigma_t^2)^{-\frac{d}{2}}}_{A_t} v_i(\vx_t, t) \abs{\frac{\norm{\vy_i}^2}{d} - t^*_\theta(\vx_t, t)}^2 \right] \notag \\
    & = \sum_i A_t v_i(\vx_t, t) (t^*_\theta(\vx_t, t) - \frac{\norm{\vy_i}^2}{d}) \notag \\
    &= A_t\sum_j v_j(\vx_t, t) t^*_\theta(\vx_t, t) - A_t\sum_i v_i(\vx_t, t) \frac{\norm{\vy_i}^2}{d} ,
\end{aligned}
\end{equation}
Thus
\begin{equation}
\begin{aligned}
    t^*_\theta(\vx_t, t) &= \frac{ A_t\sum_i v_i(\vx_t, t) \frac{\norm{\vy_i}^2}{d} }{A_t\sum_j v_j(\vx_t, t)}\\
    &=\sum_i\frac{  v_i(\vx_t, t)  }{\sum_j v_j(\vx_t, t)}\frac{\norm{\vy_i}^2}{d}\\
    &=\frac{1}{d}\sum_i w_i(\vx_t,t) \norm{\vy_i}^2
\end{aligned}
\end{equation}
We have successfully completed the proof that the optimal $t_\theta(\vx_t, t)$, as trained by Algorithm \ref{algo:tnet}, is equivalent to $\frac{1}{d}\sum_i w_i(\vx_t,t) \norm{\vy_i}^2$.
\end{proof}

\subsection{Proof of Proposition \ref{prop:error_bound_trace}}
The approximation error of the estimated trace \eqref{trace_appro} will be its difference from the true Fisher information trace \eqref{trace_matching}. We use consecutive Cauchy–Schwarz and triangle inequality to get the bound of the approximation error:
\begin{equation}
\begin{aligned}
 &\abs{\frac{d}{\sigma_t^2} - \frac{\alpha_t^2}{\sigma_t^4}\left[ \sum_i w_i \norm{\vy_i}^2 - \norm{\sum_i w_i \vy_i}^2 \right] - \left\{d \left[\frac{1}{\sigma_t^2} - \frac{\alpha_t^2}{\sigma_t^4}\left( t_\theta(\vx_t,t) - \norm{\frac{\vx_t-\sigma_t\vepsilon_\theta(\vx_t,t)}{\alpha_t}}^2 \right)\right]\right\}}\\
 =&\frac{\alpha_t^2}{\sigma_t^4}\abs{\sum_i w_i \norm{\vy_i}^2 - \norm{\sum_i w_i \vy_i}^2 - \left( t_\theta(\vx_t,t) - \norm{\frac{\vx_t-\sigma_t\vepsilon_\theta(\vx_t,t)}{\alpha_t}}^2 \right)}\\
 \leq&\frac{\alpha_t^2}{\sigma_t^4}\left[\abs{\sum_i w_i \norm{\vy_i}^2 -t_\theta(\vx_t,t) }+\abs{ \norm{\sum_i w_i \vy_i - \frac{\vx_t-\sigma_t\vepsilon_\theta(\vx_t,t)}{\alpha_t}}}\right]\\
 \leq&\frac{\alpha_t^2}{\sigma_t^4}\left[\delta_1+\frac{\sigma_t^2}{\alpha_t^2}\delta_2^2\right]\\
 =&\frac{\alpha_t^2}{\sigma_t^4}\delta_1 +\frac{1}{\sigma_t^2}\delta_2^2
\end{aligned}
\end{equation}

\subsection{Proof of Proposition \ref{prop:error_bound_ea}}

\begin{equation}
\begin{aligned}
 &\norm{\frac{1}{\sigma_t}\mI - \frac{\alpha_t^2}{\sigma_t^3}\left[ \sum_i w_i \vy_i\vy_i^{\top} - \left(\sum_i w_i \vy_i\right)\left(\sum_i w_i \vy_i\right)^{\top} \right] - \left(\frac{1}{\sigma_t}\mI - \frac{\alpha_t^2}{\sigma_t^3}\left(  \vx_0\vx_0^{\top} - \bar{\vy}_\theta(\vx_t, t)\bar{\vy}_\theta(\vx_t, t)^{\top} \right)\right)}_{HS}\\
 =&\norm{- \frac{\alpha_t^2}{\sigma_t^3}\left[ \sum_i w_i \vy_i\vy_i^{\top} - \left(\sum_i w_i \vy_i\right)\left(\sum_i w_i \vy_i\right)^{\top} \right] - \left( - \frac{\alpha_t^2}{\sigma_t^3}\left(  \vx_0\vx_0^{\top} - \bar{\vy}_\theta(\vx_t, t)\bar{\vy}_\theta(\vx_t, t)^{\top} \right)\right)}_{HS}\\
 \leq& \frac{\alpha_t^2}{\sigma_t^3}\norm{\sum_i w_i \vy_i\vy_i^{\top} - \vx_0\vx_0^{\top}}_{HS}+\frac{\alpha_t^2}{\sigma_t^3}\norm{\left(\sum_i w_i \vy_i\right)\left(\sum_i w_i \vy_i\right)^{\top} - \bar{\vy}_\theta(\vx_t, t)\bar{\vy}_\theta(\vx_t, t)^{\top}}_{HS}\\
 =&\frac{\alpha_t^2}{\sigma_t^3}\sum_i w_i\norm{ \vy_i\vy_i^{\top} - \vx_0\vx_0^{\top}}_{HS}+\frac{\alpha_t^2}{\sigma_t^3}\norm{\left(\sum_i w_i \vy_i\right)\left(\sum_i w_i \vy_i\right)^{\top} - \bar{\vy}_\theta(\vx_t, t)\bar{\vy}_\theta(\vx_t, t)^{\top}}_{HS}\\
 \leq&\frac{\alpha_t^2}{\sigma_t^3}\sum_i w_i \max_i  \norm{ \vy_i\vy_i^{\top} - \vx_0\vx_0^{\top}}_{HS}+\frac{\alpha_t^2}{\sigma_t^3}\sum_j \abs{\sum_i w_i \vy_i[j] - \bar{\vy}_\theta( \vx_t, t)[j]}\norm{\sum_i w_i \vy_i -\bar{\vy}_\theta( \vx_t, t)}\\
 \leq& \frac{\alpha_t^2}{\sigma_t^3}\left(2\mathcal{D}_y^2+ \sqrt{d}\delta_2\right)
\end{aligned}
\end{equation}

\subsection{Preliminaries on Optimal Transport}\label{appendix:preliminary_ot}

The optimal transport is the general problem of moving one distribution of mass to another as efficiently as possible.
The \emph{optimal transport problem} can be formulated in two primary ways, namely the Monge formulation \citep{monge1781memoire} and the Kantorovich formulation \citep{kantorovich1960mathematical}. Suppose there are two probability measures $\mu$ and $\nu$ on $(\mathbb{R}^n, \mathcal{B})$, and a cost function $c : \mathbb{R}^n \times \mathbb{R}^n \rightarrow \left[ 0, + \infty \right] $. The \emph{Monge problem} is
\begin{equation}
  \text{(MP)  } ~~~~~~~~~~~~~~\inf_{\text{T}} \left\{ \int c(x, \text{T}(x)) \,{\rm d}\mu(x) : \text{T}_{\texttt{\#}}\mu = \nu \right\}.
\end{equation}
The measure $\text{T}_{\texttt{\#}}\mu$ is defined through $\text{T}_{\texttt{\#}}\mu(A) = \mu(\text{T}^{-1} (A))$ for every $A \in \mathcal{B}$ and is called the \emph{pushforward} of $\mu$ through T. 

It is evident that the Monge Problem (MP) transports the entire mass from a particular point, denoted as $x$, to a single point $\text{T}(x)$. In contrast, Kantorovich provided a more general formulation, referred to as the \emph{Kantorovich problem}:
\begin{equation}
  \text{(KP)  } ~~~~~~~~~~~~~~\inf_{\gamma} \left\{ \int_{\mathbb{R}^n \times \mathbb{R}^n} c\,{\rm d}\gamma : \gamma \in \mit\Pi(\mu, \nu) \right\},
\end{equation}
where $\mit\Pi(\mu, \nu)$ is the set of \emph{transport plans}, i.e.,
\begin{equation}
  \mit\Pi(\mu, \nu) = \left\{ \gamma \in \mathcal{P} (\mathbb{R}^n \times \mathbb{R}^n) : (\pi_x)_{\texttt{\#}}\gamma = \mu, (\pi_y)_{\texttt{\#}}\gamma = \nu \right\},
\end{equation}
where $\pi_x$ and $\pi_y$ are the two projections of $\mathbb{R}^n \times \mathbb{R}^n$ onto $\mathbb{R}^n$.
For measures absolutely continuous with respect to the Lebesgue measure, these two problems are equivalent \cite{villani2009optimal}. However, when the measures are discrete, they are entirely distinct as the constraint of the Monge Problem may never be fulfilled.

\subsection{Proof of Corollary \ref{prop:ot}}\label{appendix:proof_ot}

To prove the Corollary \ref{prop:ot}, we first introduce two theorems to transform the problem of whether the PF-ODE mapping is a Monge map into the task of deciding the convexity of the potential function of $T_{s,T}$.
\begin{tcolorbox}[colback=gray!10, colframe=white, sharp corners, boxrule=0pt]
\begin{theorem}\label{theorem:convex}\citep[Theorem 1.48]{santambrogio2015optimal}
    Suppose that $\mu$ is a probability measure on $(\mathbb{R}^n, \mathcal{B})$ such that $\int |x|^2\rd\mu(x) < \infty$ and that $u: \mathbb{R}^n \rightarrow \mathbb{R} \cup \{+\infty\}$ is convex and differentiable $\mu {\text -}a.e$. Set $\text{\normalfont T} = \nabla u$ and suppose $\int |\text{\normalfont T}(x)|^2\rd\mu(x) < \infty$. Then {\normalfont T} is optimal for the transport cost $c(x, y) = \frac{1}{2} |x - y| ^ 2$ between the measures $\mu$ and $\nu = \text{\normalfont T}_{\texttt{\#}}\mu$.
\end{theorem}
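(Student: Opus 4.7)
The plan is to exploit the classical duality between the quadratic transport cost and Legendre--Fenchel conjugation, reducing the optimality of $\mathrm{T}=\nabla u$ to an attained case of Fenchel's inequality. First I would rewrite the cost via the polarization identity $\tfrac{1}{2}|x-y|^{2}=\tfrac{1}{2}|x|^{2}+\tfrac{1}{2}|y|^{2}-\langle x,y\rangle$. Because every $\gamma\in\Pi(\mu,\nu)$ has the same marginals, and both second moments $\int|x|^{2}\rd\mu(x)$ and $\int|y|^{2}\rd\nu(y)$ are finite by hypothesis (the latter from $\int|\mathrm{T}|^{2}\rd\mu<\infty$ together with $\nu=\mathrm{T}_{\#}\mu$), minimizing the quadratic cost over $\Pi(\mu,\nu)$ is equivalent to maximizing the correlation functional $\gamma\mapsto\int\langle x,y\rangle\rd\gamma(x,y)$ over the same set.

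Next I would introduce the convex conjugate $u^{*}(y):=\sup_{x}\{\langle x,y\rangle-u(x)\}$, together with Fenchel's inequality $\langle x,y\rangle\le u(x)+u^{*}(y)$, where equality holds if and only if $y\in\partial u(x)$. Integrating this pointwise inequality against any $\gamma\in\Pi(\mu,\nu)$ produces the marginal-only upper bound
\begin{equation*}
\int\langle x,y\rangle\rd\gamma(x,y)\;\le\;\int u\rd\mu+\int u^{*}\rd\nu,
\end{equation*}
a universal constant once $u$ is fixed. Plugging in the specific transport plan $\gamma_{\mathrm{T}}:=(\mathrm{id}\times\mathrm{T})_{\#}\mu$ and using that $u$ is differentiable $\mu$-a.e.\ (so $\partial u(x)=\{\nabla u(x)\}=\{\mathrm{T}(x)\}$ at $\mu$-a.e.\ $x$), Fenchel's equality $u(x)+u^{*}(\mathrm{T}(x))=\langle x,\mathrm{T}(x)\rangle$ holds $\mu$-a.e., and integrating yields
\begin{equation*}
\int\langle x,\mathrm{T}(x)\rangle\rd\mu(x)=\int u\rd\mu+\int u^{*}(\mathrm{T}(x))\rd\mu(x)=\int u\rd\mu+\int u^{*}\rd\nu.
\end{equation*}
Thus $\gamma_{\mathrm{T}}$ saturates the universal upper bound and hence maximizes correlation, which by the first step is precisely optimality for the quadratic cost; as $\gamma_{\mathrm{T}}$ is concentrated on the graph of $\mathrm{T}$, this also gives optimality in the Monge formulation.

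The main obstacle I anticipate is the integrability bookkeeping required to make the duality argument rigorous: one must ensure that $\int u\rd\mu$ and $\int u^{*}\rd\nu$ are well defined (no $\infty-\infty$ ambiguities). This is handled by combining affine lower bounds of the form $u(x)\ge\langle p,x\rangle-c$ and $u^{*}(y)\ge\langle x_{0},y\rangle-u(x_{0})$ — both consequences of convexity and the existence of a subgradient at some base point — with the finite-second-moment hypotheses on $\mu$ and $\nu$, yielding $L^{1}$ bounds so the negative parts are controlled. A secondary subtlety is ensuring $\mathrm{T}=\nabla u$ is Borel measurable so that $\nu=\mathrm{T}_{\#}\mu$ is well posed; this follows because the differentiability set is Borel and $\nabla u$ is Borel there, and the $\mu$-null exceptional set of non-differentiability is harmless in every integration step above.
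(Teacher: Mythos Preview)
Your proposal is correct and follows the classical Fenchel--duality route to this result: rewrite the quadratic cost via polarization, reduce to maximizing correlation, bound correlation by $\int u\,\rd\mu+\int u^{*}\,\rd\nu$ via Fenchel's inequality, and observe that $\gamma_{\mathrm{T}}=(\mathrm{id}\times\nabla u)_{\#}\mu$ saturates the bound because equality in Fenchel holds precisely on the graph of the subdifferential. Your treatment of the integrability subtleties (affine minorants to control the negative parts, finiteness of $\int\langle x,\mathrm{T}(x)\rangle\,\rd\mu$ via Cauchy--Schwarz and the moment hypotheses) is the right way to close the argument.

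There is nothing to compare against in the paper itself: the theorem is \emph{quoted} from \cite[Theorem~1.48]{santambrogio2015optimal} and used as a black box in the proof of Corollary~\ref{prop:ot}, with no proof supplied. Your argument is essentially the proof one finds in that reference, so in that sense you have reconstructed the cited source rather than diverged from the paper.
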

\end{tcolorbox}
\begin{tcolorbox}[colback=gray!10, colframe=white, sharp corners, boxrule=0pt]
\begin{theorem}\label{theorem:brenier}\textbf{The Brenier's Theorem.} \citep[Theorem 1.22]{santambrogio2015optimal} \citep{brenier1987decomposition,brenier1991polar}
    Let $\mu, v$ be probabilities over $\mathbb{R}^d$ and $c(x, y)=\frac{1}{2}|x-y|^2$. Suppose $\int|x|^2 \mathrm{~d} x, \int|y|^2 \mathrm{~d} y<+\infty$, which implies $\min (\mathrm{KP})<+\infty$ and suppose that $\mu$ gives no mass to $(d-1)$ surfaces of class $C^2$. Then there exists, uniquely, an optimal transport map T from $\mu$ to $v$, and it is of the form $\mathrm{T}=\nabla u$ for a convex function $u$.
\end{theorem}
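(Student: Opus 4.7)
The plan is to follow the classical Brenier--McCann proof via Kantorovich duality, specialized to the quadratic cost. First I would invoke the Kantorovich duality theorem (existence of optimal plans plus existence of a maximizing pair of $c$-concave potentials), using the second-moment hypothesis to guarantee finiteness of both the primal and dual problems. This produces an optimal plan $\gamma^{*}\in\Pi(\mu,\nu)$ and a $c$-concave potential $\varphi:\mathbb{R}^{d}\to\mathbb{R}\cup\{-\infty\}$ such that $\gamma^{*}$ is concentrated on the contact set $\{(x,y):\varphi(x)+\varphi^{c}(y)=\tfrac12|x-y|^{2}\}$.

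The second step exploits the quadratic cost. Setting $u(x):=\tfrac12|x|^{2}-\varphi(x)$, the $c$-concavity of $\varphi$ becomes ordinary convexity and lower semicontinuity of $u$, and the Fenchel--Legendre transform $u^{*}$ plays the role of $c$-transform. A standard calculation then rewrites the contact condition as
\begin{equation*}
y\in\partial u(x)\quad\text{for }\gamma^{*}\text{-a.e. }(x,y),
\end{equation*}
where $\partial u$ is the convex subdifferential. This geometric reformulation is the crux of why the quadratic cost yields a gradient-of-convex-function structure.

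Third, I would apply the convex-analytic regularity fact that the set $N_{u}$ on which a proper convex function $u:\mathbb{R}^{d}\to\mathbb{R}$ fails to be differentiable is contained in a countable union of $(d-1)$-dimensional $C^{2}$ hypersurfaces (a Zaj\'\i{}\v{c}ek-type result, quoted as a black box). The hypothesis that $\mu$ gives no mass to such hypersurfaces then yields $\mu(N_{u})=0$, so $\partial u(x)=\{\nabla u(x)\}$ for $\mu$-a.e.\ $x$. Combined with the preceding step, $\gamma^{*}=(\mathrm{id},\nabla u)_{\#}\mu$; hence $T:=\nabla u$ is a transport map from $\mu$ to $\nu$, and its optimality is the content of Theorem~\ref{theorem:convex} (the second-moment hypothesis ensures the integrability condition there). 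For uniqueness, any competing optimal map $T'$ induces an optimal plan $(\mathrm{id},T')_{\#}\mu$, which must also be concentrated on a $c$-cyclically monotone set and therefore on the graph of $\partial u$; since $\partial u(x)$ is a singleton $\mu$-a.e., $T'=\nabla u$ $\mu$-a.e.

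The main obstacle will be matching the hypothesis ``$\mu$ charges no $(d-1)$-surfaces of class $C^{2}$'' precisely to the regularity statement about $N_{u}$. A naive appeal to Rademacher's theorem would require absolute continuity of $\mu$, which is strictly stronger than what is assumed; the sharper Zaj\'\i{}\v{c}ek/Alberti description of convex non-differentiability sets by Lipschitz (in fact $C^{2}$-parametrizable) graphs is what makes the assumption tight. I would invoke it as a cited lemma rather than reproduce its proof, since everything else in the argument is standard duality plus the Legendre correspondence.
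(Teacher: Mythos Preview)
The paper does not supply its own proof of Brenier's theorem: it is quoted as Theorem~1.22 of Santambrogio and invoked as a black box in the proof of Corollary~\ref{prop:ot}. There is therefore nothing in the paper to compare your argument against.

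For what it is worth, your outline is the standard Brenier--McCann proof and is correct. The one genuine technicality is precisely the one you isolate at the end: Zaj\'{\i}\v{c}ek's theorem yields that the non-differentiability set $N_u$ of a convex function lies in countably many \emph{Lipschitz} (indeed DC) hypersurfaces, not $C^2$ ones, so an additional Whitney--Lusin approximation step is needed to pass from the hypothesis ``$\mu$ charges no $C^2$ hypersurface'' to $\mu(N_u)=0$. This wrinkle is known and handled in Santambrogio's exposition; it does not alter the architecture of your argument.
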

\end{tcolorbox}
To ensure the existence of the potential function, we need to leverage the following 
\begin{tcolorbox}[colback=gray!10, colframe=white, sharp corners, boxrule=0pt]
\begin{theorem}\label{theorem:poincare}\textbf{The Poincar\'e's Theorem.} \citep[Theorem 4.1 of Chapter V, \S 4]{lang2012fundamentals} 
    Let $U$ be an open ball in $\mathbb{R}^n$ and let $\omega$ be a differential form of degree $\geq 1$ on $U$ such that ${\rm d} \omega = 0$. Then there exists a differential form $\phi$ on $U$ such that ${\rm d} \phi = \omega$.
\end{theorem}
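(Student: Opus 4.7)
The plan is to construct $\phi$ explicitly via a chain homotopy operator built from the star-shaped structure of $U$. Without loss of generality (translating if necessary), I would assume the open ball $U$ is centered at the origin, so that the straight-line contraction $H: [0,1] \times U \to U$ defined by $H(t,x) = tx$ is well-defined and provides a smooth homotopy from the constant map $\iota_0 \equiv 0$ to the identity on $U$. This is the step where the hypothesis that $U$ is a ball (in fact, it suffices that $U$ be star-shaped) is used.

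Next I would introduce the homotopy operator $K: \Omega^k(U) \to \Omega^{k-1}(U)$ for $k \geq 1$. For any $k$-form $\omega$, I would pull it back along $H$ and uniquely decompose
\begin{equation*}
H^*\omega \;=\; \alpha(t,x) + dt \wedge \beta(t,x),
\end{equation*}
where neither $\alpha$ nor $\beta$ contains $dt$, and then set $K\omega := \int_0^1 \beta(t,\cdot)\, dt$, interpreted coefficient-wise as a $(k-1)$-form on $U$. The central identity to establish is the chain homotopy formula
\begin{equation*}
d(K\omega) + K(d\omega) \;=\; \omega - \iota_0^*\omega.
\end{equation*}
I would prove this by expanding $H^*\omega$ in local coordinates, differentiating under the integral sign, and applying the Fundamental Theorem of Calculus: the boundary contributions at $t=1$ and $t=0$ yield $\omega$ and $\iota_0^*\omega$ respectively, while the remaining cross terms reassemble into $K(d\omega)$. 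Since $k \geq 1$, the pullback $\iota_0^*\omega$ along the constant map $\iota_0$ automatically vanishes (it is a positive-degree form factored through a zero-dimensional image).

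With the chain homotopy identity in hand, the conclusion is immediate: setting $\phi := K\omega$, the hypothesis $d\omega = 0$ combined with the identity and $\iota_0^*\omega = 0$ gives $d\phi = \omega$, exactly as required. The main obstacle is the careful bookkeeping in the chain homotopy identity---correctly isolating the $dt$-component of $H^*\omega$ in a chosen coordinate basis $dx^{i_1}\wedge\cdots\wedge dx^{i_k}$, interchanging $d$ and $\int_0^1$ (justified by smoothness of $\omega$ on the compact set $[0,1]\times K$ for any compact $K\subset U$), and tracking the signs arising from the wedge products. Everything else (star-shapedness of the ball, vanishing of $\iota_0^*$ in positive degree, well-definedness of $K$) is routine.
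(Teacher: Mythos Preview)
Your proof is correct and is the standard chain-homotopy argument for the Poincar\'e lemma. However, note that the paper does not supply its own proof of this statement: it is quoted verbatim as a known result from Lang's \emph{Fundamentals of Differential Geometry} (Theorem~4.1 of Chapter~V, \S4) and used as a black box to guarantee the existence of a potential for the PF-ODE map. There is therefore nothing in the paper to compare against beyond the citation itself; your construction via the contraction $H(t,x)=tx$ and the operator $K\omega=\int_0^1\beta(t,\cdot)\,dt$ is exactly the argument one finds in Lang (and in essentially every treatment of the lemma), so in that sense your approach coincides with the cited source.
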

\end{tcolorbox}
\begin{tcolorbox}[colback=gray!10, colframe=white, sharp corners, boxrule=0pt]
\begin{remark}
    The conclusion remains valid when the open ball $U$ is substituted with the entirety of $\mathbb{R}^n$
\end{remark}
\end{tcolorbox}

For $s>0$, it is clear that $\frac{\rd q_T(x_T)}{\rd q_s(x_s)}$ is non-singular, and therefore, it satisfies the requirements of Brenier's Theorem \ref{theorem:brenier}, leading to the existence of a unique optimal transport map. According to Theorem \ref{theorem:convex}, if we can establish that the potential function of $T_{s,T}$ is convex, then the PF-ODE mapping will indeed be a Monge map.
Notice that the existence of the potential map is guaranteed by Poincaré's Theorem \ref{theorem:poincare}.

We can now convert the condition of the potential function of $T_{s,T}$ being convex into the condition that its Jacobian, $\frac{\partial T_{s,T}(x_s)}{\partial x_s}$, is positive semi-definite, as per the following theorem in convex analysis.
\begin{tcolorbox}[colback=gray!10, colframe=white, sharp corners, boxrule=0pt]
\begin{theorem}\label{theorem:hessian_convex}
\citep[Theorem 4.5]{rockafellar2015convex}
    Let $f$ be a twice continuously differentiable real-valued function on an open convex set $C$ in $R^n$. Then $f$ is convex on $C$ if and only if its Hessian matrix
\begin{equation}
  Q_x=\left(q_{i j}(x)\right), \quad q_{i j}(x)=\frac{\partial^2 f}{\partial \xi_i \partial \xi_j}\left(\xi_1, \ldots, \xi_n\right)
\end{equation}
is positive semi-definite for every $x \in C$.
\end{theorem}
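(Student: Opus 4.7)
The plan is to prove both directions of the equivalence by reducing to the one-dimensional case via straight-line parametrizations $g(s) = f(x + s v)$. The one-variable fact I will invoke is the standard characterization: a $C^2$ function on an open interval is convex if and only if its second derivative is nonnegative. Granting this, the passage between the multivariate Hessian condition and the multivariate convexity condition is a direct chain-rule computation.

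For the direction ``$Q_x \succeq 0$ for all $x \in C$ implies $f$ convex'', I would pick arbitrary points $x, y \in C$ and, since $C$ is convex, consider the segment $[x, y] \subset C$. Define $g:[0,1] \to \mathbb{R}$ by $g(s) = f((1-s)x + s y)$, which is $C^2$ on an open interval containing $[0,1]$. The chain rule gives $g''(s) = (y - x)^\top Q_{(1-s)x + sy} (y - x) \geq 0$ by the PSD hypothesis. Hence $g$ is convex on $[0,1]$ by the one-dimensional fact, and evaluating $g(s) \leq (1-s) g(0) + s g(1)$ yields $f((1-s)x + sy) \leq (1-s) f(x) + s f(y)$, which is the definition of convexity.

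For the converse, fix $x \in C$ and any direction $v \in \mathbb{R}^n$. Since $C$ is open, there exists $\epsilon > 0$ such that $x + s v \in C$ for all $s \in (-\epsilon, \epsilon)$. Define $g(s) = f(x + s v)$ on this interval; $g$ is $C^2$ and, by restricting the convexity of $f$ to this line segment, $g$ is convex. The one-dimensional characterization then gives $g''(0) \geq 0$, and the chain rule yields $g''(0) = v^\top Q_x v$. Since $v$ was arbitrary, $Q_x$ is positive semi-definite, and since $x$ was arbitrary, this holds for every $x \in C$.

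The main obstacle, such as there is one, is really the one-dimensional statement itself, which I would treat as classical: if $g''(s_0) < 0$ on an interval, then $g'$ is strictly decreasing near $s_0$, contradicting the monotonicity of secant slopes that is equivalent to convexity; conversely, if $g'' \geq 0$ then $g'$ is nondecreasing, so for $a < t < b$ the mean-value-theorem slopes satisfy $(g(t)-g(a))/(t-a) \leq (g(b)-g(t))/(b-t)$, which rearranges to the convexity inequality. Apart from this lemma, the only care needed is that openness of $C$ guarantees a full two-sided interval of admissible perturbations in the reverse direction, so that $g''(0)$ is well defined rather than merely a one-sided derivative.
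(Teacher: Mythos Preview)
Your argument is correct and is the standard textbook proof of this classical result. Note, however, that the paper does not actually prove this theorem: it is quoted verbatim as \citep[Theorem 4.5]{rockafellar2015convex} and invoked as a black box in the proof of Corollary~\ref{prop:ot}, so there is no ``paper's own proof'' to compare against. Your reduction to the one-dimensional case via $g(s)=f(x+sv)$ and the chain-rule identity $g''(s)=v^\top Q_{x+sv}\,v$ is exactly the approach Rockafellar himself uses, and your handling of the two directions (including the observation that openness of $C$ is what guarantees a two-sided interval for the converse) is complete.
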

\end{tcolorbox}
If we denote that 
\begin{equation}
\begin{aligned}
  \mA(t) = \frac{\partial T_{t,T}(x_t)}{\partial x_t}
\end{aligned}
\end{equation}
obviously, $\mA(T) = I$ is p.s.d., our goal is to answer when $\mA(t)$ is p.s.d.. 
We try to answer this to set up a connection between $\mA(t)$ and $\mA(T) = I$.
We can derive that:
\begin{equation}
\begin{aligned}
  \frac{\rd \mA(t)}{\rd t} = & \lim_{\epsilon \rightarrow 0^+} \frac{\mA(t + \epsilon) - \mA(t)}{\epsilon} \\
  = & \lim_{\epsilon \rightarrow 0^+} \frac{\mA(t + \epsilon) - \mA(t + \epsilon) \nabla_{\vx_t} \left[\vx_t + \epsilon \left(f(t)\vx_t -\frac{g^2(t)}{2}\nabla_{\vx_t}\log_t q_t(\vx_t) \right)+ \mathcal{O}(\epsilon^2)\right]}{\epsilon} \\
  = & \lim_{\epsilon \rightarrow 0^+} \frac{\epsilon f(t) \mA(t + \epsilon) + \frac{g^2(t)}{2} \epsilon \mA(t + \epsilon) \nabla_{\vx_t}\log_t q_t(\vx_t) + \mathcal{O}(\epsilon^2)}{\epsilon}\\
  = & f(t) \mA(t) + \frac{g^2(t)}{2} \mA(t) \nabla_{\vx_t}\log_t q_t(\vx_t)\\
  = & f(t) \mA(t) - \frac{g^2(t)}{2} \mA(t) \left\{\frac{1}{\sigma_t^2}\mI - \frac{\alpha_t^2}{\sigma_t^4}\left[ \int w(\vy) \vy\vy^{\top}\rd q_0(\vy) \!- \!\left(\int w(\vy)  \vy \rd q_0(\vy)\right)\left(\int w(\vy)  \vy \rd q_0(\vy)\right)^{\top} \right]\right\} \\
  =&\mA(t)\underbrace{\left\{\left[f(t)-\frac{g^2(t)}{2\sigma_t^2}\right]\mI+\frac{\alpha_t^2 g^2(t)}{2\sigma_t^4}\left[ \sum_i w_i \vy_i\vy_i^{\top} - \left(\sum_i w_i \vy_i\right)\left(\sum_i w_i \vy_i\right)^{\top} \right]\right\}}_{\mB(t)}
\end{aligned}
\end{equation}
Notice that the above ODE starts from $T$.

According to the  Solution Matrices theory \citep{masuyama2016limit}\footnote{The Definition 6.2 in \url{https://math.mit.edu/~jorloff/suppnotes/suppnotes03/ls6.pdf} suffice the result here.}, let us denote the $\mC(t)$ is the normalized fundamental matrix at $T$ for $\mB(T)$, which implies $C(t)$ is the solution to the following ODE:
\begin{equation}
\begin{aligned}
\quad \mC'(t) = \mC(t)\mB(t),\mC(T)=I,\quad(\text{flow from T to t})
\end{aligned}
\end{equation}

Then we can deduce that
\begin{equation}
\begin{aligned}
 \frac{\partial T_{t,T}(x_t)}{\partial x_t}&=\mA(t)\\
 &=\mC(t)\mA(T)\\
  &=\mC(t)\mI\\
   &=\mC(t)\\
\end{aligned}
\end{equation}
Thus, the diffeomorphism $T_{s,T}$ is a Monge optimal transport map if and only if $\mC(s)$ is semi-positive definite. Note that the above requirement needs to be satisfied for every PF-ODE chain $x_t,t\in[T,t]$.
 
\section{Experiments Details}
\subsection{Evaluation of NLL}\label{app:extra_nll}
For toy experiments, we follow the experiment design in \citet{pmlr-v162-lu22f} and adopt the same network architecture for the score network and our DF-TM network. 
We calculate the true diffusion Fisher with 5,000 data samples from the cheeseboard distribution.

For commercial-level experiments, we employ the explicit Euler method to compute the NLL, excluding the final step near $t=0$, for reasons discussed in Appendix \ref{app:singularity}. We evaluate the NLL across 10 steps throughout the timeline of the PF-ODE. The DF-TM network we trained uses the float-point16 data type.

\paragraph{Network architectures}
In terms of network architecture, we employ an SD Unet structure with an additional MLP head. However, we believe that a lighter network could potentially be sufficient for DF-TM.

\paragraph{Training cost of DF-TM}
For training the DF-TM network, we approximately spend 24 hours using 8 Ascend 910B chips. Given the size of the Laion2B-en dataset (which contains 2.32 billion images), this is quite an efficient speed. Additionally, the convergence behavior of the training loss is robust, as illustrated in Figure \ref{fig:train_loss}. We also hypothesize that our network design has redundancy, suggesting that we could further reduce costs by opting for lighter networks. We will provide more details about training costs in our revised manuscript.

\subsection{Adjoint Guidance Sampling}\label{app:extra_adjoint}
For Figure \ref{fig:adjoint_score}, we examined varying numbers of adjoint guidance, ranging from 0 to 20, under a full inference number of 50. The adjoint guidance scale was grid-searched by the VJP method.
For experiments on Pick-Score, we use NVIDIA V100 chips, and the rest experiments use Ascend 910B chips.
\paragraph{Base method for DF-EA}
Several variants of adjoint-optimization algorithms exist, such as AdjointDPM \citep{pan2023adjointdpm}, AdjointDES \citep{blasingame2024adjointdeis}, and SAG \citep{pan2023towards}. However, while these algorithms differ in their design of solvers for the adjoint ODE, they all utilize VJP when accessing the diffusion Fisher. We selected SAG as our base method due to its state-of-the-art performance. We believe that replacing VJP with DF-EA could also enhance the performance of algorithms like AdjointDPM and AdjointDES.

\paragraph{The Design of Score functions}
\begin{itemize}
    \item Aesthetic Score (SAC/AVA/Pick-Score)

    For the aesthetic score predictor $f_{aes}:\sR^d\mapsto\sR$, the adjoint optimization target is simply $f_{aes}$ itself.
\begin{equation}
\begin{aligned}
    \mathcal{L}(\vx_0) := f_{aes}(\vx_0)
\end{aligned}
\end{equation}

    \item Clip Loss

    Following the implementation of \citep{pan2023adjointdpm}, we use the features from the CLIP image encoder as our feature vector. The loss function is $L_2$-norm between the Gram matrix of the style image and the Gram matrix of the estimated clean image. 
\begin{equation}
\begin{aligned}
    \mathcal{L}(\vx_0) := \norm{\text{clip}(\vx_0)\text{clip}(\vx_0)^\top-\text{clip}(\vx_{ref})\text{clip}(\vx_{ref})^\top}
\end{aligned}
\end{equation}

    \item FaceID Loss

    Following the implementation of \citep{pan2023towards}, we use ArcFace to extract the target features of reference faces to represent face IDs and compute the $l_2$ Euclidean distance between the extracted ID features of the estimated clean image and the reference face image as the loss function.
\begin{equation}
\begin{aligned}
    \mathcal{L}(\vx_0) := \norm{\text{ArcFace}(\vx_0)-\text{ArcFace}(\vx_{ref})}
\end{aligned}
\end{equation}
    
\end{itemize}

\paragraph{Hyperparameters}
For the hyperparameters in adjoint-guided sampling, we ensure a fair comparison between the VJP and DF-EA methods. For most hyperparameters, we directly adopt the settings from previous works \cite{pan2023towards} for both the baseline VJP method and our DF-EA method. For the guidance scale, we tune the value for the VJP method and use the same value for our DF-EA method. The DF-EA method does not introduce additional hyperparameters, and for mutual hyperparameters, our DF-EA method uses the exact same values as the VJP method. We adopt this strategy because our DF-EA method solely improves the approximation of the diffusion Fisher linear operator, without altering the adjoint sampling mechanism. Therefore, the suitable hyperparameters should remain unchanged, and we simply use the same parameters from the VJP method for our DF-EA method.
For all experiments, we set the number of sampling steps to $T=50$. Adjoint guidance is applied starting from steps ranging from 15 to 35 and ending at step 35, with one guidance per step. 
Thus the only parameter we tune is the guidance strength. 
We determine this value for the VJP method via a grid search from 0.1 to 0.5 with a step size of 0.1 and find that the optimal guidance strength for VJP is 0.2. We then use this value for our DF-EA method. Notice that, we apply a normalization to the guidance gradient for both VJP and DF-EA methods, making our optimal guidance strength consistent across different scores. 
The tuning is conducted on 1k COCO prompts, and the computational budget for tuning is 4 * 5 * 3 GPU hours (4 tasks * 5 grids * 3 hours per single test) on Ascend 910B chips. 

\subsection{Numerical OT Experiments}\label{app:extra_ot}

\paragraph{The detailed algorithm }
The detailed version of the Algorithm \ref{algo:ot} is presented in Algorithm \ref{algo:to_detail}. We use all the data points in the initial dataset to calculate every quantum we need in the sampling trajectory to validate the condition discussed in Corollary \ref{prop:ot}.
\begin{algorithm}[h]
    \caption{Detailed numerical OT test for PF-ODE map}\label{algo:to_detail}
    \begin{algorithmic}[1] 
    \STATE \textbf{Input}: initial data $\{\vy_j\}_{j=1}^{N}$, noise schedule $\{\alpha(t)\}$ and $\{\sigma(t)\}$, discretization steps $M$.
    \STATE Initialize $\mA_M = \mI$, $\vx_M \sim \mathcal{N}(0,\sigma_T\mI)$.
    \FOR{$i=M,M-1,\cdots,1$}
    \STATE $\alpha_i = \alpha(t_i),\quad \sigma_i = \sigma(t_i)$
    \STATE  $\rd t = t_{i-1} - t_{i}$.
    \STATE $f_i =  \frac{\rd \log \alpha_i}{\rd t}$.
    \STATE $g_i = \sqrt{\frac{\rd \sigma_i^2}{\rd t}-2 \frac{\rd \log \alpha_i}{\rd t}\sigma_i^2} $.
    \FOR{$j=1,\cdots,N$}
        \STATE calculate $v_j$ via $v_j = \exp(-\frac{|\vx_i - \alpha_i \vy_j|^2}{2\sigma_i^2})$
    \ENDFOR
    \FOR{$j=1,\cdots,N$}
        \STATE calculate $w_j$ via $w_j = \frac{v_j}{\sum_k v_k}$.
    \ENDFOR
    \STATE  $\mB_i = \left[f_i-\frac{g^2_i}{2\sigma_i^2}\right]\mI+\frac{\alpha_i^2 g^2_i}{2\sigma_i^4}\left[ \sum_j w_j \vy_j\vy_j^{\top} - \left(\sum_j w_j \vy_j\right)\left(\sum_j w_j \vy_j\right)^{\top} \right].$
    \STATE  $\mA_{i-1} = \mA_{i} + \rd t * \mA_{i}^\top\mB_i  $\hfill\COMMENT{solve fundamental matrix}
    \STATE $s_i = \frac{\vx_i - \sum_jw_jy_j}{\sigma_i^2}$
    \STATE $\vx_{i-1} =\vx_i + \left[ f_i\vx_i - \frac{1}{2}g^2_i s_i \right] * \rd t$
    \ENDFOR
    \STATE \textbf{Output}:$\mA_0$. 
    \end{algorithmic}
\end{algorithm}
\paragraph{Initial data}
In Table \ref{tab:ot}, we adopt numerical verification of OT on 2-D synthesized data.
For the single-Gaussian case, we fix the one mode central in (0.5,0.5) and test at $s=0.1$;
For the affine data case, we set three data points as (0.2,-0.4), (0.2,0.0), and (0.2,0.9), and test at $s=0.0$;
For the non-affine data case, we set three data points as (0.0,0.5), (0.0,0.0), and (0.5,0.0), and test at $s=0.0$.
\paragraph{The definition of Asymmetric rate}
In Table \ref{tab:ot}, we need the asymmetry rate $I$ to point out that the PF-ODE map in very simple non-affine data cannot be OT.
Let $A$ be an $n \times n$ matrix. The asymmetry rate $I$ is defined as:
$$
I=\frac{\left\|A-A^T\right\|_F}{\sqrt{2}\|A\|_F}
$$
where $\|M\|_F=\sqrt{\sum_{i=1}^n \sum_{j=1}^n m_{i j}^2}$ is the Frobenius norm of the matrix $M$. The denominator $\sqrt{2}\|A\|_F$ is used to normalize the index to the range of $[0,1]$.
The asymmetry rate of the non-affine data in Table \ref{tab:ot} has significantly deviated from zero. 

\subsection{Pretrained Models}\label{app:exp_models}
All of the pretrained models used in our research are open-sourced and available online as follows:
\begin{itemize}
     
    
     \item stable-diffusion-v1-5
    
     \url{https://huggingface.co/runwayml/stable-diffusion-v1-5}

    \item stable-diffusion-2-base

     \url{https://huggingface.co/stabilityai/stable-diffusion-2-base}

     \item SAC-aesthetic score predictor

     \url{https://github.com/christophschuhmann/improved-aesthetic-predictor/blob/main/sac\%2Blogos\%2Bava1-l14-linearMSE.pth}

     \item AVA-aesthetic score predictor

     \url{https://github.com/christophschuhmann/improved-aesthetic-predictor/blob/main/ava\%2Blogos-l14-linearMSE.pth}

     \item ArcFace ID loss

     \url{https://github.com/TreB1eN/InsightFace\_Pytorch}

     \item Clip loss

     \url{https://huggingface.co/openai/clip-vit-large-patch14}

     \item Pick-Score

     \url{https://github.com/yuvalkirstain/PickScore}
     
\end{itemize}
\section{Discussions}\label{app:discussion}
\subsection{Singularity of Fisher information at $t=0$}\label{app:singularity}
Previous studies \citep{yang2023lipschitz,zhang2024tackling} have shown that the diffusion model, particularly when learned in $\epsilon$-prediction, can encounter a singularity issue at $t=0$. Our DF in \eqref{information_dirac} reaffirms this issue, as this formulation becomes ill-formed at $t=0$ due to division by zero ($\sigma_0$). Consequently, our formulation does not describe the behavior at $t=0$. The deep theoretical exploration of the singularity problem remains an open question in the diffusion model field. However, as it is not the primary focus of this paper, we will not discuss the DF at $t=0$.

\subsection{Statistical Caliber of Negative Log-Likelihood}
When dealing with high-dimensional data such as images, direct likelihood comparisons may encounter scaling issues due to the dimensionality. In this study, unless explicitly indicated otherwise, we adopt the approach of \cite{pmlr-v202-zheng23c} and typically use Negative Log-Likelihood (NLL) to refer to Bits Per Dimension (BPD).
\begin{equation}
\begin{aligned}
    \text{BPD} =\mathbb{E}_{\boldsymbol{x}_0 \sim q_0}\left[\frac{-\log P_0\left(\boldsymbol{x}_0\right)}{d \log 2}\right]
\end{aligned}
\end{equation}

\subsection{The ODE Solvers}
To compute the numerical solutions for the PF-ODE in \eqref{PFODE}, the likelihood ODE in \eqref{nll_ode}, and the adjoint ODE in \eqref{Adjoint_ODE}, we require ODE solvers. In our paper's experiments, we consistently use the explicit Euler method (referred to as DDIM when applied to PF-ODE). However, it's important to note that our approach is not dependent on a specific ODE solver. We can also utilize alternatives like fast ODE solvers \citep{lu2022dpm,lu2022dpm++,liu2022pseudo} or exact inversion ODE solvers \citep{wallace2023edict,zhang2023exact}.

\subsection{The relation of our outer-product form DF to Covariance in DMs}\label{app:covariance}
We note that there is a series of studies aiming to learn the covariance of the reverse diffusion Stochastic Differential Equation (SDE) \citep{bao2022analytic,bao2022estimating}. In Bayesian statistics, Fisher information is defined as the covariance of the score. These studies derive their formulation by analyzing the covariance of the score and obtaining the Fisher information in terms of the score. However, our DF is derived directly from the marginal distribution and is composed solely of the initial distribution and noise schedule. Furthermore, our application is unique; we are the first to replace the use of VJP with DF, while the focus of these studies is to enhance the performance of Diffusion Models (DMs) with analytical covariance.

A very similar form of the Fisher information is proposed in Lemma 5 of \cite{benton2020nearly}.
The difference is that our Proposition 3 presents the specific form of Fisher information in terms of data distribution, which is not included in Lemma 5 of \cite{benton2020nearly}. This distinction is crucial as it facilitates the derivation of our new algorithms. 
Also, we adopt the currently more commonly used $\alpha_t,\sigma_t$ notation to represent the noise schedule. Instead, \cite{benton2020nearly} $\alpha_t\equiv e^{-2t}$.
We gave out this detailed formulation to avoid unnecessary misunderstandings in the development of the subsequent training scheme.

\subsection{Comparison of DF-TM to Hutchinson trace estimator}\label{app:hutchinson}

We notice that the naive trace calculation of the VJP method can be accelerated by the Hutchinson trace estimator.
We have not conducted experiments using the Hutchinson estimator, as it is a Monte-Carlo type estimation, unlike the direct evaluation methods like the full-VJP baseline and our DF-TM. Furthermore, the VJP method, with the help of the Hutchinson method and its variants, is still considerably more expensive than our method, and its applicability is also more restricted.

\begin{itemize}

    \item \textbf{The Hutchinson method is much more costly:}

    To attain a relative error less than $\epsilon$ with a probability of $1-\delta$, the Hutchinson method requires $\frac{2(1-\frac{8}{3}\epsilon)\log\frac{1}{\delta}}{\epsilon^2}$ samples \cite{skorski2021modern}.
Assuming that the goal is to obtain an estimation with a relative error of less than 10\% with a probability exceeding 90\%, a minimum of 351 NFEs is required. This equates to 1 or 2 minutes on SD-v1.5 for a single trace estimation. For a complete NLL estimation of an image with 20 steps, this would take 20 minutes, which is entirely impractical in any business context. In contrast, our DF-TM only requires 1 NFE for a single trace estimation, needing merely 10 seconds for the full NLL estimation of an image. 

 \item \textbf{The application of the Hutchinson method is more restricted:}

 Due to its Monte-Carlo characteristics, the Hutchinson method is more appropriate for contributing to the computation of certain training objectives, as in \cite{pmlr-v162-lu22f}, where the unbiased property is sufficient, and large variance may be absorbed into network training. However, the Hutchinson method may encounter difficulties with accurate per-sample trace estimation. Our method can accommodate both scenarios, including per-sample computation. 
\end{itemize}

There are already attempts to use the Hutchinson method to expedite the VJP of trace estimation in diffusion models \cite{pmlr-v162-lu22f}. Nonetheless, due to Hutchinson's limitations, these practices are restricted to relatively small DMs (CIFAR-10 at most). We will add discussions on the Hutchinson method in our revision.

\subsection{Discussions on the theoretical bound of DF-EA}
We notice that the error bound in Proposition \ref{prop:error_bound_ea} does not vanish as the training error decreases.
From the rigorous theoretical perspective, considering the error bounds in Propositions \ref{prop:error_bound_trace} and \ref{prop:error_bound_ea}, the DF-EA method is less valid than the DF-TM method, as we currently cannot establish vanishing bounds for it.
However, from an empirical perspective, our DF-EA outperforms the naive VJP method in terms of score improvement across various tasks and pretrained models, as demonstrated in Figure 4. Thus, the DF-EA approximation proves to be valid in a practical sense.
The replacement $\sum_i\vy_i\vy_i^{\top}\approx \vx_0\vx_0^{\top}$ originates from the observation that $w_i$ is a weighting of the summation equal to 1, and as $t$ approaches 0, the $w_i$ closest to $\vx_0$ will dominate due to the diffusion kernel. Therefore, this approximation is intuitively reasonable near $t=0$, which is precisely where we apply adjoint guidance.

\subsection{Limitations} 
This paper does not explore the integration of DF into accelerated ODE solvers \cite{lu2022dpm}, or exact inversion ODE samplers \cite{wang2024belm}.
This paper is constrained in the scope of DMs, but similar second-order information may also exist in flow-matching generative models \cite{lipman2022flow, zhu2024analyzing} or variational inference models \cite{zhu2024neural,wang2024gad}, language models \cite{fu2025iap}.
Our technique may also have downstream applications in automated driving \cite{tu2025driveditfit}, touch-generation \cite{tu2025texttoucher}, domain-transfer \cite{feng2024lw2g,feng2024pectp}, and missing data imputation \cite{chen2024rethinking}.
The DF may also contribute to a more effective inference method, like L-BFGS, which we did not explore.

\end{document}